\def\eqref#1{equation~\ref{#1}}
\def\1{\bm{1}}
\def\vn{{\bm{n}}}
\DeclareMathAlphabet{\mathsfit}{\encodingdefault}{\sfdefault}{m}{sl}
\SetMathAlphabet{\mathsfit}{bold}{\encodingdefault}{\sfdefault}{bx}{n}
\newcommand{\E}{\mathbb{E}}
\newcommand{\R}{\mathbb{R}}
\newcommand{\KL}{D_{\mathrm{KL}}}
\newcommand{\dif}{\mathrm{d}}
\newtheorem{theorem}{Theorem}[section]
\newtheorem{proposition}[theorem]{Proposition}
\newtheorem{assumption}[theorem]{Assumption}
\newtheorem{example}[theorem]{Example}
\renewcommand{\eqref}[1]{(\ref{#1})}
\title{Functional Gradient Flows for Constrained Sampling}
\author{
    Shiyue Zhang\thanks{Equal contribution.} \\
    School of Mathematical Sciences \\
    Peking University \\
    \texttt{zhangshiyue@stu.pku.edu.cn} \\
    \And    
    Longlin Yu\footnotemark[1] \\
    School of Mathematical Sciences \\
    Peking University \\
    \texttt{llyu@pku.edu.cn} \\
    \And
    Ziheng Cheng\footnotemark[1] \\
    Department of Industrial Engineering and Operations Research \\
    University of California, Berkeley \\
    \texttt{ziheng\_cheng@berkeley.edu} \\    
    \And
    Cheng Zhang\thanks{Corresponding author.} \\
    School of Mathematical Sciences and Center for Statistical Science \\
    Peking University \\
    \texttt{chengzhang@math.pku.edu.cn} \\
}
\begin{document}

\maketitle

\begin{abstract}
  Recently, through a unified gradient flow perspective of Markov chain Monte Carlo (MCMC) and variational inference (VI), particle-based variational inference methods (ParVIs) have been proposed that tend to combine the best of both worlds. While typical ParVIs such as Stein Variational Gradient Descent (SVGD) approximate the gradient flow within a reproducing kernel Hilbert space (RKHS), many attempts have been made recently to replace RKHS with more expressive function spaces, such as neural networks. While successful, these methods are mainly designed for sampling from unconstrained domains. In this paper, we offer a general solution to constrained sampling by introducing a boundary condition for the gradient flow which would confine the particles within the specific domain. This allows us to propose a new functional gradient ParVI method for constrained sampling, called \emph{constrained functional gradient flow} (CFG), with provable continuous-time convergence in total variation (TV). We also present novel numerical strategies to handle the boundary integral term arising from the domain constraints. Our theory and experiments demonstrate the effectiveness of the proposed framework.
\end{abstract}

\section{Introduction}

Efficiently approximating and sampling from unnormalized distributions is a fundamental and challenging task in probabilistic machine learning, especially in Bayesian inference.
Various methods, including Markov Chain Monte Carlo (MCMC) and Variational Inference (VI), have been developed to address the intractability of the target distribution.
In VI, the inference problem is reformulated as an optimization task that aims to find an approximation within a specific distribution family that minimizes the Kullback-Leibler (KL) divergence to the posterior \citep{Jordan1999AnIT, Wainwright08, Blei2016VariationalIA}.
Leveraging efficient optimization algorithms, VI is often fast during training and more scalable to large datasets.
However, its approximation power may be limited depending on the chosen family of variational distributions.
In contrast, MCMC methods generate samples from the posterior through a Markov chain that satisfies the detailed balance condition \citep{duane87, MALA02, Neal2011-yo, SGLD, SGHMC}.
While MCMC is asymptotically unbiased, it may be slow to converge, and assessing convergence can be challenging.

Recently, there has been a growing interest in the gradient flow formulation of both MCMC and VI, leading to the development of particle based variational inference methods (ParVIs) that tend to combine the best of both worlds \citep{Liu2016SVGD, Chen2018unified, Liu2019Understanding, di2021neural, fan2022variational, alvarez-melis2022optimizing}.
From the variational perspective, ParVIs take a non-parametric approach where the approximating distribution is represented as a set of particles.
These particles are iteratively updated towards the steepest direction to reduce the KL divergence to the posterior, following the gradient flow in the space of distributions with certain geometries.
This non-parametric nature of ParVIs significantly enhances its flexibility compared to classical parametric VIs, and the interaction between particles also makes ParVIs more particle-efficient than MCMCs.

One prominent particle-based variational inference technique is Stein Variational Gradient Descent (SVGD) \citep{Liu2016SVGD}.
It calculates the update directions for particles by approximating the gradient flows of the Kullback-Leibler (KL) divergence within a reproducing kernel Hilbert space (RKHS), where the approximation takes a tractable form \citep{Liu2017SVGF, Chewi2020chi-squared}.
However, the performance of SVGD heavily depends on the choice of the kernel function and the quadratic computational complexity of the kernel matrix also makes it impractical to use a large number of particles.
As kernel methods are known to have limited expressive power, many attempts have been made recently to expand the function class for gradient flow approximation \citep{Hu18, grathwohl2020, di2021neural, dong2023particlebased, cheng2023gwg}.
By embracing a more expressive set of functions, such as neural networks, these functional gradient approaches have shown improved performance over vanilla SVGD while not requiring expensive kernel computation.

While MCMC and VI methods have shown great success in sampling from unconstrained domains, they often struggle when dealing with target distributions supported on constrained domains.
Sampling from constrained domains is an important and challenging problem that appears in various fields, such as topic modeling \citep{LDA}, computational statistics and biology \citep{Morris2002, Lewis2012, Thiele2013}.
Recently, several attempts have been made to extend classical sampling methods like Hamiltonian Monte Carlo (HMC) or SVGD to constrained domains \citep{Brubaker2012, Byrne2013, Liu2018, shi2022sampling}.
However, these extensions either involve computationally expensive numerical subroutines such as solving nonlinear systems of equations, or rely on intricate implicit and symplectic schemes or mirror maps that require a case-by-case design effort tailored to specific constraint domains. 

In this paper, we propose a functional gradient ParVI method for sampling from probability distributions subject to constrained domains with general shapes.
We demonstrate that functional gradient approaches for ParVIs can be seamlessly adapted to constrained domains by learning the gradient flows with a vector field that adheres to a boundary condition.
Intuitively, this boundary condition would confine particles within the specified domain, and it is indeed a sufficient condition for gradient flows of probability measures confined to this domain.
Following previous works \citep{di2021neural,dong2023particlebased,cheng2023gwg}, we employ the regularized Stein discrepancy objective for functional gradient estimation where we directly incorporate the boundary condition for the vector field into the design of its neural network approximation.
Due to the domain constraints, integration by parts now would lead to an additional boundary integral term in the training objective.
We, therefore, derived an effective approach for properly evaluating this term for general boundaries.
Extensive numerical experiments across different constrained machine learning problems are conducted to demonstrate the effectiveness and  efficiency of our method.

\section{Related Work}
\label{gen_inst}

To better capture the gradient flow, a number of functional gradient methods for ParVIs have been proposed recently that employ a larger and more expressive class of functions than reproducing kernel Hilbert spaces (RKHS).
In particular, \citet{di2021neural} used neural networks to learn the Stein Discrepancy with an $L_2$ regularization term, and updated the particles based on the learned witness functions. \citet{dong2023particlebased} and \citet{cheng2023gwg} provided extensions that accommodates a broader class of regularizers.

Sampling from constrained domains is generally more challenging compared to the unconstrained ones.
\citet{Brubaker2012} proposed a constrained version of HMC for sampling on implicit manifolds by applying Lagrangian mechanics to Hamiltonian dynamics, which requires solving a nonlinear system of equations for each numerical integration step.
\citet{lan2014sph} focused on constraints that can be transformed into hyper-spheres, which can be viewed a special case of \citet{Brubaker2012} that has close-form update formulas.
\citet{Zhang2020, ahn2021efficient, shi2022sampling} extended Langevin algorithms and SVGD to constrained domains via mirror maps.
However, these methods require explicit forms of transformations (e.g., spherical augmentation and mirror maps) that capture the constraints, which would limit their applications to simple domains.
\citet{bubeck2018sampling, brosse2017sampling, salim2020primal} considered projected Langevin algorithms through projection oracle, which is of high computational cost for complex constrained domains.
\citet{zhang2022osvgd} proposed an orthogonal-space gradient flow approach for sampling in manifold domains with equality constraints, which employed a similar strategy to ours.
Our method is different from theirs in that it is designed for domains with inequality constraints using ParVIs with neural network gradient flow approximations, and we also proposed novel numerical strategies to address boundary-related challenges.

\section{Background}
\label{headings}

\paragraph{Notations}
We use $x$ to denote particles in $\R^d$ and $\Omega=\{x|g(x)\le 0\}$ to denote the constrained domain. 
Notation $\textbf{1}_{\Omega}$ is the indicator function of $\Omega$.
Let $\|\cdot\|$ denote the standard Euclidean norm of a vector. 
Let $\mathcal{P}(\R^d)$ denote the set of probability distributions on $\R^d$ that are absolute continuous with respect to the Lebesgue measure, and we do not distinguish the probability measure with its density function.
Let $\KL$ be the Kullback-Leibler divergence and TV be the total variation distance. 
We use $\mathcal{C}(\mathcal{X}, \mathcal{Y})$ to denote the space of continuous mappings from $\mathcal{X}$ to $\mathcal{Y}$, and use the shorthand $\mathcal{C}(\mathcal{X})$ for $\mathcal{C}(\mathcal{X}, \mathcal{X})$.

\subsection{Particle-based Variational Inference}

Let $p^*\in \mathcal{P}(\mathbb{R}^d)$ be the target distribution we wish to sample from.
We can frame the problem of sampling into a KL divergence minimization problem
\begin{equation}\label{eq:vari}
    \hat{p}:=\arg \min_{p\in\mathcal{Q}} \KL (p\|p^*),
\end{equation}
where $\mathcal{Q}\subset\mathcal{P}(\mathbb{R}^d)$ is the space of probability measures.
Particle-based variational inference methods (ParVIs) can be described within this framework where $Q$ is represented as a set of particles.
Starting from an initial distribution $p_0$ and an initial particle $x_0\sim p_0$, we update the particle $x_t$ following $dx_t=v_t(x_t)dt$ where $v_t:\mathbb{R}^d\mapsto\mathbb{R}^d$ is the velocity field at time $t$.
The density $p_t$ of $x_t$ follows the continuity equation $dp_t/dt = -\nabla \cdot (v_tp_t)$, and the KL divergence decreases with the following rate:
\begin{equation}\label{eq:dKL}
    \frac{d}{dt} \KL(p_t\|p^*)=-\E_{p_t} \langle \nabla\log\frac{p^*}{p_t}, v_t\rangle.
\end{equation}
ParVIs aim to find the optimal velocity field $v_t$ that minimizes \eqref{eq:dKL} in a Hilbert space $\mathcal{H}$ with the squared norm regularizer as follows
\begin{equation}\label{eq:optim}
    \min_{v_t\in \mathcal{H}} -\E_{p_t} \langle \nabla\log\frac{p^*}{p_t}, v_t\rangle + \frac{1}{2} \|v_t\|^2_{\mathcal{H}}.
\end{equation}

\subsection{Functional Wasserstein Gradient}

Different choices of the space $\mathcal{H}$ in \eqref{eq:optim} lead to different gradient flow algorithms. SVGD \cite{Liu2016SVGD} chooses $\mathcal{H}$ to be a Reproducing Kernel Hilbert Space with kernel $k(\cdot, \cdot)$. 
This way, the optimal velocity field has a close form solution
\begin{equation}\label{eq:svgd1}
v_t^*(\cdot) = \E_{p_t}[k(\cdot, x) \nabla\log p^*(x) + \nabla_xk(\cdot, x)],
\end{equation}
albeit the design of kernels can be restrictive for the flexibility of the method. 

When $\mathcal{H}=\mathcal{L}^2(p_t)$, then $v_t^*=\nabla\log\frac{p^*}{p_t}$ is also known as the Wasserstein gradient of KL divergence \citep{jordan1998variational}.
Since the score function of particle distribution $\nabla\log p_t$ is generally inaccessible, recent works propose to parameterize $v_t$ as a neural network and train it through \eqref{eq:optim} \cite{di2021neural,dong2023particlebased,cheng2023gwg}.
Due to Stein's identity, \eqref{eq:optim} has a tractable form
\begin{equation}\label{eq:sm_full}
    v^*_t=\arg \min_{ v\in\mathcal{{F}}}\E_{p_t} \left[-\langle \nabla\log p^*, v\rangle - \nabla\cdot v+ \frac{1}{2} \|v\|^2\right],
\end{equation}
where $\mathcal{F}$ is the neural network family.
This method is called functional Wasserstein gradient in contrast to the kernelized version.

\section{Main Method}
\label{others}

Consider sampling from a target distribution $p^*(x)$ supported on $\Omega=\{g(x) \le 0\}$, where $g : \R^d \to \R$ is a continuously differentiable function.
Throughout the work, we assume that $p^*$ admits differentiable positive density function on $\Omega$.

We first reformulate the problem into a constrained optimization in the space of probability measures: 
\begin{equation}
    \min_{p \in \mathcal P(\R^d)} \KL(p\|p^*), ~~~\text{s.t.}~~~p(\Omega) = 1. 
\end{equation}
However, note that $p^*$ is only supported on $\Omega$, the problem is ill-posed if $p(\Omega) < 1$, since in this case $p$ will not be absolute continuous with respect to $p^*$ and $\KL(p\|p^*)$ cannot be defined.
Besides, note that the initial particle distribution $p_0$ is generally assumed to be fully supported on $\R^d$ and thus does not meet the constraint. 

To fix this issue, a natural idea is to drive the particle distribution $p_t$ towards $\Omega$ to satisfy $p_t(\Omega)=1$ and train the functional gradient flow on constrained domains by minimizing the regularized Stein discrepancy (RSD)
\begin{equation}\label{eq:sm_constrain}
\min_{ v\in\mathcal{{F}}}\; \mathcal{L}_{\mathrm{RSD}}=\int_\Omega p_t \left(-\left\langle\nabla \log\frac{p^*}{p_t} , v\right\rangle + \frac12\|v\|^2\right)\dif x.
\end{equation}

These stringent requirements make the construction of velocity field for constrained sampling far more volatile than conventional functional gradient for unconstrained domains.

In the rest of this section, we first present the idea of velocity design in Section \ref{subsec:piece-wise} and then derive the tractable training objective in Section \ref{subsec:training_obj}.
The practical algorithm is proposed in Section \ref{subsec:alg}.

\subsection{Necessity of Piece-wise Velocity Field}\label{subsec:piece-wise}

We first show that a globally continuous velocity field may fail to achieve the exact minimum of RSD under the constraint. To ensure that particles inside $\Omega$ will never escape, the velocity should satisfy the boundary condition
\begin{equation}\label{eq:bc}
    v_t\cdot \vec{\vn}\le 0, \text{~on } \partial\Omega.
\end{equation}
In fact, if \eqref{eq:bc} holds, by Stoke's formula and the continuity equation $\frac{\partial}{\partial t}p_t(x)= -\nabla\cdot (p_t(x)v_t(x)) $, we have
\begin{equation}
    \begin{aligned}
        \frac{\dif}{\dif t}p_t(\Omega)=\frac{\dif}{\dif t}\E_{p_t}\textbf{1}_{\Omega}
        =\int_{\Omega} \frac{\partial}{\partial t}p_t(x) \dif x
        =-\int_{\Omega} \nabla\cdot (p_t(x)v_t(x)) \dif x
        =-\int_{\partial\Omega} p_t(x)v_t(x)\cdot \vec{\vn} \dif S.
    \end{aligned}
\end{equation}
Thus we can conclude
\begin{proposition}\label{prop:nec}
    If $v_t\cdot \vec{\vn}\le 0$ on $\partial\Omega$, then $p_t(\Omega)$ will not decrease.
\end{proposition}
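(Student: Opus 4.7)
The plan is essentially to follow the chain of equalities that already appears in the display immediately preceding the proposition and make each step rigorous. The target quantity $p_t(\Omega)$ can be written as $\E_{p_t}\mathbf{1}_\Omega$, and since $\Omega$ does not depend on $t$, differentiating under the integral sign gives $\frac{\dif}{\dif t}p_t(\Omega)=\int_\Omega \partial_t p_t(x)\,\dif x$, provided $p_t$ is sufficiently regular in $(t,x)$ to justify the interchange (which I would assume as a standing regularity hypothesis on the flow, as is standard in the gradient-flow literature).

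Next I would invoke the continuity equation $\partial_t p_t = -\nabla\cdot(p_t v_t)$, which already comes with the ParVI setup, to rewrite the integrand as a divergence. Then I would apply the divergence theorem (Stokes' formula) on $\Omega$ to convert the volume integral of $\nabla\cdot(p_t v_t)$ into the boundary integral $\int_{\partial\Omega} p_t(x)\, v_t(x)\cdot\vec{\vn}\,\dif S$, where $\vec{\vn}$ is the outward unit normal. This produces the identity
\begin{equation*}
\frac{\dif}{\dif t}p_t(\Omega) \;=\; -\int_{\partial\Omega} p_t(x)\, v_t(x)\cdot\vec{\vn}\,\dif S.
\end{equation*}

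Finally, I would combine the hypothesis $v_t\cdot\vec{\vn}\le 0$ on $\partial\Omega$ with the nonnegativity of the density $p_t\ge 0$ to conclude that the integrand on the right-hand side is nonpositive, so the right-hand side itself is nonnegative; hence $\frac{\dif}{\dif t}p_t(\Omega)\ge 0$, which is exactly the claim.

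The only genuine obstacle is not the calculation itself but the regularity conditions that make it legal: smoothness of $g$ so that $\partial\Omega$ is a $C^1$ hypersurface with a well-defined outward normal (already granted by the assumption that $g$ is continuously differentiable, assuming $\nabla g\ne 0$ on the zero-level set), enough smoothness of $p_t$ and $v_t$ to apply the divergence theorem, and enough decay/integrability so that differentiation under the integral is valid. Under the mild regularity assumptions implicit in the ParVI framework these are routine, so the proof is essentially a one-line consequence of the divergence theorem plus the sign of the boundary integrand.
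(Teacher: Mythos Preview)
Your proposal is correct and follows exactly the same approach as the paper: the paper's ``proof'' is simply the chain of equalities displayed immediately before the proposition (differentiate under the integral, apply the continuity equation, then Stokes' formula), followed by the observation that the boundary integrand has the right sign. If anything, your write-up is more careful than the paper's, which does not comment on the regularity hypotheses at all.
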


However, it is possible that there does not exist an optimal solution of \eqref{eq:sm_constrain} that meets the boundary condition \eqref{eq:bc} and is continuous on $\partial\Omega$, as illustrated in the following 1D-example.
\begin{example}\label{eg:1D_toy}
Let $p^*\propto \exp(-\frac{x^2}{2})\cdot\textbf{1}_{\{x^2\leq 1\}}$ and $p\propto \exp(-\frac{(x-\frac{1}{2})^2}{2})$.
We have $\nabla\log\frac{p^*}{p}=-\frac{1}{2}$ in $\Omega$ and 
$\mathcal{L}_{\mathrm{RSD}}
=\frac12\int_{-1}^{1}p|v(x)+\frac12|^2\dif x - \frac18$. 
The constraint on boundary is $v(1)\leq 0, v(-1)\geq 0$.
Hence there is no optimal $v$ in $\mathcal{C}(\R)$.
\end{example}

Therefore, it is necessary to design velocities for particles inside and outside separately.
For particles located outside $\Omega$ (contributing to the probability $p(g(x) > 0)$), it is prudent to make the velocity drive the particles into the domain. 
On the other hand, for those already inside $\Omega$, the velocity field should be able to fit the target distribution.
Overall, it is reasonable to use the following piece-wise construction:
\begin{equation}\label{eq:velocity}
    v_t=h_{net}\cdot \textbf{1}_{\{g<0\}}-\lambda\frac{\nabla g}{\|\nabla g\|}\cdot \textbf{1}_{\{g\ge0\}}.
\end{equation}
Here $\lambda>0$ is a constant.
The gradient direction $\nabla g$ will push the particles into $\Omega$.
And $h_{net}$ is a continuous neural net to learn the optimal direction in $\Omega$.
We observed that the magnitudes of the gradients of different particles vary significantly in the numerical experiments and thus use the normalized gradient $\frac{\nabla g}{\|\nabla g\|}$ instead.

\subsection{Training Objective}\label{subsec:training_obj}

Based on the velocity design in \eqref{eq:velocity}, we are now ready to derive a tractable training objective for $h_{net}$ to learn the optimal direction in $\Omega$ through \eqref{eq:sm_constrain}.

Note that in principle, $h_{net}$ should be trained after $p_t(\Omega)=1$.
However, in order to enhance efficiency in practice, we 
use inner particles to minimize RSD before all the particles are driven in the constrained domain.
At this phase, we replace $p_t$ in \eqref{eq:sm_constrain} with the conditional measure $\hat{p}_t:=p_t(\cdot|\Omega)$. 

For any $v\in \mathcal{C}(\Omega,\R^d)$, expand \eqref{eq:sm_constrain} and we get
\begin{equation}\label{eq:rsd_with_boundary}
    \begin{aligned}
        \mathcal{L}_{\mathrm{RSD}}
        &= \int_{\Omega} -\hat{p}_tv^T \nabla \log \frac{p^*}{\hat{p}_t}\dif x +  \frac12\int_{\Omega} \hat{p}_t \|v\|^2 \dif x \\
        &= \int_{\Omega} -\hat{p}_t[v^T\nabla\log p^*+\nabla\cdot v] \dif x + \frac12\int_{\Omega} \hat{p}_t \|v\|^2 \dif x + \int_{\partial\Omega}\hat{p}_tv \cdot \vec{\vn} \dif S.
    \end{aligned}
\end{equation}

Substitute $v$ with the continuous extension of $h_{net}$ on $\Omega$ and we obtain the training objective for $h_{net}$.

Note that the first two terms in \eqref{eq:rsd_with_boundary} are aligned with \eqref{eq:sm_full} by substituting in the integration domain, while the last term is a boundary integral specific to inequality constraints.
This is one of the essential differences between training functional gradients on constrained and unconstrained domains.

\subsection{Estimation of the Boundary Integral}\label{subsec:alg}

The boundary integral $\int_{\partial\Omega}pv \cdot \vec{\vn} \dif S$ is computationally intractable in general.
Inspired by the \emph{co-area formula} \citep{Federer14}, we can estimate it using a heuristic band-wise approximation as follows
\begin{equation}\label{bandwise}
    \begin{aligned}
    \int_{\partial\Omega}pv \cdot \vec{\vn} \dif S 
    \approx \frac{1}{h}\int_{\tilde{S}_h} p v\cdot\vec{\vn} \dif x 
    = \frac{1}{h}\int_{\tilde{S}_h} \tilde{p}(x) v(x)\cdot \vec{\vn} \frac{p(x)}{\tilde{p}(x)}\dif x 
    = \frac{p(x\in \tilde{S}_h)}{h}\int_{\tilde{S}_h} \tilde{p}(x) v(x)\cdot \vec{\vn} \dif x,
    \end{aligned}
\end{equation}
where $\tilde{p}(x)\propto p(x)$ such that $\int_{\tilde{S}_h}\tilde{p}(x)\dif x = 1$, and $\tilde{S}_h:= \{x\in\Omega: d(x,\partial\Omega)\le h\}$ is a band-like area near the boundary with a small bandwidth $h>0$, which further has a proxy as

\begin{equation}    
        g(x)\le 0, \quad g\left(x+h\frac{\nabla g(x)}{\|\nabla g(x)\|}\right)\ge 0.
\end{equation} 

Please refer to Appendix \ref{app:sec:boundary} for more explanations. This way, we can use the particles in the band-like area to obtain a Monte Carlo estimate of the boundary integral. 
Suppose there are $m$ particles in $\Omega$, of which $\{x_j\}_{j=1}^{n}$ are in the band-like area.
Then $p(x\in \tilde{S}_h)\approx \frac{n}{m}$, and thus 
\begin{equation}
    \int_{\partial\Omega}pv \cdot \vec{\vn} \dif S 
    \approx
    \frac{1}{mh}\sum_{j=1}^{n}v(x_j)^T \nabla g(x_j) / \|\nabla g(x_j)\|.
\end{equation}

We summarize all the techniques above and propose the Constrained Functional Gradient (CFG) in Algorithm \ref{alg:cfg}. We use the neural network structure designed as in Appendix \ref{appendix:Addtionalexp} for $h_{net}$.

\begin{algorithm}[ht]
\caption{CFG: Constrained Functional Gradient}
    \label{alg:cfg}
    \begin{algorithmic}
        \REQUIRE{Unnormalized target distribution $p^*$, initial particles $\{x_0^i\}_{i=1}^{N}$, initial neural network parameters $w_0=\{\eta_0,\xi_0\}$, weight parameter $\lambda$, iteration number $L, L'$, particle step size $\alpha$, parameter step size $\eta$}, bandwidth $h$
        \FOR{$k=0, \cdots, L-1$}
            \STATE{Assign $w_k^0 = w_k$}
            \STATE{Identify $\{x_k^{i_j}\}_{j=1}^{n}:= \{x_k^i|x_k^i\in \tilde{S}_h\} $, $\{x_k^{l_r}\}_{r=1}^{m}:= \{x_k^l|x_k^l\in \Omega\} $
            }
            \FOR{$t=0, \cdots, L'-1$}
                \STATE{Set $h_{w_{k}^t}=f_{\eta_k^t}-z_{\xi_k^t}^2\cdot\nabla g$   
                }
                \STATE{Compute 
                    \begin{align*}\label{eq:gsm}   \widehat{\mathcal{L}}_{RSD}(w)&= \frac{1}{m}\sum_{r=1}^{m}[ -\nabla \log{p^*(x_k^{l_r})}^T h_w(x_k^{l_r}) - \nabla \cdot h_w(x_k^{l_r}) \\
                    &+ \frac{1}{2}\|h_w(x_k^{l_r})\|^2]+\frac{1}{mh}\sum_{j=1}^{n} \frac{h_w(x_k^{i_j})^T\nabla g(x_k^{i_j})}{\|\nabla g(x_k^{i_j})\|}
                    \end{align*}
                }
                \STATE{Update $w_k^{t+1} = w_k^{t} + \eta \nabla_w \widehat{\mathcal{L}}_{RSD}(w_k^t)$}
            \ENDFOR 
            \STATE{Assign $w_{k+1} = w_k^{L'}$}
            \STATE{Compute $v_{w_{k+1}}=(f_{\eta_{k+1}}-z_{\xi_{k+1}}^2\cdot\nabla g)\cdot \textbf{1}_{\{g<0\}}-\lambda\frac{\nabla g}{\|\nabla g\|}\cdot \textbf{1}_{\{g\ge0\}}$   
            }
            \STATE{Update particles $x_{k+1}^{i} = x_{k}^{i} + \alpha v_{w_{k+1}}(x_{k}^{i})$ for $i=1, \cdots, N$}
        \ENDFOR
        \STATE{$\textbf{return}$ Particles $\{x_L^i\}_{i=1}^{N}$}
    \end{algorithmic}
\end{algorithm}

\section{Theoretical Analysis}

In this section, we present the convergence guarantee of CFG in terms of TV distance.
Consider the continuous dynamic $\dif x_t = v_t(x_t) \dif t$
where $v_t$ is defined in \eqref{eq:velocity}. 
Let $p_t$ be the law of $x_t$.
We first investigate when particles can get into the domain.
\begin{assumption}
    $\|\nabla g(x)\|\ge C>0$ for $x\in \Omega^c$ and $M_0=\max\{g(x_0):x_0\in\textit{supp}(p_0)\}<\infty$.
\end{assumption}
For simplicity, we do not delve into the case that the particles outside $\Omega$ may get stuck at local extremal point of $g$ before getting in $\Omega$. 
Based on this assumption, the particles will enter the constrained domain in finite time:
\begin{theorem}\label{thm:push}
There exists a finite $t_0\le \frac{M_0}{\lambda C}$, such that $g(x_t)\le 0, \forall t\ge t_0$. 
\end{theorem}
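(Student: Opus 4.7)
The plan is to analyze a single particle trajectory $x_t$ started from an arbitrary $x_0 \in \mathrm{supp}(p_0)$ and show it enters $\Omega$ by time $M_0/(\lambda C)$ and never leaves thereafter. By the assumption, $g(x_0) \le M_0 < \infty$, so it suffices to control the scalar quantity $g(x_t)$.

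First, as long as $x_t \in \Omega^c$ (i.e.\ $g(x_t) > 0$), the definition \eqref{eq:velocity} gives $v_t(x_t) = -\lambda \nabla g(x_t)/\|\nabla g(x_t)\|$, so by the chain rule
\begin{equation*}
    \frac{\dif}{\dif t} g(x_t) = \nabla g(x_t)^T v_t(x_t) = -\lambda \|\nabla g(x_t)\| \le -\lambda C,
\end{equation*}
using $\|\nabla g\| \ge C$ on $\Omega^c$. Integrating, $g(x_t) \le g(x_0) - \lambda C t \le M_0 - \lambda C t$ for as long as the particle is outside $\Omega$. Setting $t_0 := M_0/(\lambda C)$, continuity of $g$ along the trajectory forces $g(x_{t_\star}) = 0$ for some $t_\star \le t_0$.

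Second, I need to show the particle does not re-exit $\Omega$ for $t > t_\star$. At any boundary point where $g(x_t) = 0$, the piece-wise velocity \eqref{eq:velocity} takes the outward branch $-\lambda\nabla g/\|\nabla g\|$, so $\frac{\dif}{\dif t} g(x_t) = -\lambda\|\nabla g(x_t)\| < 0$ at such a point, meaning $g$ is strictly decreasing whenever the trajectory touches $\partial\Omega$. Inside $\Omega$ the velocity is $h_{net}$, which the neural-network parameterization $h_w = f_\eta - z_\xi^2 \cdot \nabla g$ forces to satisfy $h_{net}\cdot \nabla g \le 0$ on $\partial\Omega$ (the $f_\eta$ part cancels there in the limiting sense and the $-z_\xi^2 \nabla g$ part contributes $-z_\xi^2 \|\nabla g\|^2 \le 0$); in any case the piece-wise definition takes the outward branch at $g = 0$. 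A standard invariance argument (or a simple comparison: if $g(x_t)$ were to become positive again, it would have to pass through $0$ with nonnegative derivative, contradicting the computation above) shows $g(x_t) \le 0$ for all $t \ge t_\star$, and in particular for all $t \ge t_0$.

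The main subtlety is that $v_t$ is piece-wise defined and therefore discontinuous across $\partial\Omega$, so one must be careful about how the ODE is interpreted at boundary crossings. I would handle this by treating the dynamics as a Filippov/hybrid system: on $\{g > 0\}$ and $\{g < 0\}$ the flow is classical, and the only non-classical events are boundary hits, at which the analysis above shows $g$ cannot cross back to positive values. This hybrid viewpoint is the only non-routine step; the rest is the one-line Grönwall-style estimate on $g(x_t)$.
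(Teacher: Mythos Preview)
Your proof is correct and follows essentially the same route as the paper: compute $\frac{\dif}{\dif t}g(x_t)=-\lambda\|\nabla g(x_t)\|\le -\lambda C$ while $g>0$, integrate to force entry into $\Omega$ by time $M_0/(\lambda C)$, and then argue that re-exit is impossible because the derivative of $g$ is strictly negative at any boundary hit. The parenthetical claim that ``the $f_\eta$ part cancels there in the limiting sense'' is unjustified and unnecessary---as you yourself note, the piece-wise definition already selects the $-\lambda\nabla g/\|\nabla g\|$ branch at $g=0$, which is all the argument uses; you can simply delete that aside.
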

The proof is deferred to Appendix \ref{app:subsec:proof_push}.
Since domains with inequality constraints do not necessarily require the velocity field to gradually decay near the boundary as in \citet{zhang2022osvgd}, directly using $\lambda\frac{\nabla g}{\|\nabla g\|}\cdot \textbf{1}_{\{g\ge0\}}$ allows for a swift penetration of the particles into the constrained domain.

Once the particles are in the constrained domain, an important technique is to extend the target distribution to be non-trivially supported on the whole space by convolution with a Gaussian kernel and use it as a proxy. 
Denote the convolution of the target distribution and the Gaussian distribution as $\hat{p}^*=p^* * \mathcal{N}(0,\sigma^2 I)$, where $\sigma>0$ can be arbitrarily small. 
Now the KL divergence $\KL(p_t\|\hat{p}^*)$ is well defined on $\R^d$. Moreover, by the convolution properties, $\text{TV}(\hat{p}^*\|p^*)\to 0$ as $\sigma\to 0$ \citep{elias2005real}.

\begin{assumption}[Poincaré inequality]
     The target distribution $p^*$ satisfies $\mathrm{var}_{p^*}[f]\leq \kappa \E_{p^*} [\|\nabla f\|^2]$ for any smooth function $f$.
\end{assumption}

This is a common assumption in convergence analysis that describes the degree of convexity of the target distribution \citep{chewi2022lmc}.
In contrast to \citet{zhang2022osvgd}, our approach relies solely on the Poincaré property of the original target distribution, rather than requiring it for the conditional measure on a zero measure set.
Following \citet{cheng2023gwg}, we make the following assumption on the approximation error of neural networks. 
\begin{assumption}\label{assump:approx_error}
    For any $t>t_0$, $\int_\Omega p_t \|h_{net}(t)-\nabla\log\frac{p*}{p_t}\|^2\dif x\leq \epsilon$.
\end{assumption}

Now, we are ready to present the main result.
\begin{proposition}\label{prop:conv}
    Suppose that $p^*$ satisfies $\kappa$-PI and the data distribution $p_t$ is smooth. Denote 
    \[
    F(p_t,\hat{p}^*):= \int_\Omega p_t \|\nabla\log\frac{\hat{p}^*}{p_t}\|^2 \dif x,
    \]
    then 
    the following bound holds for any $t>t_0$

\begin{equation}
\text{TV}(p_t\|\hat{p}^*)\leq \sqrt{8(\kappa+\sigma^2) (F(p_t,\hat{p}^*))}+ \mathcal{O}(\sigma)
\end{equation}

\end{proposition}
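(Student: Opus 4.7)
The plan is to use the Poincaré inequality of the smoothed target $\hat p^*$ applied to the test function $\phi=\sqrt{p_t/\hat p^*}$, which converts the variance side into (essentially) the squared Hellinger distance between $p_t$ and $\hat p^*$ and the gradient side into the Fisher-like quantity $F(p_t,\hat p^*)$. The desired bound then follows from the classical Hellinger-to-TV inequality, while the $\mathcal O(\sigma)$ remainder absorbs the mass-transfer error introduced by the Gaussian smoothing $\hat p^*=p^**\mathcal N(0,\sigma^2 I)$.

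First I would establish that $\hat p^*$ satisfies a $(\kappa+\sigma^2)$-Poincaré inequality on $\R^d$. Writing $\hat p^*$ as the law of $X+Z$ with $X\sim p^*$ independent of $Z\sim\mathcal N(0,\sigma^2 I)$, the conditional variance decomposition
\[
\mathrm{Var}(f(X+Z))=\E_Z\bigl[\mathrm{Var}_X(f(X+Z)\mid Z)\bigr]+\mathrm{Var}_Z\bigl(\E_X[f(X+Z)\mid Z]\bigr),
\]
combined with the $\kappa$-PI of $p^*$ and the $\sigma^2$-PI of the Gaussian (and the contraction $\|\nabla g\|^2\le \E_X\|\nabla f(X+\cdot)\|^2$ with $g(z):=\E_X f(X+z)$), gives $\mathrm{Var}_{\hat p^*}(f)\le(\kappa+\sigma^2)\,\E_{\hat p^*}\|\nabla f\|^2$ for all smooth $f$.

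Next I would plug $\phi:=\sqrt{p_t/\hat p^*}$, extended by $0$ outside $\Omega$, into this inequality. By \cref{thm:push}, $p_t$ is supported in $\Omega$ for $t>t_0$, and the smoothness assumption ensures that $\phi$ is an admissible test function. Direct computation yields $\E_{\hat p^*}[\phi^2]=\int_\Omega p_t\,\dif x=1$ and $\E_{\hat p^*}[\phi]=\int_\Omega\sqrt{p_t\hat p^*}\,\dif x=1-\tfrac12 H^2(p_t,\hat p^*)$, so $\mathrm{Var}_{\hat p^*}(\phi)=H^2-\tfrac14 H^4\ge \tfrac12 H^2$ where $H=H(p_t,\hat p^*)$. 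On the gradient side, $\nabla\phi=\tfrac12\sqrt{p_t/\hat p^*}\,\nabla\log(p_t/\hat p^*)$ inside $\Omega$ and $0$ outside, so $\E_{\hat p^*}\|\nabla\phi\|^2=\tfrac14 F(p_t,\hat p^*)$. Combining gives $H^2\le \tfrac12(\kappa+\sigma^2)F(p_t,\hat p^*)$, and the classical $\text{TV}\le H$ estimate (from Cauchy--Schwarz applied to $|p-q|=|\sqrt p-\sqrt q|(\sqrt p+\sqrt q)$) delivers the stated form of the bound, with the universal constant $8$ absorbing looser intermediate conversions.

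The main obstacle is justifying that $\phi$ extended by $0$ is a legitimate $H^1(\hat p^*)$ test function: if $p_t$ does not vanish smoothly at $\partial\Omega$, a distributional jump in $\nabla\phi$ would appear and an additional boundary integral would have to be controlled, analogous to the one treated in \cref{subsec:training_obj}. Once this is settled, the trailing $\mathcal O(\sigma)$ should be attributed either to the triangle-inequality comparison $\text{TV}(\hat p^*,p^*)=\mathcal O(\sigma)$ (standard convolution estimate, used if one passes through $p^*$), or to the residual mass of $\hat p^*$ outside $\Omega$, both of which vanish as $\sigma\to 0$ and therefore do not affect the rate of convergence.
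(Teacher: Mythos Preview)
Your proposal is correct and follows essentially the same route as the paper: both arguments establish the $(\kappa+\sigma^2)$-Poincar\'e inequality for $\hat p^*=p^*\ast\mathcal N(0,\sigma^2 I)$ via subadditivity under convolution, apply it to the test function $\phi=\sqrt{p_t/\hat p^*}$, identify the variance side with $1-\int\sqrt{p_t\hat p^*}$ (which you phrase as the squared Hellinger distance) and the gradient side with $F(p_t,\hat p^*)$, and then convert to TV. The paper reaches the constant $8$ by first bounding $\bigl(\int_\Omega(\hat p^*-p_t)f\bigr)^2$ via Cauchy--Schwarz rather than invoking $\text{TV}\le H$ directly, and its $\mathcal O(\sigma)$ term arises precisely from the residual mass $\int_{\Omega^c}\hat p^*$, one of the two sources you correctly anticipated; the boundary-regularity caveat you flag is handled in the paper only through the blanket ``$p_t$ is smooth'' hypothesis.
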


Relating the gradient of KL divergence and the Fisher information and letting $\sigma\to 0$, we have

\begin{theorem}\label{thm:conv}
     Under assumption \ref{assump:approx_error} and the assumptions in Proposition \ref{prop:conv}, suppose also that the KL divergence $\KL(p_{t_0}\| p^*)<\infty$. 
    The following bound holds

\begin{equation}
\min_{t\leq T}\text{TV}(p_t\|p^*)\le\mathcal{O}(T^{-\frac{1}{2}}+\epsilon^{\frac{1}{2}}).
\end{equation}

\end{theorem}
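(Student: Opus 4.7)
The plan is to combine a time-integrated KL dissipation estimate with the Fisher-to-TV bound of Proposition \ref{prop:conv}. By Theorem \ref{thm:push} every particle lies in $\Omega$ for $t\ge t_0$, so $\supp(p_t)\subset\Omega$ and the velocity $v_t$ reduces to $h_{net}(t)$ on that support; combined with the hypothesis $\KL(p_{t_0}\|p^*)<\infty$, everything is well posed from $t_0$ onward, and the analysis just tracks the gradient flow of KL against $p^*$ inside $\Omega$.

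First I would differentiate $\KL(p_t\|p^*)$ along the flow. Starting from the continuity equation and integrating by parts over $\Omega$---the $\partial\Omega$-term vanishing because $h_{net}\cdot\vec{\vn}=0$ on $\partial\Omega$ (enforced through the construction $h_{net}=f_\eta - z_\xi^2\nabla g$ together with the boundary penalty in $\widehat{\mathcal L}_{\mathrm{RSD}}$)---one recovers the standard identity $\frac{\dif}{\dif t}\KL(p_t\|p^*)=-\E_{p_t}\langle\nabla\log(p^*/p_t), h_{net}\rangle$. Splitting $h_{net}=\nabla\log(p^*/p_t)-(\nabla\log(p^*/p_t)-h_{net})$, applying Young's inequality, and invoking Assumption \ref{assump:approx_error} gives
\begin{equation*}
    \frac{\dif}{\dif t}\KL(p_t\|p^*)\le -\tfrac12 F(p_t,p^*)+\tfrac{\epsilon}{2},
\end{equation*}
where $F(p_t,p^*):=\int_\Omega p_t\|\nabla\log(p^*/p_t)\|^2\,\dif x = \lim_{\sigma\to 0}F(p_t,\hat p^*)$.

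Integrating from $t_0$ to $T$ and dropping the non-negative endpoint $\KL(p_T\|p^*)$ yields $\int_{t_0}^T F(p_t,p^*)\,\dif t\le 2\KL(p_{t_0}\|p^*)+\epsilon(T-t_0)$, and a pigeonhole argument gives $\min_{t_0\le t\le T}F(p_t,p^*)\le 2\KL(p_{t_0}\|p^*)/(T-t_0)+\epsilon$. Applying Proposition \ref{prop:conv} at this minimizing time and letting $\sigma\to 0$ (so that both $\text{TV}(\hat p^*\|p^*)\to 0$ and $F(p_t,\hat p^*)\to F(p_t,p^*)$ by dominated convergence under the assumed smoothness of $p_t$) produces $\text{TV}(p_t\|p^*)\le\sqrt{8\kappa F(p_t,p^*)}$, whence
\begin{equation*}
    \min_{t\le T}\text{TV}(p_t\|p^*)\le\sqrt{8\kappa\Bigl(\tfrac{2\KL(p_{t_0}\|p^*)}{T-t_0}+\epsilon\Bigr)}=\mathcal{O}(T^{-1/2}+\epsilon^{1/2}).
\end{equation*}

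The hard part will be rigorously handling the boundary behavior in the KL dissipation step: integration by parts over $\Omega$ formally produces a residual $\int_{\partial\Omega} p_t\log(p_t/p^*)\,h_{net}\cdot\vec{\vn}\,\dif S$, and making it vanish requires both $h_{net}\cdot\vec{\vn}\equiv 0$ on $\partial\Omega$ (only approximately enforced by the training objective) and enough regularity of $p_t$ near $\partial\Omega$. The remaining steps---the Young decomposition, the time-average pigeonhole, and taking $\sigma\to 0$ in Proposition \ref{prop:conv}---are mechanical once smoothness of $p_t$ is in hand.
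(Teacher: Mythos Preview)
Your high-level strategy matches the paper's: differentiate the KL along the flow, absorb the neural-net error via Young's inequality and Assumption~\ref{assump:approx_error}, time-average to control $\min_t F$, then feed into Proposition~\ref{prop:conv} and send $\sigma\to 0$. The substantive difference is \emph{which} KL you differentiate. The paper does not work with $\KL(p_t\|p^*)$; it differentiates $\KL(p_t\|\hat p^*)$ with the Gaussian-smoothed target. Because $\hat p^*>0$ and is smooth on all of $\R^d$, the dissipation identity $\frac{\dif}{\dif t}\KL(p_t\|\hat p^*)=-\int p_t\,v_t^{\!\top}(s_{\hat p^*}-s_{p_t})\,\dif x$ holds as a volume integral over $\R^d$, and the boundary $\{g=0\}$ enters only as a Lebesgue-null set in the decomposition of $v_t$---no surface integral ever appears. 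The price is an extra splitting $\|h_{net}-(s_{\hat p^*}-s_{p_t})\|^2\le 2\|h_{net}-(s_{p^*}-s_{p_t})\|^2+2\|s_{\hat p^*}-s_{p^*}\|^2$, which introduces an $\mathcal O(\sigma)$ term alongside the $\mathcal O(\epsilon)$ from Assumption~\ref{assump:approx_error}; both are then carried through the pigeonhole and killed at the end. Your route with $p^*$ is more direct (Assumption~\ref{assump:approx_error} applies immediately) and leads to the same conclusion, but your proposed mechanism for killing the boundary term is not the right one.

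Specifically, the claim that $h_{net}\cdot\vec\vn=0$ on $\partial\Omega$ is not justified: the architecture $h_{net}=f_\eta-z_\xi^2\nabla g$ only makes the second summand point inward, and the boundary term $\frac{1}{mh}\sum h_w(x)^{\!\top}\nabla g/\|\nabla g\|$ in $\widehat{\mathcal L}_{\mathrm{RSD}}$ is there to \emph{estimate} the surface integral in the RSD objective~\eqref{eq:rsd_with_boundary}, not to drive $h_{net}\cdot\vec\vn$ to zero (indeed, the RSD minimizer $\nabla\log(p^*/p_t)$ generically has nonzero normal component). What actually rescues your integration-by-parts over $\Omega$ is the smoothness hypothesis on $p_t$ already imported from Proposition~\ref{prop:conv}: since $\supp(p_t)\subset\Omega$ for $t\ge t_0$, smoothness on $\R^d$ forces $p_t=0$ on $\partial\Omega$, and the surface term $\int_{\partial\Omega}p_t\log(p_t/p^*)\,v_t\cdot\vec\vn\,\dif S$ vanishes for that reason. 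If you rewrite your ``hard part'' to invoke this rather than a boundary condition on $h_{net}$, your argument goes through; the paper's detour through $\hat p^*$ simply avoids having to make this point explicitly.
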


Please refer to Appendix \ref{app:subsec:proof_conv} and \ref{app:subsec:proof_thm_conv} for detailed proofs.

\vspace{-0.5em}

\section{Experiments}
\label{main-exps}

In this section, we compare CFG to other baseline methods for constrained domain sampling, including MSVGD \citep{shi2022sampling}, MIED \citep{li2022sampling}, Spherical HMC \citep{lan2014sph}, PD-SVGD and Control SVGD \citep{liu2021sampling}.
Throughout the section, we use neural networks with 2 hidden layers and initialize our particles with Gaussian distributions unless otherwise stated. We use the neural network structure designed as in Appendix \ref{appendix:Addtionalexp}.
All the experiments were implemented with Pytorch.
More details can be found in Appendix \ref{appendix:Addtionalexp}. The code is available at \url{https://github.com/ShiyueZhang66/Constrained-Functional-Gradient-Flow}.

\subsection{Toy Experiments}

\begin{figure}
\centering
\begin{minipage}{.576\columnwidth}
      \includegraphics[width=.95\columnwidth]{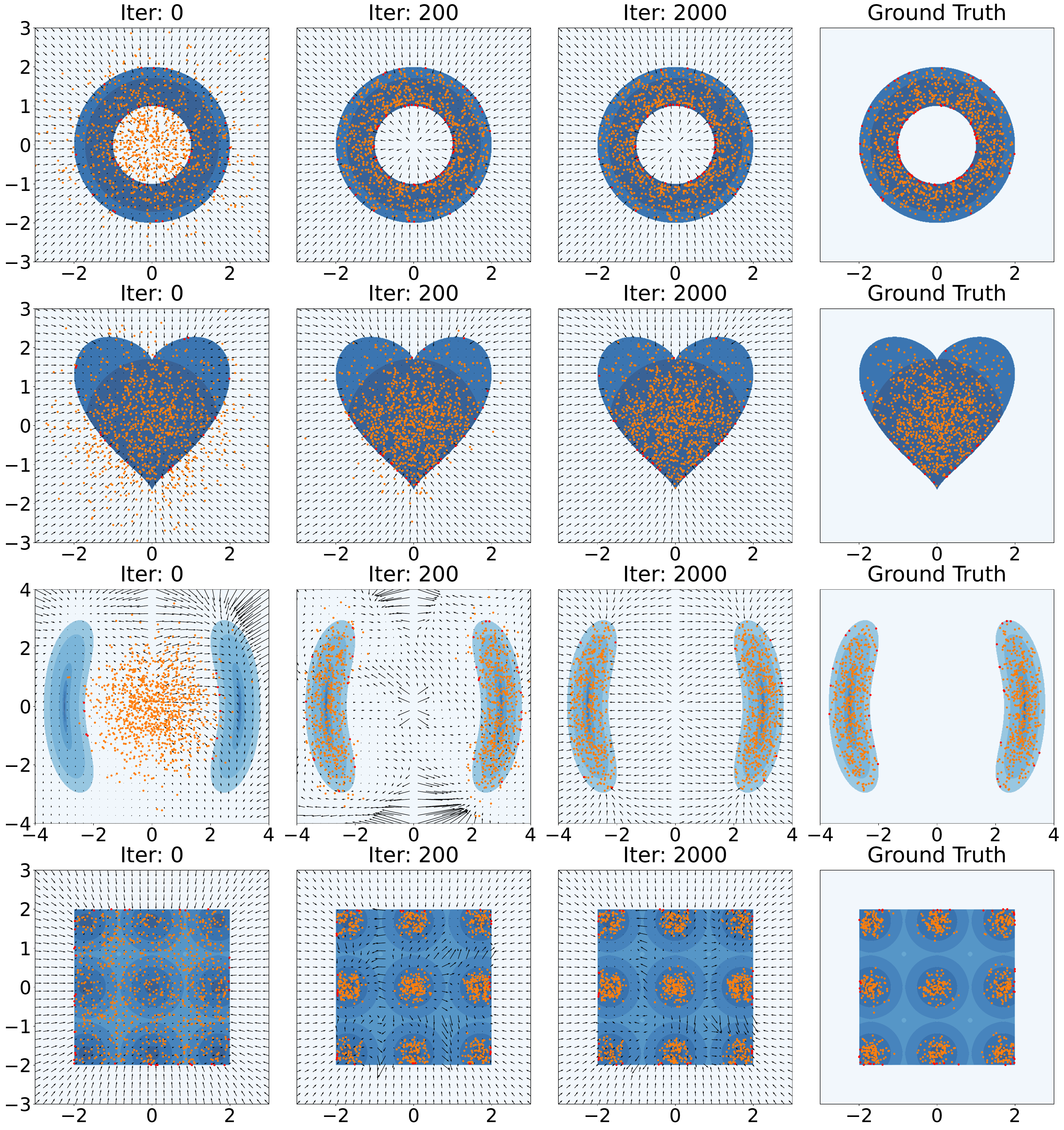}
\end{minipage}
\begin{minipage}{.414\columnwidth}
      \includegraphics[width=.95\columnwidth]{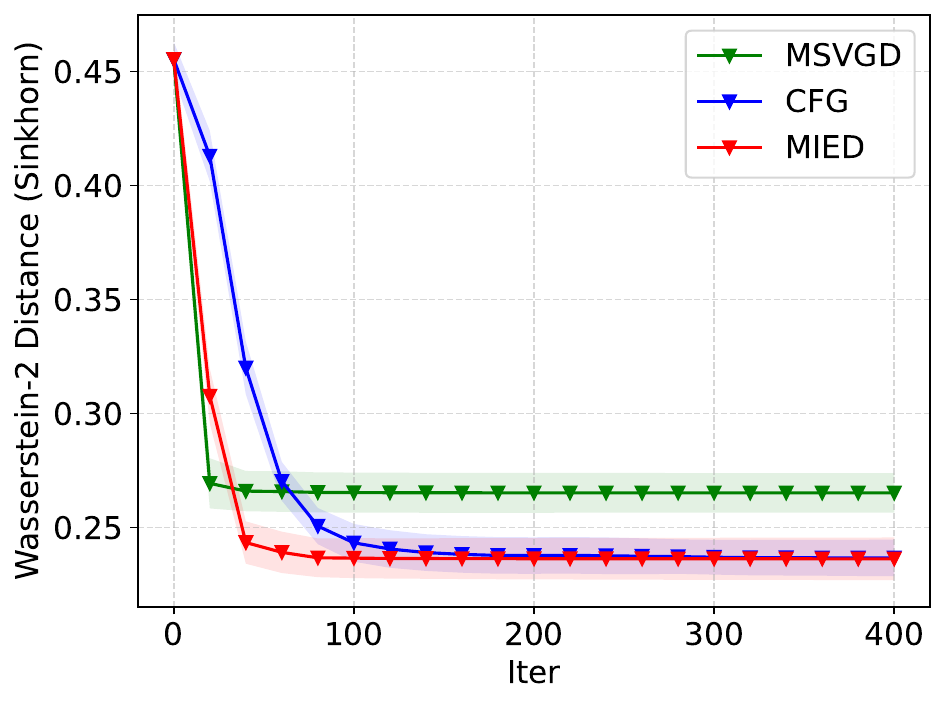}\\
      \includegraphics[width=.95\columnwidth]{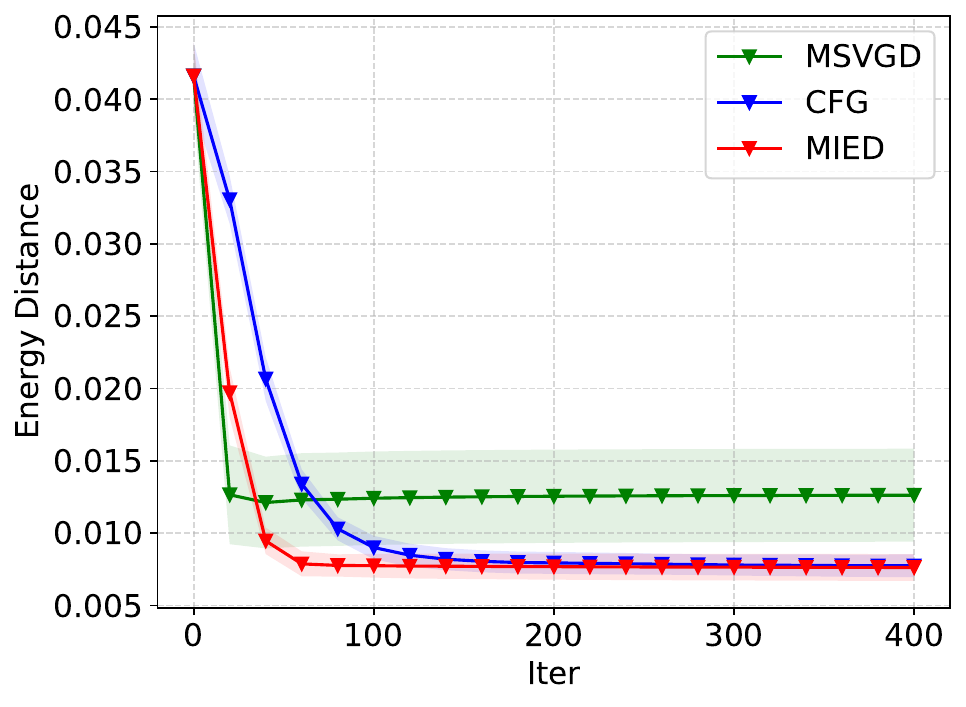}
\end{minipage}
\caption{\textbf{Left}: CFG sampled particles at different numbers of iterations on constrained domains (ring, cardioid, double-moon, block). \textbf{Right}: The convergence curves of MSVGD, CFG and MIED on the block constraint.}
\label{fig:toy_four}
\end{figure}

We first conduct 2-D experiments to test the effectiveness of CFG on sampling from truncated Gaussian distributions within various constrained domains, including ring-shaped, cardioid-shaped and double-moon-shaped domains. We covered non-convex domains, including unconnected domains.
We use 1000 particles for all domains. 
From the left of Figure \ref{fig:toy_four} we can see that the particles quickly converge to the target distribution without escaping the constrained domain.
Comparison to MIED (via various distributional metrics) on the first three domains are deferred to Appendix \ref{appendix:toy_experiments}, where CFG provides better approximation in most cases.

To compare with MSVGD and MIED, we additionally conduct the experiment of sampling from truncated gaussian mixture distribution within the block-shaped domain. The initial distribution is the uniform distribution. The right of Figure \ref{fig:toy_four} shows that our method outperformed MSVGD and achieved comparable results to MIED in terms of Wasserstein-2 distance and energy distance.

\subsection{Bayesian Lasso}

The Lasso method is broadly used in model selection to avoid overfitting by imposing a penalty term on model parameters. 
\citet{park2008bay} introduced a Bayesian alternative, named Bayesian Lasso, that replaces the plenty term with the double exponential prior distribution $p_{\textrm{prior}}(\beta)\propto \exp(-\lambda\|\beta\|_1)$. 
\citet{lan2014sph} then proposed Spherical HMC method, which introduced more flexibility in choosing priors with explicit $\ell_q$-norm constraints: $p_{\textrm{prior}}(\beta)\propto p(\beta)\textbf{1}_{\{\|\beta\|_q\le r\}}$. For $q=1$, it corresponds to the Lasso method. For more general case when $q>1$, it is called Bayesian Bridge regression. 

Following \citet{lan2014sph}, we choose the prior to be the truncated Gaussian distribution, where $p(\beta)=\mathcal{N}(0,\sigma^2\mathbf{I})$. This leads to the Bayesian regularized linear regression model \citep{lan2014sph}:

\begin{equation}
    \begin{aligned}
    y|X,\beta,\sigma^2 \sim \mathcal{N}(X\beta,\sigma^2 \mathbf{I}), \quad \beta|\sigma^2 \sim \mathcal{N}(0,\sigma^2 \mathbf{I})\mathbf{1}(\|\beta \|_{q}\leq r) 
    \end{aligned}
\end{equation}

The posterior distribution for $\beta$ is $\beta|y,X,\sigma^2 \propto \mathcal{N}(\beta^*,\sigma^2(X^TX+\mathbf{I})^{-1})\mathbf{1}(\|\beta \|_{q}\leq r)$, where $\beta^*=(X^TX+\mathbf{I})^{-1}X^Ty$.
This posterior has an inequality constraint and our method can apply.

\subsubsection{Synthetic Dataset}
\label{chap:synth}

We first generate a 20-dimensional dataset $(X,y)$, where $X\in \R^{1000\times 20}$, $y\in\R^{1000}$ and $y=X\beta_{true}+\epsilon$, where $\epsilon\sim \mathcal{N}(0, 25\mathbf{I})$. We set the true regression coefficients to be $\beta_{true}=(10,\dots,10,0,\dots,0)$, where the first half of components is tens and the second half is zeros.
Let $\hat{\beta}^{OLS}$ denote the estimates obtained by ordinary least squares (OLS) regression.
We follow \citet{park2008bay} and set $r=\|\hat{\beta}^{OLS}\|_1$.

We compare our method on Wasserstein-2 distance and energy distance
with Spherical HMC and MIED using different number of particles. The ground truth is obtained via rejection sampling using 100,000 posterior samples. 
We use the $h_0(dN)^{-\frac{1}{3}}$ scheme to adapt the bandwidth to the number of particles $N$ (see Appendix \ref{appendix:edgewidth_selection}), where we choose $h_0 d^{-\frac{1}{3}}=0.1$. 
From Figure \ref{fig:lasso1}, we see that CFG performs the best in both metrics when $N$ is small.
This indicates the sample efficiency of particle-based variational inference for constrained sampling, which aligns with the findings in \citet{Liu2016SVGD} for unconstrained sampling.
All methods provide similar results when $N$ is large.

It is worth noting that as a kernel-based method, the time complexity of MIED scales quadratically as $N$ increases, while functional gradient methods like CFG scale linearly.
Meanwhile, we observed in our experiments that MIED tends to require more iterations to converge when the number of particles increases.
These issues add to the overall time cost for MIED to achieve accurate approximation, especially when a large $N$ is used (see Figure \ref{fig:time-curve} in Appendix \ref{appendix:bayesian_lasso}).
Similar issues of SVGD are also stated in \cite{dong2023particlebased}. Please refer to Appendix \ref{appendix:bayesian_lasso} for detailed information.

\begin{figure}
    \begin{subfigure}
    \centering
        \includegraphics[width=0.49\linewidth]{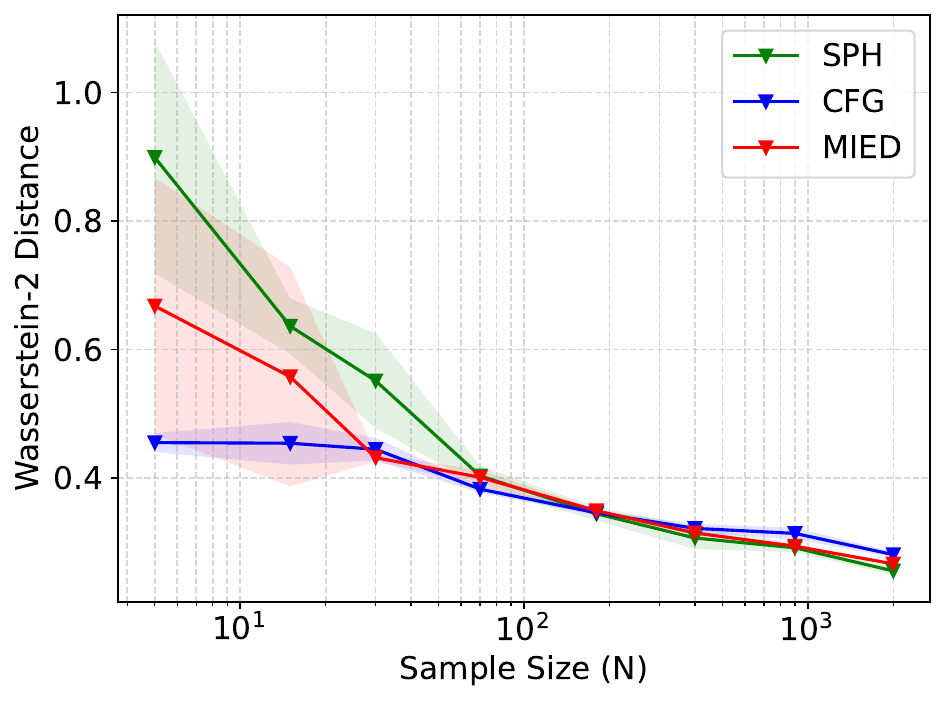}
    \end{subfigure}
    \begin{subfigure}
    \centering
        \includegraphics[width=0.49\linewidth]{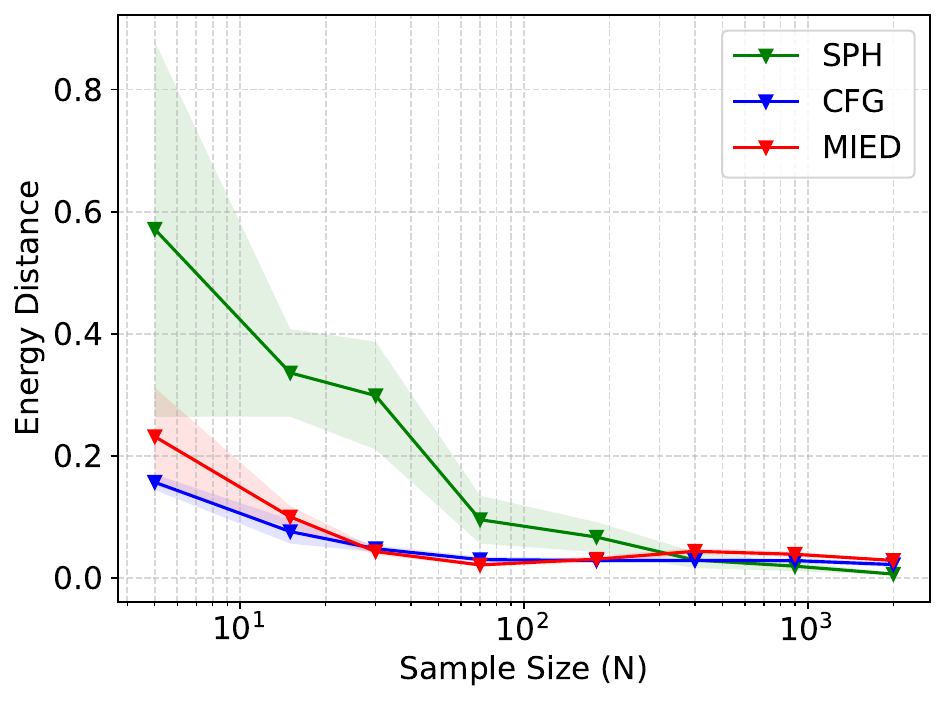}
    \end{subfigure}
    \caption{\textbf{Left}: Wasserstein-2 distance of SPH, CFG and MIED versus the number of particles, \textbf{Right}: Energy distance of SPH, CFG and MIED versus the number of particles. Both on a synthetic dataset.}
    \label{fig:lasso1}
\end{figure}

\subsubsection{Real Dataset}

Following \citet{lan2014sph}, we also evaluate our method using the diabetes dataset discussed in \citet{park2008bay}.
We compare the posterior median estimates given by Spherical HMC, CFG and MIED for the Lasso regression model ($q=1$) and the Bridge regression model ($q=1.2$), as the shrinkage factor $s:=r/\|\hat{\beta}^{OLS}\|_1$ varies from 0 to 1.
We use 5000 particles for CFG and MIED.
From Figure \ref{fig:dieb1} we can see that our method aligns well with Spherical HMC and MIED, which shows the effectiveness of our method.

\begin{figure}
    \begin{subfigure}
    \centering
        \includegraphics[width=0.525\linewidth]{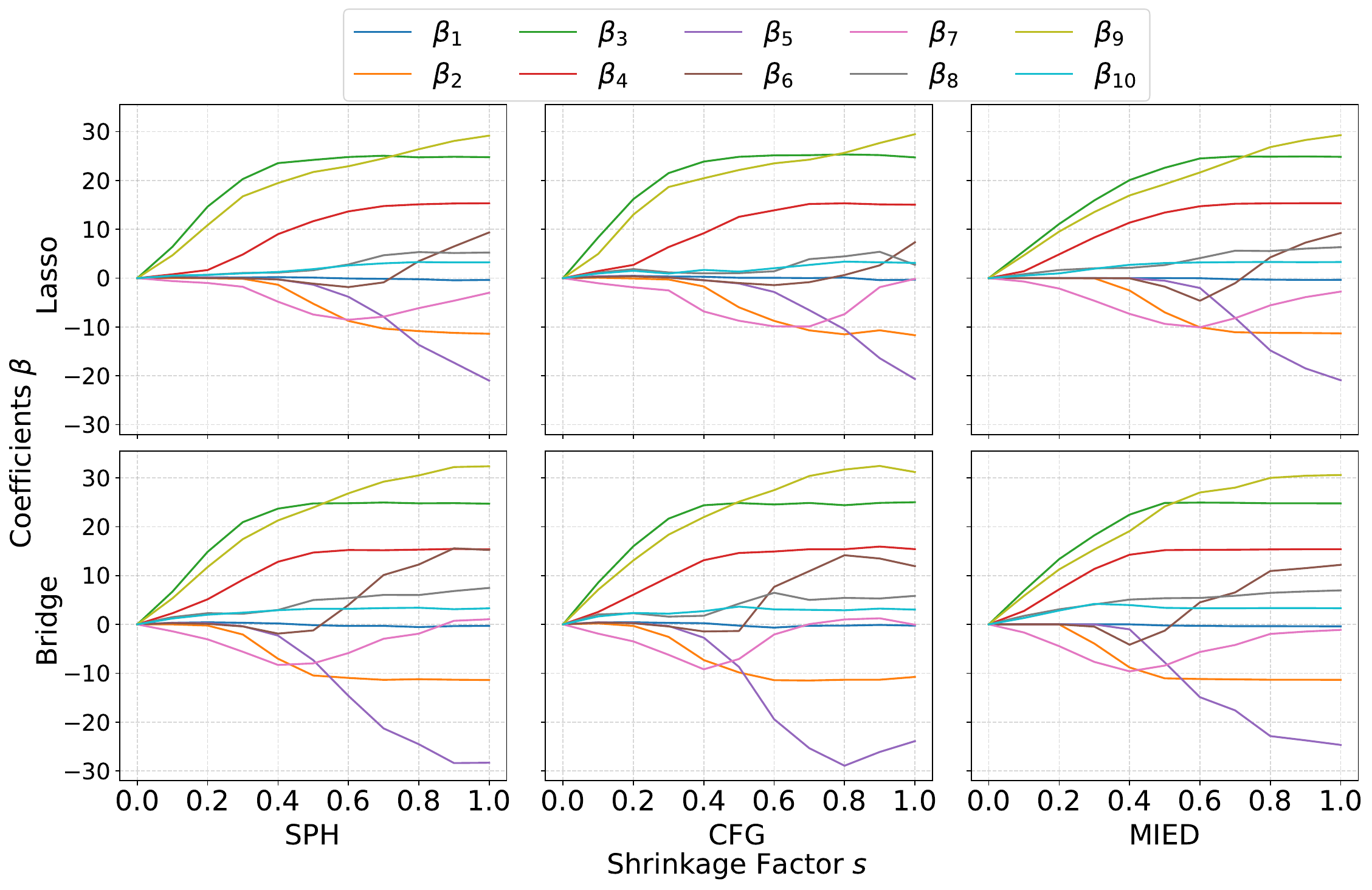}
    \end{subfigure}
    \begin{subfigure}
    \centering
        \includegraphics[width=0.455\linewidth]{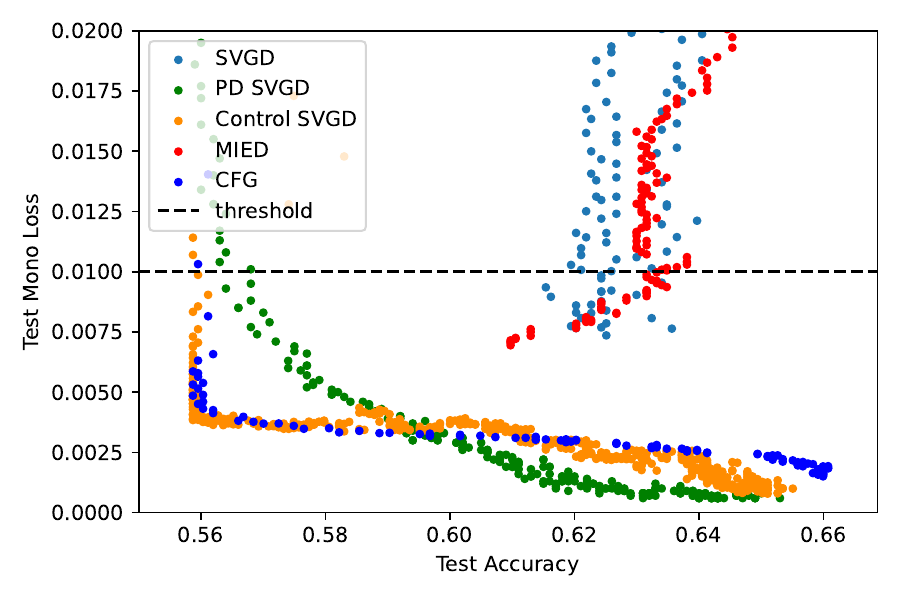}
    \end{subfigure}
    
    \caption{\textbf{Left}: Bayesian Lasso ($q=1$) using Spherical HMC (upper left), CFG (upper middle) and MIED (upper right). Bayesian Bridge Regression ($q=1.2$) using Spherical HMC (lower left) CFG (upper middle) and MIED (upper right). \textbf{Right}: Results of monotonic Bayesian neural network with $\epsilon=0.01$. Only the portion below $0.02$ is shown on the y-axis to better display the performance of models satisfying constraint.}
    \label{fig:dieb1}
\end{figure}

\subsection{Monotonic Bayesian Neural Network}

We use the COMPAS dataset following the setting in \citet{liu2021sampling}.
Given a Bayesian neural network $\hat{y}(\cdot,\theta)$, define the constraint function $g(\theta)=\ell_{\text{mono}}(\theta)-\epsilon$ with $\ell_{\text{mono}}(\theta)=\E_{x\sim\mathcal{D}} [\|(-\partial_{x_{\text{mono}}} \hat{y}(x;\theta))_+\|_1]$. 
Here, $x_{\text{mono}}$ denotes the subset of features to which the output should be monotonic.
Note that in \citet{liu2021sampling}, the constraint is only in the sense of expectation, i.e., $\E_{p}[g(\theta)]\leq 0$.
In the context of monotonic neural network \citep{karpf1991inductive, liu2020certified}, however, the monotonicity constraint is $g(\theta)\leq 0$, which defines a domain with inequality constraints and thus our method can be applied.

We use two-layer ReLU neural network with $50$ hidden units.
With different threshold $\epsilon\in \{0.005, 0.01, 0.05\}$, we compare our CFG with MIED and two SVGD variants in \citet{liu2021sampling}: PD SVGD and Control SVGD (C-SVGD). 
The number of particles is $200$ and the results are shown in Table \ref{tab:bnn}. 
We observe that in all the tasks, our method outperforms the other two baselines in terms of both test accuracy and test likelihood.
``Ratio Out" denotes the proportion of particles outside $\Omega=\{g(\theta)\leq 0\}$ to the total number of particles.
All methods can successfully force almost all the particles into the domain.
We further plot in the right of Figure \ref{fig:dieb1} the averaged test monotonic loss $\ell_{\text{mono}}$ against test accuracy during the training process for $\epsilon=0.01$. For MIED, the test accuracy is higher when more particles are outside the domain, while lower when forcing the particles into the domain during training. 
Notably, Vanilla SVGD exhibits lower accuracy and higher monotonic loss, indicating the importance of constraint-sampling methods.

 Furthermore, our method can scale up to even higher dimensional real problems. We experimented on the COMPAS dataset using larger monotonic BNNs and on the 276-dimensional larger dataset Blog Feedback \cite{liu2020certified}. Please refer to Appendix \ref{appendix:monotonicbnn} for detailed information.

\begin{table*}[!ht]
\caption{Results of monotonic Bayesian neural network under different monotonicity threshold. The results are averaged from the last $10$ checkpoints for robustness.}
\label{tab:bnn}
\begin{center}
\setlength\tabcolsep{3.3pt}
\begin{footnotesize}
\begin{sc}
\scalebox{0.65}{
\begin{tabular}{c|cccc|cccc|cccc}
\toprule
& \multicolumn{4}{c|}{Test Acc} & \multicolumn{4}{c|}{Test NLL} & \multicolumn{4}{c}{Ratio Out (\%)}\\
{$\varepsilon$} & {PD-SVGD} & {C-SVGD} & {MIED} & {CFG}& {PD-SVGD} & {C-SVGD} & {MIED} & {CFG} & {PD-SVGD} & {C-SVGD} & {MIED} & {CFG} \\
\midrule
$0.05$ & $.647\pm .007$ & $.649\pm .002$ & $.596\pm .002$ & $\bm{.661\pm .000}$ & $.634\pm .002$ & $.633\pm .001$ & $.684\pm .000$ & $\bm{.632\pm .000}$ & $0.5$ & $0.0$ & $3.0$ & $0.0$\\

$0.01$ & $.645\pm .004$ & $.650\pm .002$ & $.590\pm .001$ & $\bm{.660\pm .001}$ & $.635\pm .001$ & $.634\pm .001$& $.678\pm .002$&$\bm{.632\pm .000}$ & $0.0$ & $0.0$ & $5.0$ &$0.0$\\

$0.005$ & $.645\pm .005$ & $.650\pm .002$ & $.586\pm .000$&$\bm{.659\pm .001}$ & $.635\pm .001$ & $.633\pm .002$ & $.676\pm .001$ &$\bm{.632\pm .000}$ & $0.0$ & $0.0$ & $6.0$ &$0.0$\\
\bottomrule
\end{tabular}
}
\end{sc}
\end{footnotesize}
\end{center}

\end{table*}

\section{Conclusion}

We proposed a new functional gradient ParVI method for constrained domain sampling, named CFG, which uses neural networks to design a piece-wise velocity field that satisfies a non-escaping boundary condition. We presented novel numerical strategies to deal with boundary integrals arising from the domain constraints. We showed that our method has TV distance convergence guarantee. Empirically, we demonstrated the effectiveness of our method on various machine learning tasks with inequality constraints.

\begin{ack}
This work was supported by National Natural Science Foundation of China (grant no. 12201014 and grant no. 12292983).
The research of Cheng Zhang was supported in part by National Engineering Laboratory for Big Data Analysis and Applications, the Key Laboratory of Mathematics and Its Applications (LMAM) and the Key Laboratory of Mathematical Economics and Quantitative Finance (LMEQF) of Peking University. 
The authors are grateful for the computational resources provided by the High-performance Computing Platform of Peking University.
The authors appreciate the anonymous NeurIPS reviewers for their constructive feedback.
\end{ack}

\bibliographystyle{nips_bib}
\bibliography{Bibliography}

\begin{thebibliography}{46}
\providecommand{\natexlab}[1]{#1}
\providecommand{\url}[1]{\texttt{#1}}
\expandafter\ifx\csname urlstyle\endcsname\relax
  \providecommand{\doi}[1]{doi: #1}\else
  \providecommand{\doi}{doi: \begingroup \urlstyle{rm}\Url}\fi

\bibitem[Ahn \& Chewi(2021)Ahn and Chewi]{ahn2021efficient}
Kwangjun Ahn and Sinho Chewi.
\newblock Efficient constrained sampling via the mirror-langevin algorithm.
\newblock In A.~Beygelzimer, Y.~Dauphin, P.~Liang, and J.~Wortman Vaughan (eds.), \emph{Advances in Neural Information Processing Systems}, 2021.
\newblock URL \url{https://openreview.net/forum?id=vh7qBSDZW3G}.

\bibitem[Alvarez-Melis et~al.(2022)Alvarez-Melis, Schiff, and Mroueh]{alvarez-melis2022optimizing}
David Alvarez-Melis, Yair Schiff, and Youssef Mroueh.
\newblock Optimizing functionals on the space of probabilities with input convex neural networks.
\newblock \emph{Transactions on Machine Learning Research}, 2022.
\newblock ISSN 2835-8856.
\newblock URL \url{https://openreview.net/forum?id=dpOYN7o8Jm}.

\bibitem[Blei et~al.(2003)Blei, Ng, and Jordan]{LDA}
D.~M. Blei, A.~Y. Ng, and M.~I. Jordan.
\newblock Latent dirichlet allocation.
\newblock \emph{Journal of Machine Learning Research}, 3(Jan):\penalty0 993--1022, 2003.
\newblock URL \url{https://jmlr.csail.mit.edu/papers/v3/blei03a.html}.

\bibitem[Blei et~al.(2016)Blei, Kucukelbir, and McAuliffe]{Blei2016VariationalIA}
David~M. Blei, Alp Kucukelbir, and Jon~D. McAuliffe.
\newblock Variational inference: A review for statisticians.
\newblock \emph{Journal of the American Statistical Association}, 112:\penalty0 859 -- 877, 2016.

\bibitem[Brosse et~al.(2017)Brosse, Durmus, Moulines, and Pereyra]{brosse2017sampling}
Nicolas Brosse, Alain Durmus, {\'E}ric Moulines, and Marcelo Pereyra.
\newblock Sampling from a log-concave distribution with compact support with proximal langevin monte carlo.
\newblock In \emph{Conference on learning theory}, pp.\  319--342. PMLR, 2017.

\bibitem[Brubaker et~al.(2012)Brubaker, Salzmann, and Urtasun]{Brubaker2012}
Marcus Brubaker, Mathieu Salzmann, and Raquel Urtasun.
\newblock A family of mcmc methods on implicitly defined manifolds.
\newblock In \emph{Artificial Intelligence and Statistics}, pp.\  161--172. PMLR, 2012.

\bibitem[Bubeck et~al.(2018)Bubeck, Eldan, and Lehec]{bubeck2018sampling}
S{\'e}bastien Bubeck, Ronen Eldan, and Joseph Lehec.
\newblock Sampling from a log-concave distribution with projected langevin monte carlo.
\newblock \emph{Discrete \& Computational Geometry}, 59:\penalty0 757--783, 2018.

\bibitem[Byrne \& Girolami(2013)Byrne and Girolami]{Byrne2013}
Simon Byrne and Mark Girolami.
\newblock Geodesic monte carlo on embedded manifolds.
\newblock \emph{Scandinavian Journal of Statistics}, 40\penalty0 (4):\penalty0 825--845, 2013.

\bibitem[Chen et~al.(2018)Chen, Zhang, Wang, Li, and Chen]{Chen2018unified}
Changyou Chen, Ruiyi Zhang, Wenlin Wang, Bai Li, and Liqun Chen.
\newblock A unified particle-optimization framework for scalable bayesian sampling.
\newblock \emph{ArXiv}, abs/1805.11659, 2018.

\bibitem[Chen et~al.(2014)Chen, Fox, and Guestrin]{SGHMC}
Tianqi Chen, Emily Fox, and Carlos Guestrin.
\newblock Stochastic gradient hamiltonian monte carlo.
\newblock In \emph{International Conference on Machine Learning}, pp.\  1683--1691, 2014.

\bibitem[Cheng et~al.(2023)Cheng, Zhang, Yu, and Zhang]{cheng2023gwg}
Z.~Cheng, S.~Zhang, L.~Yu, and C.~Zhang.
\newblock Particle-based variational inference with generalized wasserstein gradient flow.
\newblock \emph{Advances in Neural Information Processing Systems}, 2023.

\bibitem[Chewi et~al.(2022)Chewi, Erdogdu, Li, Shen, and Zhang]{chewi2022lmc}
S.~Chewi, M.A. Erdogdu, M.~Li, R.~Shen, and S.~Zhang.
\newblock Analysis of langevin monte carlo from poincar\'e to log-sobolev.
\newblock \emph{Proceedings of Thirty Fifth Conference on Learning Theory}, 178:\penalty0 1--2, 2022.

\bibitem[Chewi et~al.(2020)Chewi, Gouic, Lu, Maunu, and Rigollet]{Chewi2020chi-squared}
Sinho Chewi, Thibaut~Le Gouic, Chen Lu, Tyler Maunu, and Philippe Rigollet.
\newblock Svgd as a kernelized wasserstein gradient flow of the chi-squared divergence.
\newblock \emph{Advances in Neural Information Processing Systems}, 33, 2020.

\bibitem[Courtade.(2020)]{thomas2020bound}
Thomas~A. Courtade.
\newblock Bounds on the poincaré constant for convolution measures.
\newblock \emph{Annales de l’Institut Henri Poincaré - Probabilités et Statistiques}, 56:\penalty0 566--579, 2020.

\bibitem[di~Langosco et~al.(2021)di~Langosco, Fortuin, and Strathmann]{di2021neural}
Lauro~Langosco di~Langosco, Vincent Fortuin, and Heiko Strathmann.
\newblock Neural variational gradient descent.
\newblock \emph{arXiv preprint arXiv:2107.10731}, 2021.

\bibitem[Dong et~al.(2023)Dong, Wang, Yong, and Zhang]{dong2023particlebased}
Hanze Dong, Xi~Wang, LIN Yong, and Tong Zhang.
\newblock Particle-based variational inference with preconditioned functional gradient flow.
\newblock In \emph{The Eleventh International Conference on Learning Representations}, 2023.
\newblock URL \url{https://openreview.net/forum?id=6OphWWAE3cS}.

\bibitem[Duane et~al.(1987)Duane, Kennedy, Pendleton, and Roweth]{duane87}
S.~Duane, A.~D. Kennedy, B~J. Pendleton, and D.~Roweth.
\newblock {Hybrid Monte Carlo}.
\newblock \emph{Physics Letters B}, 195\penalty0 (2):\penalty0 216 -- 222, 1987.

\bibitem[Fan et~al.(2022)Fan, Zhang, Taghvaei, and Chen]{fan2022variational}
Jiaojiao Fan, Qinsheng Zhang, Amirhossein Taghvaei, and Yongxin Chen.
\newblock Variational wasserstein gradient flow.
\newblock In \emph{International Conference on Machine Learning}, pp.\  6185--6215. PMLR, 2022.

\bibitem[Federer(2014)]{Federer14}
H.~Federer.
\newblock \emph{Geometric measure theory}.
\newblock Springer, 2014.

\bibitem[Grathwohl et~al.(2020)Grathwohl, Wang, Jacobsen, Duvenaud, and Zemel]{grathwohl2020}
Will Grathwohl, Kuan-Chieh Wang, J{\"o}rn-Henrik Jacobsen, David Duvenaud, and Richard Zemel.
\newblock Learning the stein discrepancy for training and evaluating energy-based models without sampling.
\newblock In \emph{International Conference on Machine Learning}, pp.\  3732--3747. PMLR, 2020.

\bibitem[Hu et~al.(2018)Hu, Chen, Sun, Bai, Ye, and Cheng]{Hu18}
Tianyang Hu, Zixiang Chen, Hanxi Sun, Jincheng Bai, Mao Ye, and Guang Cheng.
\newblock Stein neural sampler.
\newblock \emph{arXiv preprint arXiv:1810.03545}, 2018.

\bibitem[Jordan et~al.(1999)Jordan, Ghahramani, Jaakkola, and Saul]{Jordan1999AnIT}
Michael~I. Jordan, Zoubin Ghahramani, T.~Jaakkola, and Lawrence~K. Saul.
\newblock An introduction to variational methods for graphical models.
\newblock \emph{Machine Learning}, 37:\penalty0 183--233, 1999.

\bibitem[Jordan et~al.(1998)Jordan, Kinderlehrer, and Otto]{jordan1998variational}
Richard Jordan, David Kinderlehrer, and Felix Otto.
\newblock The variational formulation of the fokker--planck equation.
\newblock \emph{SIAM journal on mathematical analysis}, 29\penalty0 (1):\penalty0 1--17, 1998.

\bibitem[Karpf(1991)]{karpf1991inductive}
J{\o}rgen Karpf.
\newblock Inductive modelling in law: example based expert systems in administrative law.
\newblock In \emph{Proceedings of the 3rd international conference on Artificial intelligence and law}, pp.\  297--306, 1991.

\bibitem[Lan et~al.(2014)Lan, Zhou, and Shahbaba]{lan2014sph}
S.~Lan, B.~Zhou, and B.~Shahbaba.
\newblock Spherical hamiltonian monte carlo for constrained target distributions.
\newblock \emph{The 31st International Conference on Machine Learning}, 32:\penalty0 629--637, 2014.

\bibitem[Lewis et~al.(2012)Lewis, Nagarajan, and Palsson]{Lewis2012}
N.~E. Lewis, H.~Nagarajan, and B.~O. Palsson.
\newblock Constraining the metabolic genotype-phenotype relationship using a phylogeny of in silico methods.
\newblock \emph{Nature Reviews Microbiology}, 10\penalty0 (4):\penalty0 291--305, 2012.

\bibitem[Li et~al.(2022)Li, Liu, Korba, Yurochkin, and Solomon]{li2022sampling}
Lingxiao Li, Qiang Liu, Anna Korba, Mikhail Yurochkin, and Justin Solomon.
\newblock Sampling with mollified interaction energy descent.
\newblock \emph{arXiv preprint arXiv:2210.13400}, 2022.

\bibitem[Liu \& Zhu(2018)Liu and Zhu]{Liu2018}
Chang Liu and Jun Zhu.
\newblock Riemannian stein variational gradient descent for {B}ayesian inference.
\newblock In \emph{Proceedings of the AAAI Conference on Artificial Intelligence}, volume~32, 2018.

\bibitem[Liu et~al.(2019)Liu, Zhuo, Cheng, Zhang, Zhu, and Carin]{Liu2019Understanding}
Chang Liu, Jingwei Zhuo, Pengyu Cheng, Ruiyi Zhang, Jun Zhu, and Lawrence Carin.
\newblock Understanding and accelerating particle-based variational inference.
\newblock \emph{International Conference on Machine Learning. PMLR}, pp. 4082-4092, 2019.

\bibitem[Liu(2017)]{Liu2017SVGF}
Qiang Liu.
\newblock Stein variational gradient descent as gradient flow.
\newblock \emph{Advances in neural information processing systems}, 30, 2017.

\bibitem[Liu \& Wang(2016)Liu and Wang]{Liu2016SVGD}
Qiang Liu and Dilin Wang.
\newblock Stein variational gradient descent: A general purpose bayesian inference algorithm.
\newblock \emph{Advances in neural information processing systems}, 29, 2016.

\bibitem[Liu et~al.(2020)Liu, Han, Zhang, and Liu]{liu2020certified}
Xingchao Liu, Xing Han, Na~Zhang, and Qiang Liu.
\newblock Certified monotonic neural networks.
\newblock \emph{Advances in Neural Information Processing Systems}, 33:\penalty0 15427--15438, 2020.

\bibitem[Liu et~al.(2021)Liu, Tong, and Liu]{liu2021sampling}
Xingchao Liu, Xin Tong, and Qiang Liu.
\newblock Sampling with trusthworthy constraints: A variational gradient framework.
\newblock \emph{Advances in Neural Information Processing Systems}, 34:\penalty0 23557--23568, 2021.

\bibitem[Morris(2002)]{Morris2002}
B.~J. Morris.
\newblock Improved bounds for sampling contingency tables.
\newblock \emph{Random Structures \& Algorithms}, 21\penalty0 (2):\penalty0 135--146, 2002.

\bibitem[Neal(2011)]{Neal2011-yo}
Radford Neal.
\newblock {MCMC} using hamiltonian dynamics.
\newblock In S~Brooks, A~Gelman, G~Jones, and XL~Meng (eds.), \emph{Handbook of Markov Chain Monte Carlo}, Chapman \& Hall/CRC Handbooks of Modern Statistical Methods. Taylor \& Francis, 2011.
\newblock ISBN 9781420079425.
\newblock URL \url{http://books.google.com/books?id=qfRsAIKZ4rIC}.

\bibitem[Park \& Casella(2008)Park and Casella]{park2008bay}
T.~Park and G.~Casella.
\newblock The bayesian lasso.
\newblock \emph{Journal of the American Statistical Association}, 103(482):\penalty0 681–--686, 2008.

\bibitem[Pilipenko(2014)]{pilipenko2014introduction}
Andrey Pilipenko.
\newblock \emph{An introduction to stochastic differential equations with reflection}, volume~1.
\newblock Universit{\"a}tsverlag Potsdam, 2014.

\bibitem[Robert \& Stramer(2002)Robert and Stramer]{MALA02}
G.~O. Robert and O.~Stramer.
\newblock Langevin diffusions and metropolis-hastings algorithms.
\newblock \emph{Methodology and Computing in Applied Probability}, 4:\penalty0 337--357, 2002.

\bibitem[Salim \& Richt{\'a}rik(2020)Salim and Richt{\'a}rik]{salim2020primal}
Adil Salim and Peter Richt{\'a}rik.
\newblock Primal dual interpretation of the proximal stochastic gradient langevin algorithm.
\newblock \emph{Advances in Neural Information Processing Systems}, 33:\penalty0 3786--3796, 2020.

\bibitem[Shi et~al.(2022)Shi, Liu, and Mackey]{shi2022sampling}
Jiaxin Shi, Chang Liu, and Lester Mackey.
\newblock Sampling with mirrored stein operators.
\newblock In \emph{International Conference on Learning Representations}, 2022.
\newblock URL \url{https://openreview.net/forum?id=eMudnJsb1T5}.

\bibitem[Stein \& Shakarchi(2005)Stein and Shakarchi]{elias2005real}
Elias~M. Stein and Rami Shakarchi.
\newblock \emph{Real Analysis}.
\newblock Princeton University press, 2005.

\bibitem[Thiele et~al.(2013)Thiele, Swainston, Fleming, Hoppe, Sahoo, Aurich, Haraldsdottir, Mo, Rolfsson, Stobbe, and et~al]{Thiele2013}
I.~Thiele, N.~Swainston, R.~M. Fleming, A.~Hoppe, S.~Sahoo, M.~K. Aurich, H.~Haraldsdottir, M.~L. Mo, O.~Rolfsson, M.~D. Stobbe, and et~al.
\newblock A community-driven global reconstruction of human metabolism.
\newblock \emph{Nature Biotechnology}, 31\penalty0 (5):\penalty0 419--425, 2013.

\bibitem[Wainwright \& Jordan(2008)Wainwright and Jordan]{Wainwright08}
M.~J. Wainwright and M.~I. Jordan.
\newblock Graphical models, exponential families, and variational inference.
\newblock \emph{Foundations and Trends in Maching Learning}, 1\penalty0 (1-2):\penalty0 1--305, 2008.

\bibitem[Welling \& Teh(2011)Welling and Teh]{SGLD}
Max Welling and Yee~W Teh.
\newblock Bayesian learning via stochastic gradient langevin dynamics.
\newblock In \emph{International Conference on Machine Learning}, pp.\  681--688, 2011.

\bibitem[Zhang et~al.(2020)Zhang, Peyr\'{e}, Fadili, and Pereyra]{Zhang2020}
Kelvin~Shuangjian Zhang, Gabriel Peyr\'{e}, Jalal Fadili, and Marcelo Pereyra.
\newblock Wasserstein control of mirror langevin monte carlo.
\newblock In \emph{Conference on Learning Theory}, pp.\  3814--3841. PMLR, 2020.

\bibitem[Zhang et~al.(2022)Zhang, Liu, and Tong]{zhang2022osvgd}
Ruqi Zhang, Qiang Liu, and Xin~T. Tong.
\newblock Sampling in constrained domains with orthogonal-space variational gradient descent.
\newblock \emph{Advances in Neural Information Processing Systems}, 2022.

\end{thebibliography}


\newpage
\appendix

\section{Estimation of Boundary Integral}\label{app:sec:boundary}

The boundary integral $\int_{\partial\Omega}pv \cdot \vec{\vn} \dif S$ is a crucial factor that distinguishes ParVI on constrained domain from unconstrained case, due to its computational intractability in general.
We derive a heuristic band-wise approximation \eqref{bandwise} and thus the remaining issue is to identify which particle is in the band-wise area $\{d(x,\partial\Omega) \leq h\}$.

\begin{proposition}
    For any $h>0$, the sufficient condition for $\{x\in\Omega:d(x,\partial\Omega) \leq h\}$ is 
    \begin{equation}\label{app:eq:bandwise}
        \left\{
        \begin{array}{l}
            g(x)\le 0, \\
            g\left(x+h\frac{\nabla g(x)}{\|\nabla g(x)\|}\right)\ge 0.
        \end{array}
        \right.
    \end{equation} 
\end{proposition}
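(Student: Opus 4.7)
The plan is to verify the two desired conclusions separately: that $x \in \Omega$ and that $d(x, \partial\Omega) \leq h$. The first is immediate from the first inequality in \eqref{app:eq:bandwise}, since $\Omega = \{x : g(x) \le 0\}$ by definition. So the entire content of the proposition lies in the distance bound, and the argument will rest on the intermediate value theorem applied to $g$ along a carefully chosen line segment.

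Concretely, I would define the continuous scalar function $\varphi : [0, h] \to \R$ by
\begin{equation}
\varphi(t) := g\!\left(x + t\,\frac{\nabla g(x)}{\|\nabla g(x)\|}\right),
\end{equation}
which is well-defined since $\nabla g(x) \neq 0$ is implicit in the statement (otherwise the probe direction is undefined; in practice the active particles satisfy $\|\nabla g\| > 0$, as already assumed in the continuous-time analysis). By hypothesis, $\varphi(0) = g(x) \leq 0$ and $\varphi(h) = g(x + h\nabla g(x)/\|\nabla g(x)\|) \geq 0$. Continuity of $g$ (from the standing assumption that $g$ is continuously differentiable) makes $\varphi$ continuous, so by the intermediate value theorem there exists $t^\star \in [0, h]$ with $\varphi(t^\star) = 0$.

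The point $y^\star := x + t^\star \nabla g(x)/\|\nabla g(x)\|$ then satisfies $g(y^\star) = 0$, hence $y^\star \in \partial\Omega$ (assuming, as is standard for a regular sublevel set, that the zero set of $g$ coincides with the topological boundary of $\Omega$). The Euclidean distance from $x$ to $y^\star$ equals $t^\star \|\nabla g(x)/\|\nabla g(x)\|\| = t^\star \leq h$, and therefore
\begin{equation}
d(x, \partial\Omega) \;\leq\; \|x - y^\star\| \;=\; t^\star \;\leq\; h,
\end{equation}
which is the claimed inclusion.

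The only subtlety — and the one step worth stating carefully rather than waving through — is the identification of $\{g = 0\}$ with $\partial\Omega$. Under the continuous differentiability of $g$ and the fact that $\|\nabla g\| > 0$ on the zero set (used elsewhere in the paper's analysis, e.g.\ near Theorem~\ref{thm:push}), the zero set is a smooth hypersurface equal to $\partial\Omega$, so no pathology arises. Everything else is a one-line application of the intermediate value theorem; there is no genuine obstacle.
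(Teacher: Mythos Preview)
Your proposal is correct and follows essentially the same approach as the paper: define the scalar function along the normalized gradient direction, apply the intermediate value theorem to locate a zero of $g$ on the segment, and bound the distance to $\partial\Omega$ by the parameter value. Your additional remark on identifying $\{g=0\}$ with $\partial\Omega$ is a welcome clarification that the paper leaves implicit.
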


\begin{proof}
    For any $x\in\Omega$, define $f(r)=g\left(x+ r\frac{\nabla g(x)}{\|\nabla g(x)\|}\right)$.
    Then $f(\cdot)$ is continuous with $f(0)=g(x)\leq 0$.
    If $f(h)\geq 0$, by intermediate value theorem, there exists $h_0\in [0,h]$ such that $f(h_0)=0$. 
    And we have either $g(x)=0$ or $g(x_0)=0$ where $x_0=x+h_0\frac{\nabla g(x)}{\|\nabla g(x)\|}$.
    Therefore $x\in\partial\Omega$ or $x_0\in\partial\Omega$, implying $d(x,\partial\Omega) \leq h$ since $\|x-x_0\|=h_0\leq h$.
\end{proof}

In fact, if $g(x)\leq 0$, then $d(x,\partial\Omega) \leq h$ is equivalent to $\max_{\|y-x\|\leq h} g(y) \geq 0$.
When $h$ is small and $\nabla g(x)\neq 0$, the maximum point of LHS is approximately $y=x+h\frac{\nabla g(x)}{\|\nabla g(x)\|}$ since $\nabla g(x)$ is the steepest ascent direction.
Hence \eqref{app:eq:bandwise} is a reasonable proxy when $h$ is small.

\section{Proofs}
\label{appendix:tot_proof}

\subsection{Proof of Theorem \ref{thm:push}}\label{app:subsec:proof_push}

\begin{theorem}\label{proof:push}
Let $M_0=\max\{g(x_0)\}<\infty$, there exists a finite $t_0\le \frac{M_0}{\lambda C}$, such that $g(x_t)\le 0, \forall t\ge t_0$. 
\end{theorem}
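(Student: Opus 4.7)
The plan is to track the evolution of $g(x_t)$ along the flow and exploit the fact that the outward-region velocity $-\lambda \nabla g / \|\nabla g\|$ is, by construction, the steepest descent direction of $g$ scaled to magnitude exactly $\lambda$. Fix a realization $x_t$ of the ODE $\dot x_t = v_t(x_t)$ with $g(x_0) \le M_0$ (the case $g(x_0) \le 0$ is only potentially an issue at re-entry, handled below).

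First I would handle the entry time. Whenever $g(x_t) \ge 0$, the piecewise definition of $v_t$ gives
\begin{equation*}
\frac{d}{dt} g(x_t) \;=\; \bigl\langle \nabla g(x_t),\, v_t(x_t)\bigr\rangle \;=\; -\lambda\, \frac{\|\nabla g(x_t)\|^2}{\|\nabla g(x_t)\|} \;=\; -\lambda\, \|\nabla g(x_t)\| \;\le\; -\lambda C,
\end{equation*}
using Assumption 1. Hence $t \mapsto g(x_t)$ is strictly decreasing at rate at least $\lambda C$ on the open set $\{t : g(x_t) > 0\}$. Integrating from $0$ until the first hitting time $\tau := \inf\{t \ge 0 : g(x_t) \le 0\}$ yields $0 \le g(x_0) - \lambda C\, \tau$, so $\tau \le g(x_0)/(\lambda C) \le M_0/(\lambda C)$. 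Taking the supremum over starting points $x_0 \in \supp(p_0)$ gives a uniform bound on the entry time.

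Next I would argue the non-escape property, i.e.\ that once $g(x_t) \le 0$ holds at some time it continues to hold. Suppose, for contradiction, there is a later instant $t_1 > \tau$ with $g(x_{t_1}) = 0$ and $g(x_t) > 0$ on some interval $(t_1, t_1 + \delta)$. On that interval the same computation above applies and gives $\frac{d}{dt}g(x_t) \le -\lambda C < 0$, so $g(x_t) < g(x_{t_1}) = 0$ on $(t_1, t_1+\delta)$, a contradiction. Thus $g(x_t) \le 0$ for every $t \ge \tau$. Choosing $t_0 := M_0/(\lambda C)$ makes the bound uniform and completes the proof.

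The main obstacle is that $v_t$ is discontinuous across $\partial\Omega$, so the ODE itself needs a careful interpretation at the boundary before the non-escape argument is entirely rigorous; one must either invoke a Carathéodory/Filippov solution concept or, equivalently, regularize $v_t$ by mollifying the indicator function and pass to the limit. In either interpretation the inward-pointing velocity on $\{g \ge 0\}$ makes $\{g \le 0\}$ forward-invariant, so the sliding/boundary behavior does not invalidate the conclusion and no modification of the final bound $t_0 \le M_0/(\lambda C)$ is needed.
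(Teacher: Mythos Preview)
Your proposal is correct and follows essentially the same route as the paper: compute $\frac{d}{dt}g(x_t)=-\lambda\|\nabla g(x_t)\|\le -\lambda C$ on $\{g\ge 0\}$, integrate to bound the entry time by $M_0/(\lambda C)$, and use a contradiction argument for forward invariance. Your treatment is in fact slightly more careful than the paper's, which does not address the discontinuity of $v_t$ across $\partial\Omega$ or the attendant Filippov/Carath\'eodory solution issue you raise.
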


\begin{proof}

It suffices to prove that for any $z_0$, there exists a finite $t_0\le \frac{g(z_0)}{\lambda C}$, such that $g(z_t)\le 0, \forall t\ge t_0$. 

Since
\begin{align*}
    \frac{\dif}{\dif t} g(z_t)
    &=(\nabla g)^{T} v\\
    &=(\nabla g)^{T} h_{net}\cdot \textbf{1}_{\{g<0\}}- \lambda \|\nabla g\|\cdot \textbf{1}_{\{g\ge0\}}
\end{align*}

Using assumption $ \|\nabla g\|>C $, we can state that there exists a finite $t_0\le \frac{g(z_0)}{\lambda C}$, such that $g(z_t)\le 0, \forall t\ge t_0$.

Firstly prove that there is a $t_0\le \frac{g(z_0)}{\lambda C}$ such that $g(z_{t_0})\le 0 $.

By contradiction, if such $t_0$ does not exist, we have $g(z_{t})> 0  $ for all $t\in [0,\frac{g(z_0)}{\lambda C}]$, then $\frac{\dif}{\dif t} g(z_t)=- \lambda \|\nabla g\|$, integrate both sides yields $g(z_t)-g(z_0)=- \int_{0}^{\frac{g(z_0)}{\lambda C}} \lambda \|\nabla g\| dt\le -g(z_0)$, then $g(z_t)\le 0$, contradiction.

Then prove that $g(z_t)\le 0, \forall t\ge t_0$.

If not, there exists a minimum $t>t_0$, such that $g(z_t)>0$, then $\frac{\dif}{\dif t} g(z_t)=- \lambda \|\nabla g\|<0$, which means for sufficiently small $\delta>0$, $g(z_{t-\delta})> 0$, contradiction.

\end{proof}

\subsection{Proof of Proposition \ref{prop:conv}}\label{app:subsec:proof_conv}

\begin{proposition}\label{proof:conv}
    Suppose that $p^*$ satisfies $\kappa$-PI and the data distribution $p_t$ is smooth. Denote $F(p_t,\hat{p}^*):= \int_{\Omega} p_t \|s_{\hat{p}^*}-s_{p_t}\|^2 \dif x$, then  
    the following bound holds for $t>t_0$

\begin{equation}
\text{TV}(p_t\|\hat{p}^*)\leq \sqrt{8(\kappa+\sigma^2) (F(p_t,\hat{p}^*))}+ \mathcal{O}(\sigma)
\end{equation}

\end{proposition}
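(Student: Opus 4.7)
The plan is to reduce the TV bound to a Poincaré inequality for the smoothed target $\hat{p}^*$, together with a boundary correction of size $\mathcal{O}(\sigma)$ coming from the support mismatch between $p_t$ and $\hat{p}^*$. I would first observe that $\hat{p}^* = p^* * \mathcal{N}(0,\sigma^2 I)$ inherits a Poincaré inequality on $\R^d$ with constant at most $\kappa+\sigma^2$. This follows from the subadditivity of Poincaré constants under convolution of independent measures: letting $X\sim p^*$ and $Z\sim\mathcal{N}(0,\sigma^2 I)$ be independent, and decomposing $\mathrm{Var}(f(X+Z))$ conditionally on $Z$, the $\kappa$-PI of $p^*$ bounds the conditional-in-$X$ piece and the $\sigma^2$-PI of the Gaussian bounds the variance-in-$Z$ piece, with Jensen applied to $\nabla\E_X[f(X+Z)\mid Z]$.

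Next I would link TV to a Hellinger/variance quantity and apply this PI. Using $\text{TV}(p_t,\hat{p}^*)\leq H(p_t,\hat{p}^*)$ and setting $f = \sqrt{p_t/\hat{p}^*}$, the Bhattacharyya identity gives $\mathrm{Var}_{\hat{p}^*}(f) = 1 - B^2 \geq 1 - B = H^2/2$. A short computation with $\nabla\sqrt{r} = \tfrac{1}{2}\sqrt{r}\,\nabla\log r$ applied to $r = p_t/\hat{p}^*$ yields $\E_{\hat{p}^*}[\|\nabla f\|^2] = \tfrac{1}{4}F(p_t,\hat{p}^*)$. Chaining these estimates with the $(\kappa+\sigma^2)$-PI of $\hat{p}^*$ gives
\begin{equation*}
\text{TV}(p_t,\hat{p}^*)^2 \;\leq\; H^2 \;\leq\; \tfrac{1}{2}(\kappa+\sigma^2)\,F(p_t,\hat{p}^*) \;\leq\; 8(\kappa+\sigma^2)\,F(p_t,\hat{p}^*),
\end{equation*}
with plenty of slack absorbing the looser explicit constant in the statement.

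Finally I would handle the $\mathcal{O}(\sigma)$ term coming from the support mismatch. For $t>t_0$, Theorem \ref{thm:push} places $p_t$ inside $\Omega$, whereas $\hat{p}^*$ is strictly positive on all of $\R^d$. Decomposing
\begin{equation*}
\text{TV}(p_t,\hat{p}^*) \;\leq\; \text{TV}(p_t,\tilde{p}^*) + \text{TV}(\tilde{p}^*,\hat{p}^*), \qquad \tilde{p}^* := \hat{p}^*|_\Omega/\hat{p}^*(\Omega),
\end{equation*}
the second summand equals $\hat{p}^*(\Omega^c)$, the Gaussian mass leaking outside $\Omega$. Since $\mathrm{supp}(p^*)\subset\Omega$, only a $\sigma$-neighborhood of $\partial\Omega$ contributes, giving $\hat{p}^*(\Omega^c) = \mathcal{O}(\sigma)$ under mild regularity of $\partial\Omega$, while the first summand is the PI-based bound above applied on $\Omega$.

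The hard part will be the rigorous application of PI to $f = \sqrt{p_t/\hat{p}^*}$, which is discontinuous across $\partial\Omega$ (since $p_t$ vanishes outside $\Omega$ while $\hat{p}^*$ does not), so that $\nabla f$ has a surface-supported singular part not captured by the interior integral defining $F$. My plan is to mollify $p_t$ near $\partial\Omega$, apply PI to the mollified function, and pass to the limit, folding the resulting approximation error into $\mathcal{O}(\sigma)$ via a quantitative Gaussian surface estimate; an alternative route is to transfer the $(\kappa+\sigma^2)$-PI from $\hat{p}^*$ to its restriction on $\Omega$ and work there directly. Either way, controlling this boundary contribution sharply is the delicate step.
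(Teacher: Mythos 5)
Your proposal is correct and shares its skeleton with the paper's proof: both transfer the Poincar\'e inequality to $\hat{p}^*=p^* * \mathcal{N}(0,\sigma^2 I)$ with constant $\kappa+\sigma^2$ (the paper cites \citet{thomas2020bound} where you sketch the conditional-variance argument), both apply it to the test function $f=\sqrt{p_t/\hat{p}^*}$, and both rest on the identities $\mathrm{var}_{\hat{p}^*}(f)=1-b^2$ and $\E_{\hat{p}^*}\|\nabla f\|^2=\tfrac14 F(p_t,\hat{p}^*)$, where $b$ is the Bhattacharyya coefficient. Where you genuinely diverge is the link from TV to this variance. The paper uses the dual representation $\sup_{|f|\le 1}|\E_{\hat{p}^*}[f]-\E_{p_t}[f]|$, splits off the mass on $\{g>0\}$ as $\mathcal{O}(\sigma)$ via $L_1$ convergence of the Gaussian convolution, and runs a hand-rolled Cauchy--Schwarz factorization of $\int \mathbf{1}_{\{g\le 0\}}\hat{p}^*(\sqrt{p_t/\hat{p}^*}-1)(\sqrt{p_t/\hat{p}^*}+1)f$ to reach the bound $8(1-b)$. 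You instead invoke the standard inequality $\text{TV}\le H$ together with $H^2=2(1-b)$, which is shorter, gives constant $\tfrac12$ in place of $8$, and --- notably --- needs no $\mathcal{O}(\sigma)$ term at all: since $p_t(\Omega)=1$ for $t>t_0$, the affinity $b$ already accounts for the mass of $\hat{p}^*$ outside $\Omega$, so your second paragraph alone yields a bound strictly stronger than the stated one. Your third paragraph (triangle inequality through $\hat{p}^*|_\Omega/\hat{p}^*(\Omega)$) is therefore superfluous for this proposition; that kind of comparison only becomes necessary when passing from $\hat{p}^*$ to $p^*$ itself, as in the proof of Theorem \ref{thm:conv}.

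One caution on the step you call delicate. The boundary discontinuity of $f$ is real in general, but it is precisely what the hypothesis that $p_t$ is smooth is for: a smooth density supported in $\Omega$ vanishes continuously on $\partial\Omega$, so $f$ has no singular surface gradient and the interior integral $\tfrac14 F$ is the full Dirichlet energy; the paper's proof silently relies on the same fact when it writes $\int\hat{p}^*\|\nabla\sqrt{p_t/\hat{p}^*}\|^2$ as an integral over $\{g\le 0\}$. Your two fallback plans, however, would not work as stated: mollifying a genuine jump of height $\sqrt{p_t/\hat{p}^*}$ over a band of width $\delta$ produces gradient energy of order $\delta^{-1}\int_{\partial\Omega}p_t\,\mathrm{d}S$, which blows up rather than folding into $\mathcal{O}(\sigma)$; and a Poincar\'e inequality does not in general transfer from $\hat{p}^*$ to its restriction on $\Omega$ (restriction to a nonconvex domain can destroy PI entirely). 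So keep your main line, justify $f\in H^1$ directly from the smoothness assumption, and drop both fallbacks.
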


\begin{proof}

By equivalent expression of TV distance, it suffices to prove that for any function $f$ satisfying $|f|\le 1$, 

\begin{equation*}
    \left|\E_{\hat{p}^*} [f]-\E_{p_t} [f]\right|\leq \sqrt{8(\kappa+\sigma^2) F(p_t,\hat{p}^*)}+ \mathcal{O}(\sigma)
\end{equation*}

Note that for $t>t_0$, $p_t(\Omega)=1$
\begin{align*}
\left|\E_{\hat{p}^*} [f]-\E_{p_t} [f]\right|
&=\left|\int \textbf{1}_{\{g\le0\}} \hat{p}^*(x) f(x) dx-\int \textbf{1}_{\{g\le0\}} p_t(x) f(x) dx+\int \textbf{1}_{\{g>0\}} \hat{p}^*(x) f(x) dx\right|\\
&\leq \left|\int \textbf{1}_{\{g\le0\}} (\hat{p}^*(x)-p_t(x)) f(x) dx\right|+\left|\int \textbf{1}_{\{g>0\}} \hat{p}^*(x) f(x) dx\right|\\
&\leq \left|\int \textbf{1}_{\{g\le0\}} (\hat{p}^*(x)-p_t(x)) f(x) dx\right|+ \epsilon_2.
\end{align*}

where $\epsilon_2=\left|\int \textbf{1}_{\{g>0\}} (\hat{p}^*(x)-p^*(x)) f(x) dx\right|=\mathcal{O}(\sigma)$ by the $L_1$ convergence of Gaussian convolution \cite{elias2005real}. 

The first term can be bounded by

\begin{align*}
    \left(\int \textbf{1}_{\{g\le0\}} (\hat{p}^*(x)-p_t(x)) f(x) dx\right)^2
    &=\left(\int \textbf{1}_{\{g\le0\}}  \hat{p}^*(x)(\frac{p_t(x)}{\hat{p}^*(x)}-1)f(x)dx\right)^2\\
    &\leq \int \textbf{1}_{\{g\le0\}}   \hat{p}^*(x)(\sqrt{\frac{p_t(x)}{\hat{p}^*(x)}}-1)^2dx \cdot\int \textbf{1}_{\{g\le0\}}  \hat{p}^*(x)(\sqrt{\frac{p_t(x)}{\hat{p}^*(x)}}+1)^2f(y)^2dx\\
    &\leq 2\int \textbf{1}_{\{g\le0\}}   \hat{p}^*(x)(\sqrt{\frac{p_t(x)}{\hat{p}^*(x)}}-1)^2dx\cdot \int \textbf{1}_{\{g\le0\}}   \hat{p}^*(x)(\frac{p_t(x)}{\hat{p}^*(x)}+1)dx\\
    &\leq 4\int \textbf{1}_{\{g\le0\}}   \hat{p}^*(x)(\sqrt{\frac{p_t(x)}{\hat{p}^*(x)}}-1)^2dx\\
    &\leq 8  (1-\int \textbf{1}_{\{g\le0\}}   \sqrt{p_t(x)\hat{p}^*(x)}dx )\\
    &=8  (1-\int \sqrt{p_t(x)\hat{p}^*(x)}dx)\\    
    &=8(1-b)\ \mbox{ with }b=\int    \hat{p}^*(x) \sqrt{\frac{p_t(x)}{\hat{p}^*(x)}}dx\leq 1.    
    \end{align*}

Then note that 
\[
\text{var}_{\hat{p}^*}\sqrt{\frac{p_t(x)}{\hat{p}^*(x)}}=
\E_{\hat{p}^*}[\frac{p_t(x)}{\hat{p}^*(x)}]-b^2=1-b^2.
\]

From \cite{thomas2020bound}, we have the Poincaré coefficient of $\mathcal{N}(0,\sigma^2 I)$ is $\sigma^2$, and the Poincaré coefficient of $\hat{p}^*=p^* * \mathcal{N}(0,\sigma^2 I)$ is no larger than $\kappa+\sigma^2$.
  
Therefore, by the Poincaré inequality, we have
    \begin{align*}
    1-b\leq (1-b^2)=\text{var}_{\hat{p}^*}\sqrt{\frac{p_t(x)}{\hat{p}^*(x)}}&\leq (\kappa+\sigma^2)\int    \hat{p}^*(x)\left\|\nabla \sqrt{\frac{p_t(x)}{\hat{p}^*(x)}}\right\|^2dx\\
    &=(\kappa + \sigma^2)\int \textbf{1}_{\{g\le0\}}   p_t(x)\|s_{p_t}(x)-s_{\hat{p}^*}(x)\|^2 dx\\
    &=(\kappa + \sigma^2)F(p_t,\hat{p}^*)
\end{align*}

And 
\[
\left|\E_{\hat{p}^*} [f]-\E_{p_t} [f]\right|\leq \sqrt{8(\kappa+\sigma^2) F(p_t,\hat{p}^*)}+ \epsilon_2
\] with $\epsilon_2=\mathcal{O}(\sigma)$.

\end{proof}

\subsection{Proof of Theorem \ref{thm:conv}}\label{app:subsec:proof_thm_conv}

\begin{theorem}\label{proof:thm_conv}
    Following the same assumptions in Proposition \ref{prop:conv}. Suppose also that the KL divergence $\KL(p_{t_0}, p^*)<\infty$, 
    the following bound holds as $\sigma\to 0$

\begin{equation}
\min_{t\leq T}\text{TV}(p_t\|p^*)\le\mathcal{O}(T^{-\frac{1}{2}}+\epsilon^{\frac{1}{2}}).
\end{equation}

\end{theorem}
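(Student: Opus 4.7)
The plan is to use Proposition~\ref{prop:conv} as a bridge: that result bounds $\text{TV}(p_t\|\hat p^*)$ by $\sqrt{F(p_t,\hat p^*)}$ up to an $\mathcal{O}(\sigma)$ slack, so if I can show that the time-integrated Fisher-like quantity $\int_{t_0}^T F(p_t,\hat p^*)\,\dif t$ is of order $\mathcal{O}(1 + T\epsilon)$, then choosing the minimising $t$ and letting $\sigma\to 0$ will deliver the announced $T^{-1/2}+\epsilon^{1/2}$ rate.

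The first step is a standard dissipation identity along the flow. For $t\ge t_0$, Theorem~\ref{thm:push} gives $p_t(\Omega)=1$ and the piece-wise velocity reduces to $v_t=h_{net}(t)$ on $\Omega$; combining the continuity equation with the non-escape boundary condition $v_t\cdot\vec{\vn}\le 0$ on $\partial\Omega$ (plus smoothness of $p_t$, which makes the boundary contribution vanish) yields
$$\frac{\dif}{\dif t}\KL(p_t\|\hat p^*) = -\int_\Omega p_t \left\langle \nabla\log\tfrac{\hat p^*}{p_t},\, h_{net}(t)\right\rangle \dif x.$$
I would then write $\langle s,h\rangle = \|s\|^2 - \langle s, s-h\rangle$ with $s=\nabla\log(\hat p^*/p_t)$ and apply Cauchy--Schwarz and AM--GM to obtain
$$\frac{\dif}{\dif t}\KL(p_t\|\hat p^*) \le -\tfrac{1}{2}F(p_t,\hat p^*) + \tfrac{1}{2}\int_\Omega p_t\,\|h_{net}(t)-\nabla\log\tfrac{\hat p^*}{p_t}\|^2\dif x.$$
Assumption~\ref{assump:approx_error} controls the error term with $p^*$ in place of $\hat p^*$, and the extra discrepancy is $\|\nabla\log\hat p^* - \nabla\log p^*\|_{L^2(p_t)}^2$, which is $o_\sigma(1)$ by $L^1$-convergence of Gaussian-smoothed scores. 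Integrating from $t_0$ to $T$ and using $\KL(p_T\|\hat p^*)\ge 0$ and $\KL(p_{t_0}\|\hat p^*)\le \KL(p_{t_0}\|p^*) + o_\sigma(1)$ yields
$$\int_{t_0}^T F(p_t,\hat p^*)\,\dif t \;\le\; 2\KL(p_{t_0}\|p^*) + T\,\epsilon + T\cdot o_\sigma(1),$$
hence $\min_{t_0\le t\le T} F(p_t,\hat p^*) \le \frac{2\KL(p_{t_0}\|p^*)}{T-t_0} + \epsilon + o_\sigma(1)$. Substituting into Proposition~\ref{prop:conv} and using the triangle inequality $\text{TV}(p_t\|p^*)\le\text{TV}(p_t\|\hat p^*)+\text{TV}(\hat p^*\|p^*)$, where the second term is $\mathcal{O}(\sigma)$, then letting $\sigma\to 0$ produces the stated $\mathcal{O}(T^{-1/2}+\epsilon^{1/2})$ bound.

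The hard part will be the $\sigma\to 0$ limit. I need to verify that every $\sigma$-dependent remainder --- the score discrepancy $\|\nabla\log\hat p^*-\nabla\log p^*\|_{L^2(p_t)}^2$, the initial KL gap $\KL(p_{t_0}\|\hat p^*)-\KL(p_{t_0}\|p^*)$, and any residual boundary contribution stemming from the mismatch between $\supp(p_t)\subset\Omega$ and $\supp(\hat p^*)=\R^d$ --- vanishes uniformly over $t\in[t_0,T]$ before the limit is taken. This is why the argument must first close all estimates at fixed small $\sigma$ and only send $\sigma\to 0$ at the end, rather than working directly with $p^*$ (for which $\KL(p_t\|p^*)$ and $F(p_t,p^*)$ may be ill-posed near $\partial\Omega$).
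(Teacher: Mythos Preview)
Your proposal is correct and follows essentially the same route as the paper: compute the KL dissipation along the flow for $t\ge t_0$, split $\langle s,h\rangle$ via Young's inequality (the paper keeps a free parameter $\epsilon_0<1$ where you take $\epsilon_0=\tfrac12$), control the approximation error by Assumption~\ref{assump:approx_error} plus the score discrepancy $\|s_{\hat p^*}-s_{p^*}\|_{L^2(p_t)}^2=\mathcal{O}(\sigma)$, integrate in time to bound $\min_t F(p_t,\hat p^*)$, feed this into Proposition~\ref{prop:conv}, and finish with the TV triangle inequality and $\sigma\to 0$. Your caveats about the $\sigma$-remainders are well placed and, if anything, more careful than the paper, which invokes pointwise convergence of the smoothed score and $L^1$-convergence of the convolution without further uniformity discussion.
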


\begin{proof}

By equivalent expression of TV distance, it suffices to prove that for any function $f$ satisfying $|f|\le 1$, 

\begin{equation*}
    \min_{t\leq T}\left|\E_{p^*} [f]-\E_{p_t} [f]\right|\le\mathcal{O}(T^{-\frac{1}{2}}+\epsilon^{\frac{1}{2}})
\end{equation*}

We now only consider the time after $t_0$, which means that $p_t(\Omega)=1$.

First we state the relation between $\hat{p}^*=p^**\mathcal{N}(0,\sigma^2 \mathbf{I})$ and $p^*(x)$. 
When $\sigma\to 0$, by the pointwise convergence of the convolution \cite{elias2005real}, we have $\nabla \hat{p}^*(x)\to\nabla p^*(x)$ and $\hat{p}^*(x)\to p^*(x)$. 
Note that for $g(x)\le 0$, $\hat{p}^*(x)$ and $p^*(x)$ are strictly positive, thus $s_{\hat{p}^*}(x)\to s_{p^*}(x),$ as $\sigma\to 0$ for any $x\in\Omega$.

Note that $\hat{p}^*(x)$ is supported on $\R^d$, we have

\begin{align*}
    &\frac{\dif}{\dif t} \KL(p_t(x)\|\hat{p}^*(x))\\& = -\int p_t v_t^{T}(s_{\hat{p}^*}-s_{p_t}) \dif x  \\
    & = -\int p_t (h_{net}\cdot \textbf{1}_{\{g< 0\}}-\lambda\frac{\nabla g}{\|\nabla g\|}\cdot \textbf{1}_{\{g\ge 0\}})^{T}(s_{\hat{p}^*}-s_{p_t}) \dif x  \\
    & = -\int p_t (h_{net}\cdot \textbf{1}_{\{g\le0\}})^{T}(s_{\hat{p}^*}-s_{p_t}) \dif x+\int p_t ((h_{net}+\lambda\frac{\nabla g}{\|\nabla g\|})\cdot \textbf{1}_{\{g=0\}})^{T}(s_{\hat{p}^*}-s_{p_t}) \dif x \\
    & = -\int p_t \cdot \textbf{1}_{\{g\le0\}} 
    \|s_{\hat{p}^*}-s_{p_t}\|^2 \dif x-\int p_t \cdot \textbf{1}_{\{g\le0\}} (h_{net}-(s_{\hat{p}^*}-s_{p_t}))^{T}(s_{\hat{p}^*}-s_{p_t}) \dif x\\  
    & \le -\int p_t \cdot \textbf{1}_{\{g\le0\}} 
    \|s_{\hat{p}^*}-s_{p_t}\|^2 \dif x+\epsilon_0\int p_t \cdot \textbf{1}_{\{g\le0\}} 
    \|s_{\hat{p}^*}-s_{p_t}\|^2 \dif x+\frac{1}{4\epsilon_0}\int p_t \cdot \textbf{1}_{\{g\le0\}} \|h_{net}-(s_{\hat{p}^*}-s_{p_t})\|^2\dif x\\ 
    &\le -(1-\epsilon_0)\int p_t \cdot \textbf{1}_{\{g\le0\}} 
    \|s_{\hat{p}^*}-s_{p_t}\|^2 \dif x +\frac{1}{2\epsilon_0}\int p_t \cdot \textbf{1}_{\{g\le0\}} \|h_{net}-(s_{p^*}-s_{p_t})\|^2\dif x \\
    &\qquad +\frac{1}{2\epsilon_0}\int p_t \cdot \textbf{1}_{\{g\le0\}} \|s_{\hat{p}^*}-s_{p^*}\|^2\dif x
\end{align*}

For $\epsilon_0<1$, the second term is $\mathcal{O}(\epsilon)$, and the third term is $\mathcal{O}(\sigma)$.

Using the notation $F(p_t,\hat{p}^*):= \int p_t \cdot \textbf{1}_{\{g\le0\}} 
    \|s_{\hat{p}^*}-s_{p_t}\|^2 \dif x$, integrating both sides yields the following:

\begin{align*}
\int^T_{t_0} F(p_t,\hat{p}^*)dt\leq
\int^T_{0} F(p_t,\hat{p}^*)dt
\leq
\frac{1}{1-\epsilon_0}\KL(p_0,\hat{p}^*)+T(\mathcal{O}(\epsilon)+\mathcal{O}(\sigma)).
\end{align*}

So 
\begin{equation}\label{fisher}
    \min_{t_0\leq t\leq T} F(p_t,\hat{p}^*)
\leq \frac{2}{(1-\epsilon_0)T}\KL(p_0,\hat{p}^*)+\mathcal{O}(\epsilon)+\mathcal{O}(\sigma)=\mathcal{O}(T^{-1})+\mathcal{O}(\epsilon)+\mathcal{O}(\sigma)
\end{equation}

Plugging the result of $F(p_t,\hat{p}^*)$ (\ref{fisher}) in Proposition \ref{prop:conv} and let $\sigma\to 0$, we have 

\begin{align*}
\min_{t_0\leq t\leq T}\left|\E_{\hat{p}^*} [f]-\E_{p_t} [f]\right|\le\mathcal{O}(T^{-\frac{1}{2}}+\epsilon^{\frac{1}{2}}).
\end{align*}

Finally, by triangular inequality:

$\left|\E_{p^*} [f]-\E_{p_t} [f]\right| \leq \left|\E_{\hat{p}^*} [f]-\E_{p_t} [f]\right|+\left|\E_{\hat{p}^*} [f]-\E_{p^*} [f]\right|$

Using the result that Gaussian convolution converge to original $L_1$ function in $L_1$ 
\citep{elias2005real}, we have $\left|\E_{\hat{p}^*} [f]-\E_{p^*} [f]\right|\to 0$ when $ \sigma\to 0$ for any $p^*$. This means $\min_{t_0\leq t\leq T}\left|\E_{p^*} [f]-\E_{p_t} [f]\right|\le\mathcal{O}(T^{-\frac{1}{2}}+\epsilon^{\frac{1}{2}})$, the proof is complete.

\end{proof}

\section{Insights on Bandwidth Selection}\label{appendix:edgewidth_selection}

\paragraph{Error Analysis of Boundary Integral Estimation}

We first present the error analysis of band-wise approximation, based on which we derive a heuristic $\mathcal{O}((dN)^{-\frac{1}{3}})$ scheme for bandwidth selection when adjusting the number of particles.

Denote the approximation error $e=\frac{1}{mh}\sum_{j=1}^{n}v(x_j)^T \nabla g(x_j) / \|\nabla g(x_j)\|- \int_{\partial\Omega}pv \cdot \vec{\vn} \dif S$.
We claim that the mean square error $\E[e^2]=\mathcal{O}(h^2+\frac{1}{n})$.

The idea is to leverage bias-variance decomposition.

We estimate the bias of the band-wise approximation, using Lagrange's Mean Value Theorem

\begin{equation}
    \begin{aligned}
        A_1:=&\frac{1}{h}\int_{\tilde{S}_h} p(x) v\cdot\vec{\vn} \dif x-\int_{\partial\Omega}pv \cdot \vec{\vn} \dif S \\
        =&\frac{1}{h}\int_{0}^{h}\left[\int_{d(x,\partial\Omega)=l}p(x)v\cdot\vec{\vn}\dif S - \int_{\partial\Omega}p(x)v\cdot\vec{\vn}\dif S\right]  \dif l\\
        =&\mathcal{O}(\frac{1}{h}\int_{0}^{h}ldl) \\
        =&\mathcal{O}(h).
    \end{aligned}     
\end{equation}

Then we turn to bound variance. 
From \eqref{bandwise} we have $\frac{1}{h}\int_{\tilde{S}_h} p(x) v\cdot\vec{\vn} \dif x=\frac{p(x\in \tilde{S}_h)}{h}\int_{\tilde{S}_h} \tilde{p}(x) v(x)\cdot \vec{\vn} \dif x$.
And
\begin{equation}
    \begin{aligned}
    A_2:=&\frac{1}{mh}\sum_{j=1}^{n}v(x_j)^T \vec{\vn}(x_j)-\frac{p(x\in \tilde{S}_h)}{h}\int_{\tilde{S}_h} \tilde{p}(x) v(x)\cdot \vec{\vn} \dif x\\
    =&\left[\frac{n}{mh}\frac{1}{n}\sum_{j=1}^{n}v(x_j)^T \vec{\vn}(x_j)-\frac{n}{mh}\int_{\tilde{S}_h} \tilde{p}(x) v(x)\cdot \vec{\vn} \dif x\right]\\
    +&\left[\frac{n}{mh}\int_{\tilde{S}_h} \tilde{p}(x) v(x)\cdot \vec{\vn} \dif x-\frac{p(x\in \tilde{S}_h)}{h}\int_{\tilde{S}_h} \tilde{p}(x) v(x)\cdot \vec{\vn} \dif x\right]\\
    :=&A_3+A_4.
    \end{aligned}     
\end{equation}
The variance of Monte Carlo estimation is $\E[A_3^2]=\mathcal{O}(\frac{1}{n})$.

By Central Limit Theorem,
$\E[(\frac{n}{m}-p(x\in \tilde{S}_h))^2]=\mathcal{O}(\frac{1}{m})$, and thus $\E[A_4^2]=\mathcal{O}(\frac{1}{m})$.

Finally, through bias-variance decomposition, we have 

\begin{equation}
\E[e^2]=\E[A_1^2+A_2^2]= \mathcal{O}(\E[A_1^2+A_3^2+A_4^2])=\mathcal{O}(h^2+\frac{1}{n}+\frac{1}{m})=\mathcal{O}(h^2+\frac{1}{n}).
\end{equation} 

However, $n$ depends on $h$ implicitly.
We use a simple example to determine this relation.
Consider the target distribution being the uniform distribution, and the constrained domain being the ball in $\R^d$ with radius $R$, then $p(x\in \tilde{S}_h)\approx \frac{R^d-(R-h)^d}{R^d}= \mathcal{O}(dh)$. 
In this case, the total mean square error is of order $\mathcal{O}(h^2+\frac{1}{dNh})$, which implies the optimal $h$ is $\mathcal{O}((dN)^{-\frac{1}{3}})$, in this case the total mean square error is of order $\mathcal{O}((dN)^{-\frac{2}{3}})$. 

In practice, one can set $h=h_0(dN)^{-\frac{1}{3}}$ and tune $h_0$ through sparse grid search.

\paragraph{Simulation Verification of Boundary Integral Estimation}

To verify the above analysis, we conduct a toy 2D simulation study on estimating boundary integral of 3 different velocities at the boundary using 5 different numbers of samples from 3 different distributions. The constrained domain is the block area $\Omega = \{x\mid |x_1|\le 2, |x_2|\le 2\}$.

The number of particles are $10^2, 10^3, 10^4, 10^5, 10^6$. The types of velocity and distribution are listed in Table \ref{tab: bd_veri_vel}. The corresponding true values of boundary integral $ \int_{\partial\Omega}pv \cdot \vec{\vn} \dif S $ are listed in the Table \ref{tab: bd_veri_inform}.

\begin{table}[H]
   \caption{The types of velocities and distributions.}
   \label{tab: bd_veri_vel}
   \begin{center}
   \begin{tabular}{llll}
   \toprule
   \multicolumn{1}{l}{Types} & \multicolumn{1}{l}{1}  & \multicolumn{1}{l}{2}  & \multicolumn{1}{l}{3}   \\
\midrule
$v(x)$ & $\vec{\vn}$ & $(x_2,x_1)$ & $(x_2^2,x_1^2)$ \\
$p(x)$ & Unif($\Omega$) & $ \mathcal{N}((0,0),I)$ on $\Omega$ & $ \mathcal{N}((0,-2),I)$ on $\Omega$ \\

   \bottomrule
   \end{tabular}
   \end{center}
\end{table}

\begin{table}[H]
   \caption{The true values of boundary integral in the verification experiment.}
   \label{tab: bd_veri_inform}
   \begin{center}
   \begin{tabular}{l|lll}
   \toprule
   \diagbox {Distributions}{Velocities} & $v_1$ & $v_2$ & $v_3$  \\
\midrule
$p_1$ & 1 & 0 & 0 \\
$p_2$ & 0.226259 & 0 & 0  \\
$p_3$ & 0.911333 & 0& -0.617187 \\

   \bottomrule
   \end{tabular}
   \end{center}
\end{table}

We estimate $\E[e^2]$ by 10 trials for different number of particles and plot the curves as in Figure \ref{fig:bd-ver-3}. We set $h=h_0(dN)^{-\frac{1}{3}}$ and $h_0d^{-\frac{1}{3}}=0.5$. The slope of the log-log plot is approximately $-\frac{2}{3}$ for all types of velocities and distributions, which supports the previous analysis.

\begin{figure}    
    \begin{subfigure}  
    \centering
        \includegraphics[width=0.32\linewidth]{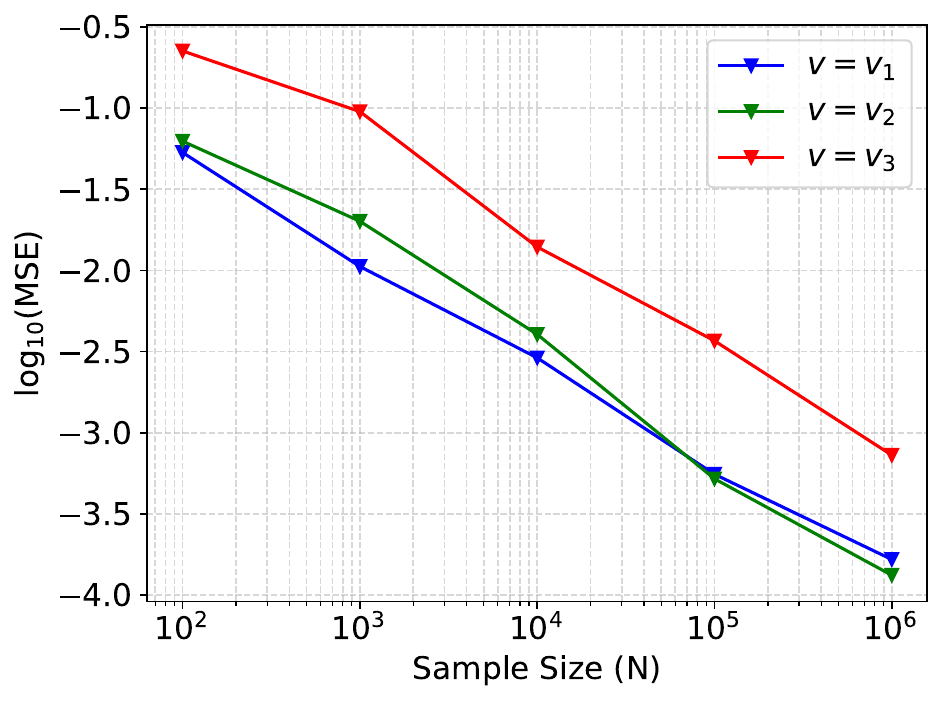}
    \end{subfigure}
    \begin{subfigure}
    \centering
        \includegraphics[width=0.32\linewidth]{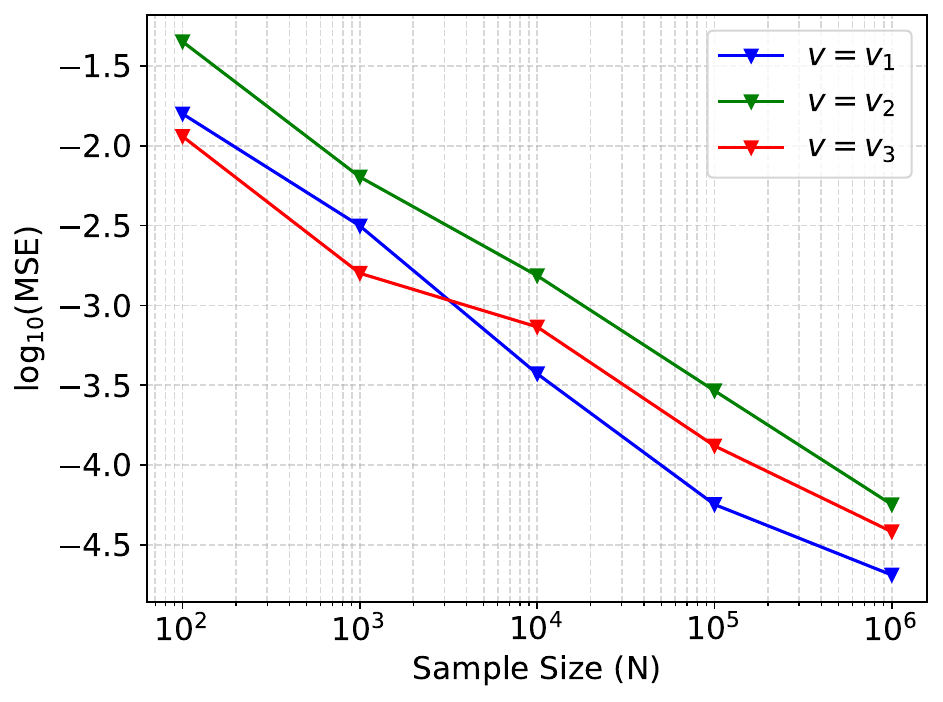}
    \end{subfigure}
    \begin{subfigure}
    \centering
        \includegraphics[width=0.32\linewidth]{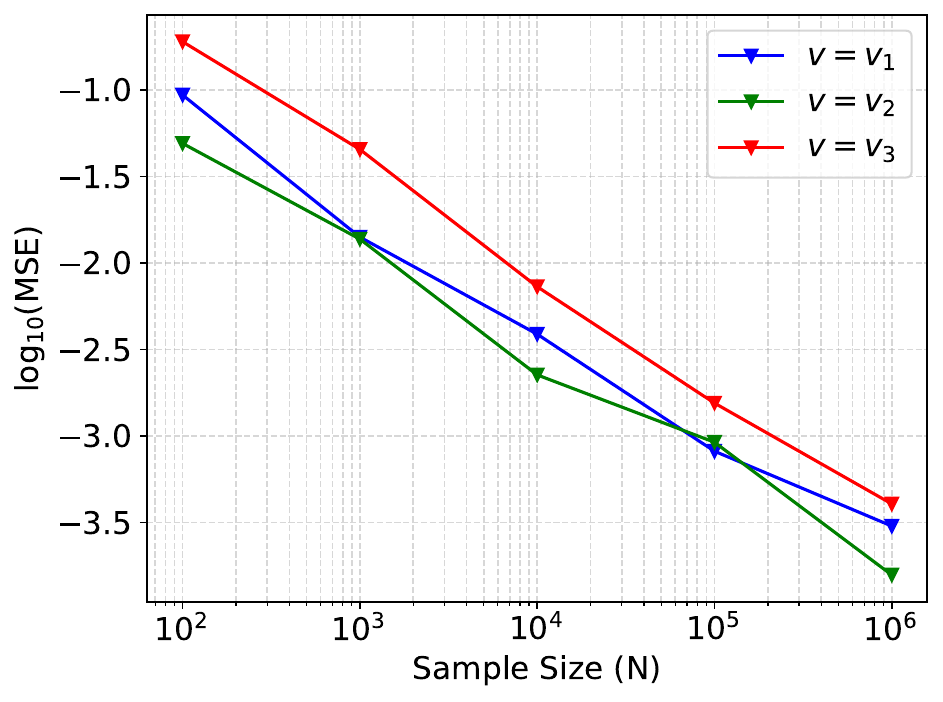}
    \end{subfigure}
    \caption{\textbf{Left}: MSE of boundary integral estimation of distribution $p_1$. \textbf{Middle}: MSE of boundary integral estimation of distribution $p_2$. \textbf{Right}: MSE of boundary integral estimation of distribution $p_3$.}    
    \label{fig:bd-ver-3}
\end{figure}

Additionally, if we do not follow the scheme of $h=h_0(dN)^{-\frac{1}{3}}$, and for example, fixing $h=0.5\cdot (10^2)^{-\frac{1}{3}}$ or $h=0.5\cdot (10^6)^{-\frac{1}{3}}$ (the starting and ending edgewidth of the adaptive edgewidth scheme $h=0.5\cdot N^{-\frac{1}{3}}$). Figure \ref{fig:uni-h-abl} shows that the estimated $\E[e^2]$ will tend to go up in certain cases, which supports the rationality of the $h=h_0(dN)^{-\frac{1}{3}}$ scheme.

\begin{figure}
  \begin{center}
    \includegraphics[width=0.48\textwidth]{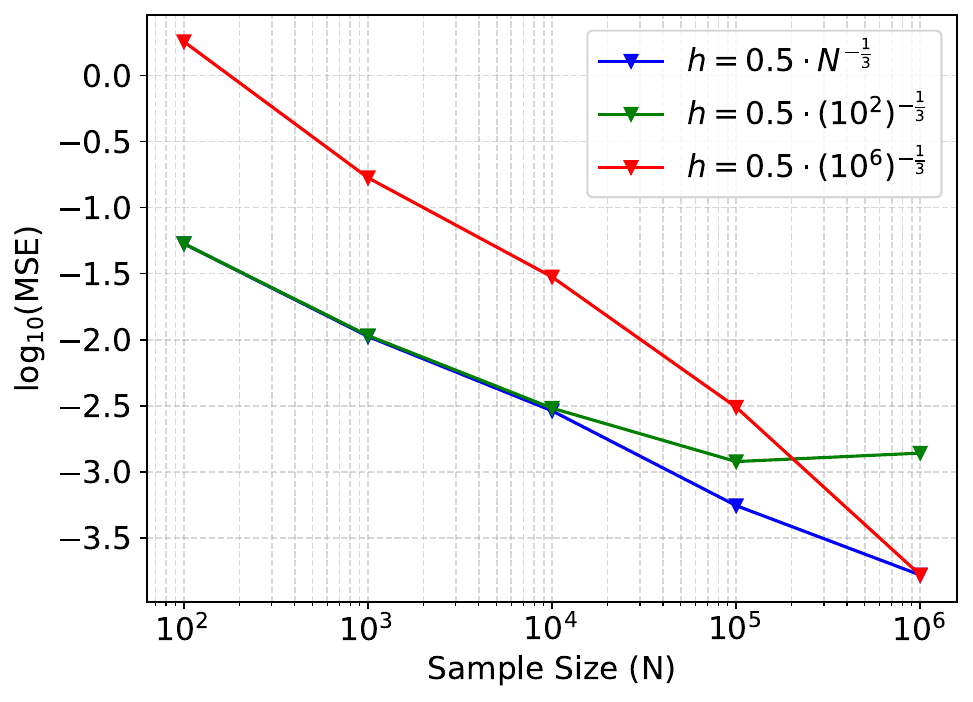}
  \end{center}
  \caption{MSE of boundary integral estimation of distribution $p_1$ and velocity $v_1$ using fixed edgewidths and adaptive edgewidth.}
  \label{fig:uni-h-abl}
\end{figure}

\section{Additional Details of Experiments}\label{appendix:Addtionalexp}

\paragraph{Structure of Neural Networks}
We parameterize $h_{net}$ in \eqref{eq:velocity} using two neural networks $f_{net}$ and $z_{net}$ as follows
\begin{equation}\label{bet_para}
    h_{net}=f_{net}-z_{net}^2\cdot \nabla g,
\end{equation}
where $f_{net}:\R^d\to \R^d, z_{net}:\R^d\to \R$.
Here, the $-z_{net}^2\cdot \nabla g$ term relates to the reflection process in reflected stochastic differential equation \cite{pilipenko2014introduction}, and always points inward $\Omega$.
It can be regarded as a reflection term that forces the particle to stay inside $\Omega$ as it neutralizes the outward normal-pointing component. 
Heuristically, $-\nabla g$ serves as the reflection direction and $z_{net}^2$ contributes in learning the magnitude. 
Empirically, we observe that incorporating the gradient of the boundary $\nabla g$ substantially alleviates the issue of particles adhering to the boundary during training, and may slightly improve the sampling quality. Please refer to \ref{appendix:toy_experiments} below.

\subsection{Toy Experiments}\label{appendix:toy_experiments}

\paragraph{Setting Details} On toy experiments, we conduct CFG on four 2-D constrained distributions. The experiments are implemented on Intel 2.30GHz CPU with RAM 16384MB and NVIDIA GeForce RTX 3060 Laptop GPU with total memory 14066MB.
The density and constrained domain are in Table \ref{tab: toy-density}. The truncated gaussian mixture is 9 equally weighted gaussian distribution with all standard variance equal to 0.2, the centers are (-1.7, -1.7), (-1.7, 0), (-1.7, 1.7), (0, -1.7), (0, 0), (0, 1.7), (1.7, -1.7), (1.7, 0), (1.7, 1.7). 

\begin{table}[H]
   \caption{Four 2-D constrained distributions implemented in the toy experiments.}
   \label{tab: toy-density}
   \begin{center}
   \begin{tabular}{lll}
   \toprule
   \multicolumn{1}{l}{Name}  & \multicolumn{1}{l}{ Density} & \multicolumn{1}{l}{Constrained domain}\\
   \midrule
   Ring & $p^*(x) \propto \mathcal{N}(0,I)$ on $\Omega$& $\Omega = \{x\mid 1\le \|x\|^2 \le 4\}$ \\
   Cardioid & $p^*(x) \propto \mathcal{N}(0,I)$ on $\Omega$ & $\Omega = \{x\mid x_1^2 + (\frac65 x_2 - x_1^{2/3})^2 \le 4\}$\\
   Double-moon & $p^*(x)\propto q$ on $\Omega$ & $\Omega = \{x|-\log q(x) \le 2,$ \\
    & & \quad where $q(x) = \frac{e^{-2(x_1-3)^2}+e^{-2(x_1+3)^2}}{e^{2(\|x\| - 3)^2}}\}$ \\
   Block & Truncated gaussian mixture on $\Omega$& $\Omega = \{x\mid |x_1|\le 2 \quad\text{and}\quad |x_2|\le 2\}$\\
   \bottomrule
   \end{tabular}
   \end{center}
\end{table}

For CFG, $f_{net}$ and $z_{net}$ are three-layer neural networks with LeakyReLU activation (negative slope=0.1). 
The number of hidden units is 128, except for the \textsc{Ring} on which 256 is used. 
Both neural nets are trained by Adam optimizer with learning rate 0.002, except for \textsc{Ring} on which 0.005 is used. 
The number of inner-loop of gradient updates for $f_{net}$ and $z_{net}$ is set to 10, except for the \textsc{Ring} on which 3 is used. 
The total number of iterations is 2000 and the step size of particle is 0.005 except for the \textsc{Ring} on which 0.01 is used. 
The band width is set to 0.05 except for \textsc{Block} on which 0.001 is used. $\lambda$ in the piece-wise construction of the velocity field is chosen to be 1.

On the \textsc{Block} experiment, the total number of iterations is 2000. The ground truth is obtained by rejection sampling using $10^4$ posterior samples. We run the accuracy experiments on 3 random seeds. The average results and the standard errors of the means are represented in the figure using lines and shades. For MSVGD, we follow \cite{shi2022sampling} and set the learning rate to 0.05 (selected from $\{0.01, 0.05, 0.1\}$). The kernel is the IMQ kernel and the kernel width is 0.1 (selected from $\{0.05, 0.1, 0.2\}$).
For MIED, we use the default hyperparameters in \cite{li2022sampling} and set the particle stepsize to be $0.01$ (selected from $\{0.005, 0.01, 0.02\}$).

\paragraph{Additional Comparison of MIED}

Table \ref{appendix:tab_mied_add_comp} shows the additional comparison of CFG and MIED on the first three constrained domains. CFG achieved better results in most of the cases.

\begin{table}[htbp]
\caption{Wasserstein-2 distance and energy distance between the target distribution and the variational approximation on the three toy datasets.}
\label{appendix:tab_mied_add_comp}
\begin{center}
\setlength\tabcolsep{3.3pt}
\begin{footnotesize}
\begin{tabular}{lcc|cc}
\toprule
\multirow{2}{*}{Name} & \multicolumn{2}{c}{Wasserstein-2 distance (Sinkhorn)} & \multicolumn{2}{c}{Energy distance}\\
\cmidrule(lr){2-3} \cmidrule(lr){4-5} 
 & {MIED}& {CFG} & {MIED}& {CFG}\\
\hline
Ring    &\textbf{0.1074} &0.1087 & 0.0004 & \textbf{0.0003}  \\
Cardioid & 0.1240& \textbf{0.1141}& 0.0016 & \textbf{0.0005} \\
Double-moon & 0.4724 & \textbf{0.1660} & 0.0629 & \textbf{0.0022} \\
\bottomrule
\end{tabular}
\end{footnotesize}
\end{center}
\end{table}

\paragraph{Additional Ablation Studies}

From Fig \ref{appendix:toy_abl} and Table \ref{appendix:tab_abl} we can see that, without estimating the boundary integral leads to failure in sampling in most cases. This indicates the essential difference between constrained and unconstrained domain sampling. Adding $z_{net}$ can achieve a slight improvement in avoiding clustering near the boundary and better KL convergence.

\begin{figure}[t]
    \centering
    \includegraphics[width=\linewidth]{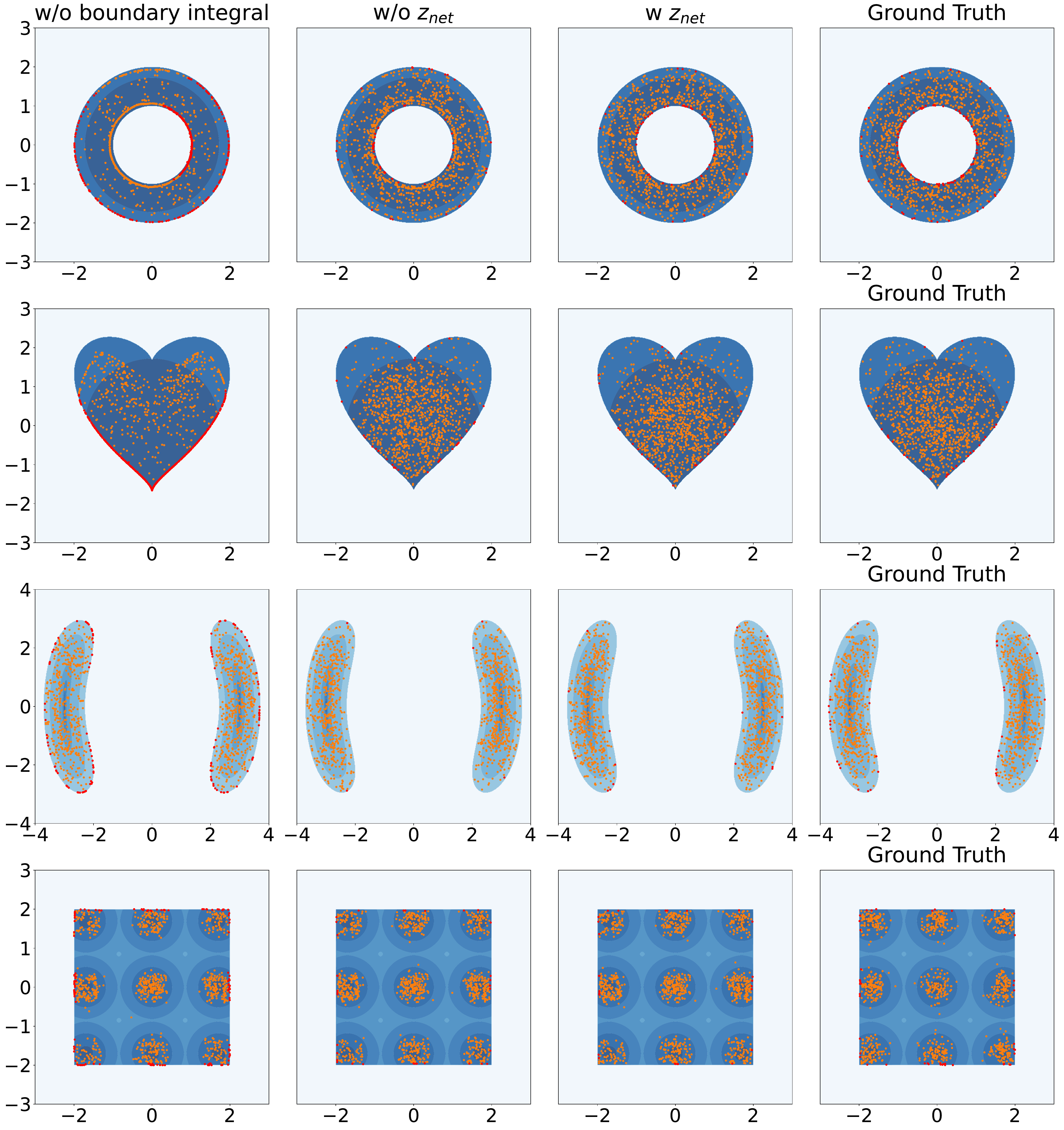}
    \caption{Ablation sampling
    results of not estimating boundary integral (left), not using $z_{net}$ (middle left), using $z_{net}$ (middle right) and the ground truth (right).}
   \label{appendix:toy_abl}
   \vspace{-0.5em}
\end{figure}

\begin{table}[htbp]
\caption{Ablation results of not estimating boundary integral, with and without $z_{net}$.}
\label{appendix:tab_abl}
\begin{center}
\setlength\tabcolsep{3.3pt}
\begin{footnotesize}
\scalebox{1.0}{
\begin{tabular}{lcccc|cccc}
\toprule
\multirow{2}{*}{} & \multicolumn{4}{c}{Wasserstein-2 distance (Sinkhorn)} & \multicolumn{4}{c}{Energy distance}\\
\cmidrule(lr){2-5} \cmidrule(lr){6-9} 

& {Ring} & {Cardioid}& {Double-moon} & {Block} & {Ring} & {Cardioid}& {Double-moon} & {Block}\\
\hline
w/o boundary integral  &0.2138  &0.2321          &0.4866  & 0.2438  &0.0097         &0.1147         & 0.0068         &0.0073\\
w/o $z_{net}$          &0.1248  &0.2234          & 0.1217          &0.2422   &0.0013         &0.0009         & 0.0049         &0.0073   \\
w/ $z_{net}$    &\textbf{0.1087}& \textbf{0.1660}& \textbf{0.1141}          &  \textbf{0.2416} &\textbf{0.0003}&\textbf{0.0005}&\textbf{0.0022} &\textbf{0.0072}\\

\bottomrule
\end{tabular}
}
\end{footnotesize}
\end{center}
\end{table}

\paragraph{Geometric Generalization of Accommodating Multiple Constraints}

Our proposed method can generalize and accommodate multiple constraints (including more equality and inequality constraints) and more complicated geometries. 

For additional equality constraints, our proposed framework can still apply by adopting the idea of \cite{zhang2022osvgd}. We could use velocity fields like $v_{\sharp}(\mathbf{x})$ to guided the particles to satisfy the equality constraints, and use our method to propose velocity fields substituting $v_{\perp}(\mathbf{x})$ to satisfy the inequality constraints. Adding these two types of velocity presents the desired velocity field. To illustrate the idea, we demonstrate the training process of a 3D toy ring distribution example. Suppose the coordinate of the particle is $\mathbf{x}=(x,y,z)$. The target distribution is a truncated standard gaussian located in the ring shaped domain in xOy plane. This corresponds to the equality constraint $z=0$ and the inequality constraint $1\le x^2+y^2\le 4$. The initial distribution is the 3D standard gaussian distribution. We choose $v_{\perp}(\mathbf{x})=-sign(z)|z|^{1.5}$. From figure \ref{fig:geogen} we can see that the particles collapse to the xOy plane and converge inside the ring domain. 

For additional inequality constraints, we need multiple velocity field outside the constrained domain for particles to enter the constrained domain, and the boundary integral term can be similarly estimated using band-wise approximation.

\begin{figure}[ht]
    \begin{subfigure}
    \centering
    \includegraphics[width=0.24\linewidth]{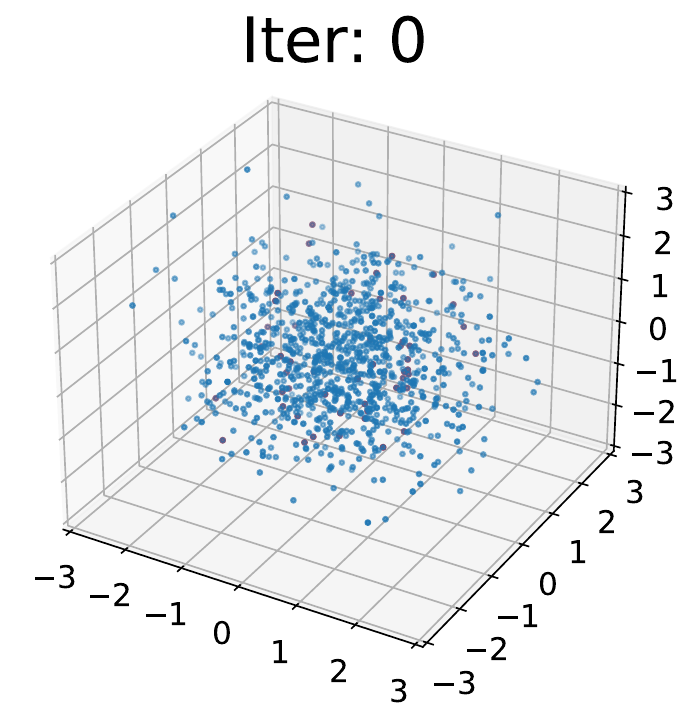}
    \end{subfigure}
    \begin{subfigure}
    \centering
    \includegraphics[width=0.24\linewidth]{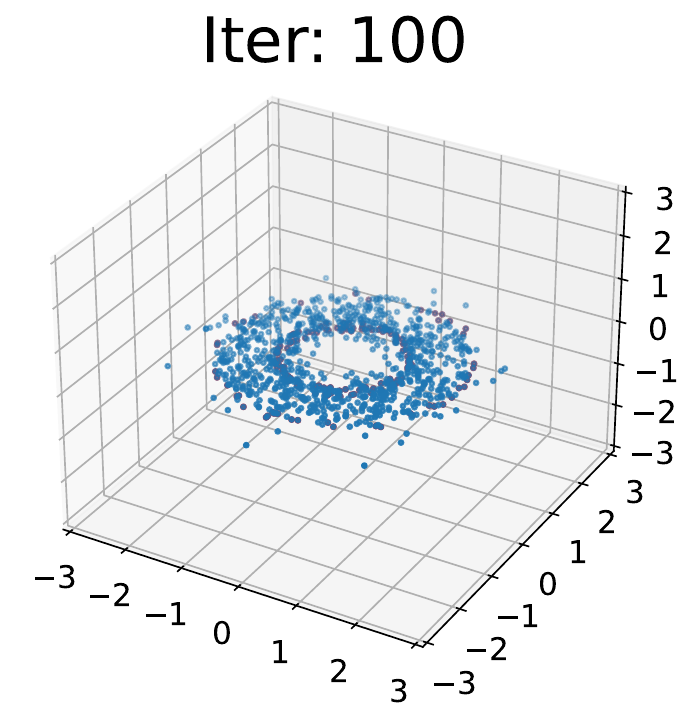}
    \end{subfigure}
    \begin{subfigure}
    \centering
    \includegraphics[width=0.24\linewidth]{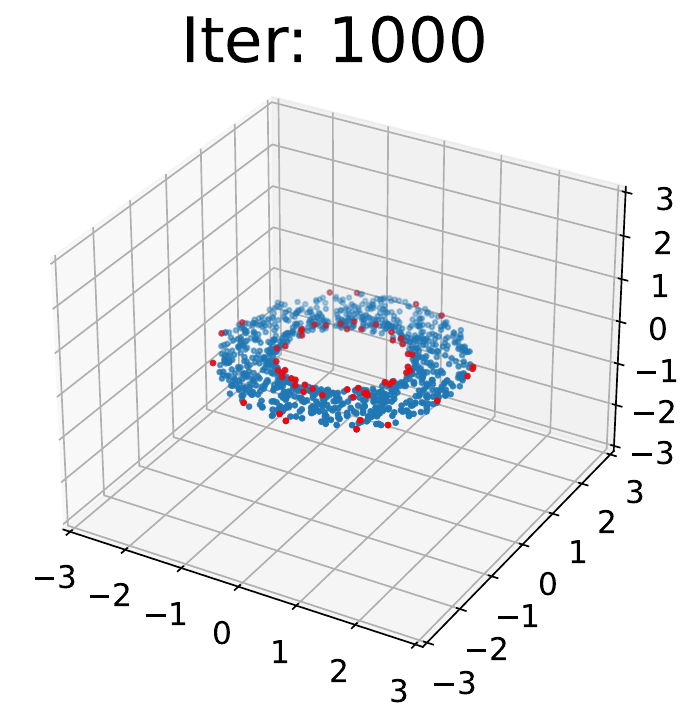}
    \end{subfigure}
    \begin{subfigure}
    \centering
    \includegraphics[width=0.24\linewidth]{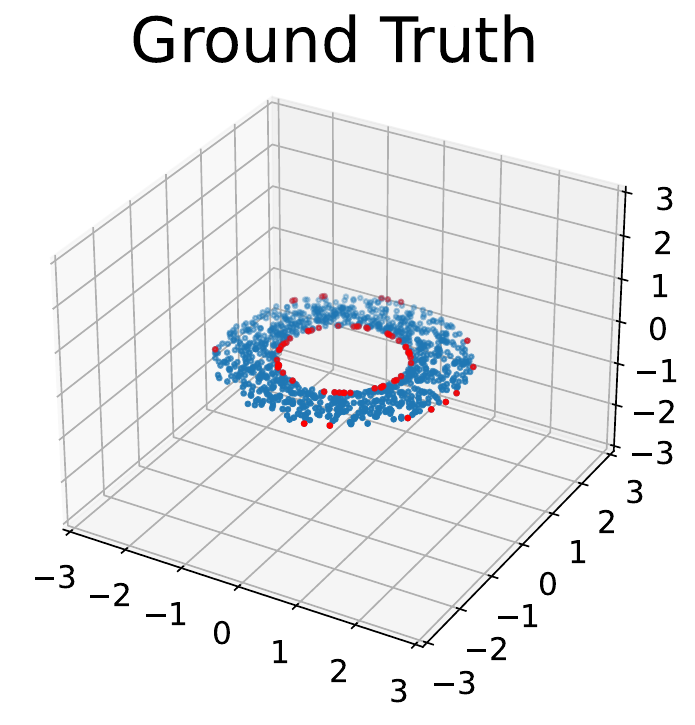}
    \end{subfigure}
    \caption{Illustration of generalizing our method to accommodating equality and inequality constraints.}
    \label{fig:geogen}
\end{figure}

\subsection{Bayesian Lasso}\label{appendix:bayesian_lasso}

\paragraph{Setting Details}

The experiments are implemented on Intel 2.30GHz CPU with RAM 16384MB and NVIDIA GeForce RTX 3060 Laptop GPU with total memory 14066MB. 

On synthetic dataset, we run the experiments on 5 random seeds. The average results and the standard errors of the means are represented in the figure using lines and shades. The number of particles $N$ are chosen from $\{5,15,30,70,180,400,900,2000\}$. 

For CFG, $f_{net}$ and $z_{net}$ are three-layer neural networks with LeakyReLU activation (negative slope=0.1). The number of hidden units is 256. Both neural nets are trained by Adam optimizer with learning rate $0.0005$ for $10$ iterations in the inner loop. The total number of iterations is $1000$ and the stepsize of particle is 0.004. $\lambda$ in the piece-wise construction of the velocity field is chosen to be 1.

For Spherical HMC, we follow the same main setting as \cite{lan2014sph}. 
The number of burn-in epochs is 4000, and the number of leap-frog is selected from $\{50, 100\}$.

For MIED, we use the default hyperparameters in \cite{li2022sampling} and set the particle stepsize to gradually decay when the particle number $N$ goes up. We set particle stepsize to $0.5$ for $N\in\{5,15,30,70\}$, $0.35$ for $N\in \{180, 400\}$, $0.3$ for $N=900$, and $0.2$ for $N=2000$. The number of iterations is 4000 for convergence except for $N=2000$, which needs 10000 iterations to converge.

On real diabetes dataset, for CFG, $f_{net}$ and $z_{net}$ are three-layer neural networks with LeakyReLU activation (negative slope=0.1). The number of hidden units is 50. Both neural nets are trained by Adam optimizer with learning rate $0.005$ for $10$ iterations in the inner loop. The total number of iterations is $300$ and the bandwidth $h=1$. The number of particles is $5000$ and the stepsize of particle is selected from $\{0.9, 1.05, 1.2\}$.

For Spherical HMC, we follow the same setting as \cite{lan2014sph}. The number of total epochs is 11000 and the number of burn-in epochs is 1000.

For MIED, we use the default hyperparameters in \cite{li2022sampling} and the particle stepsize is selected from $\{0.1, 0.2, 0.5\}$. The number of iterations is 2000.

\paragraph{Additional Comparisons on the Synthetic Dataset}

We further compare the sampling results of SPH, CFG and MIED on different dimensions. We choose dimensions 5, 10, 15 and 20. From the design of $\beta_{true}$, the samples of first two dimensions are close to 10, and the last two dimensions are close to 0. Figure \ref{fig:toy_diff_dim} shows that CFG achieves comparable results to MIED on Wasserstein-2 distances, while slightly better result on energy distance in most of the dimensions. SPH eventually catches up when the particle number grows up. This alignes with the results stated in Section \ref{chap:synth}.

\begin{figure}
    \begin{subfigure}
    \centering
        \includegraphics[width=0.24\linewidth]{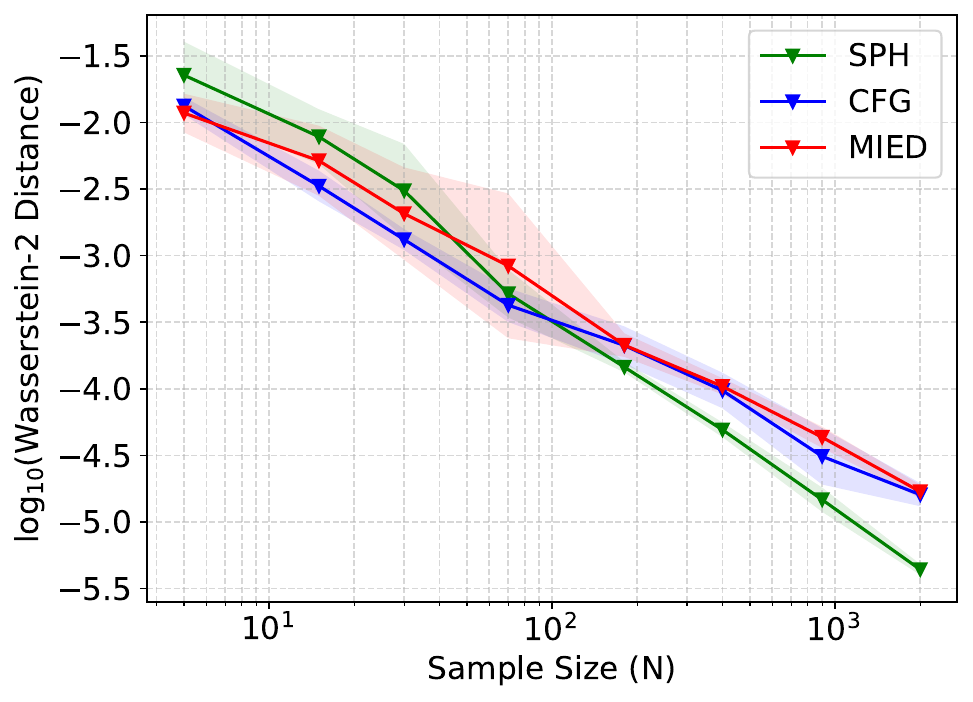}
    \end{subfigure}
    \begin{subfigure}
    \centering
        \includegraphics[width=0.24\linewidth]{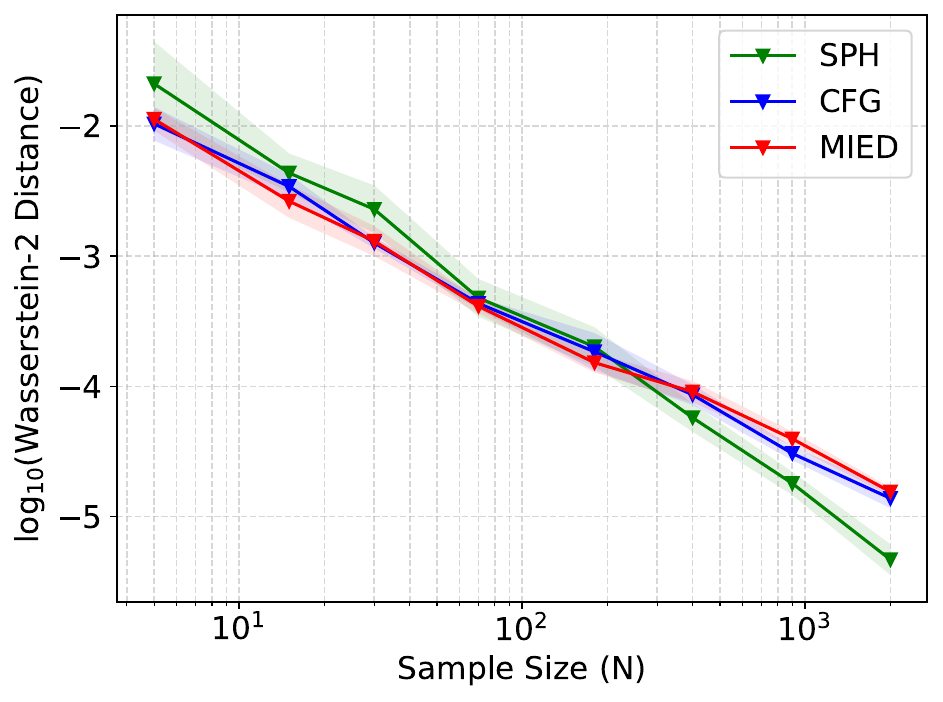}
    \end{subfigure}
    \begin{subfigure}
    \centering
        \includegraphics[width=0.24\linewidth]{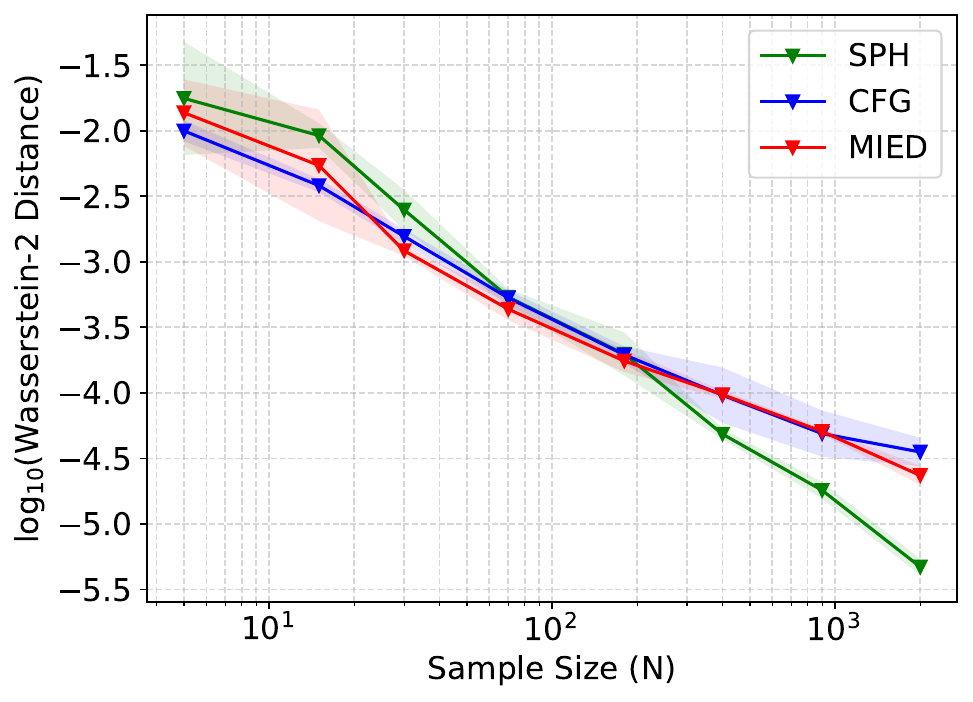}
    \end{subfigure}
    \begin{subfigure}
    \centering
        \includegraphics[width=0.24\linewidth]{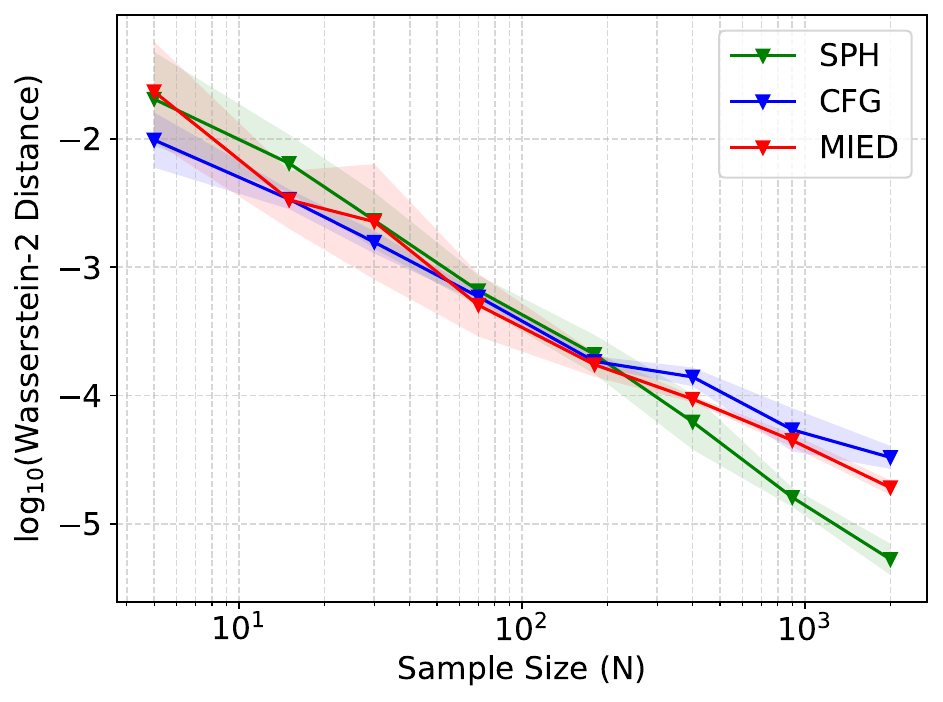}
    \end{subfigure}

    \begin{subfigure}
    \centering
        \includegraphics[width=0.24\linewidth]{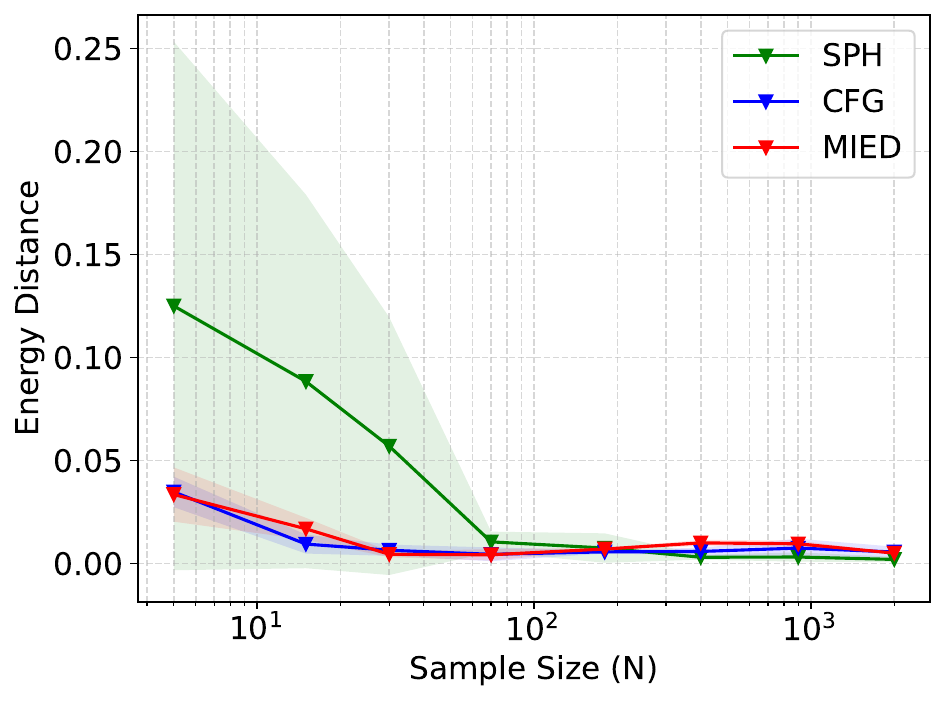}
    \end{subfigure}
    \begin{subfigure}
    \centering
        \includegraphics[width=0.24\linewidth]{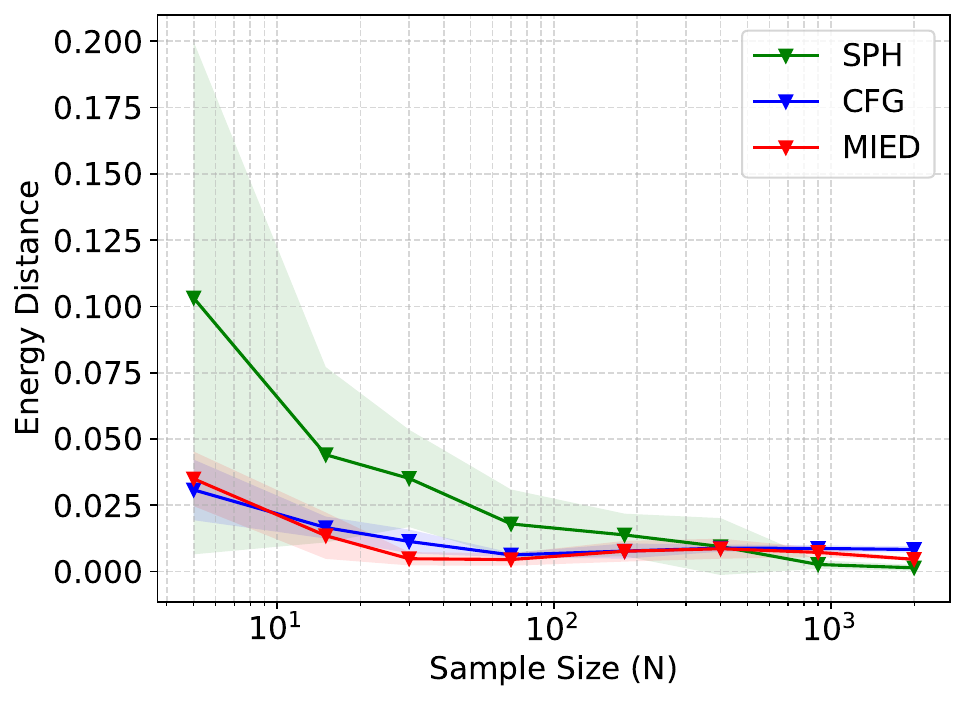}
    \end{subfigure}
    \begin{subfigure}
    \centering
        \includegraphics[width=0.24\linewidth]{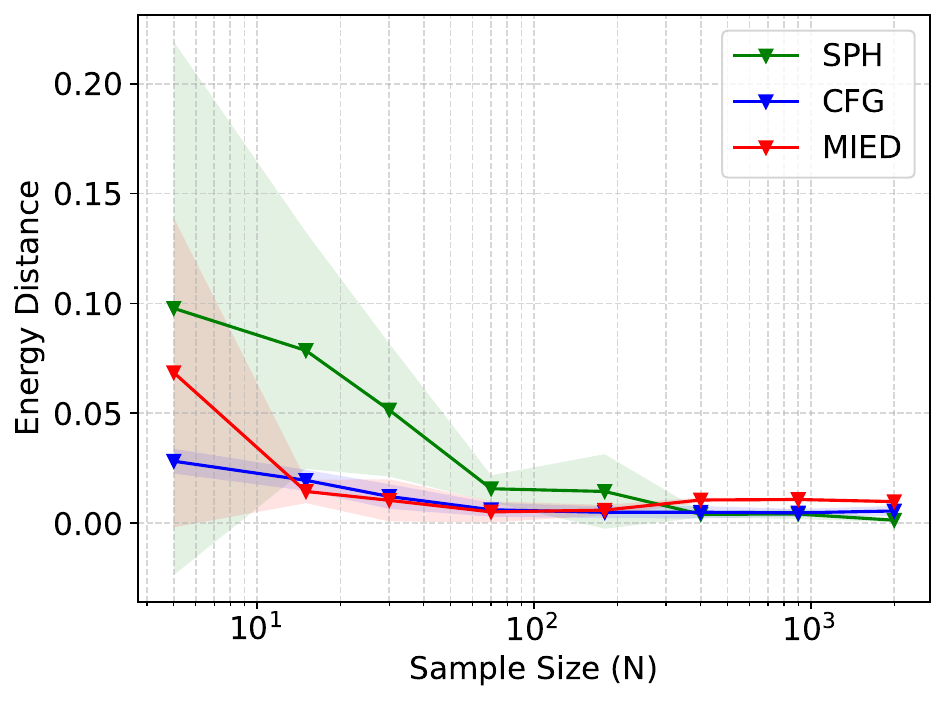}
    \end{subfigure}
    \begin{subfigure}
    \centering
        \includegraphics[width=0.24\linewidth]{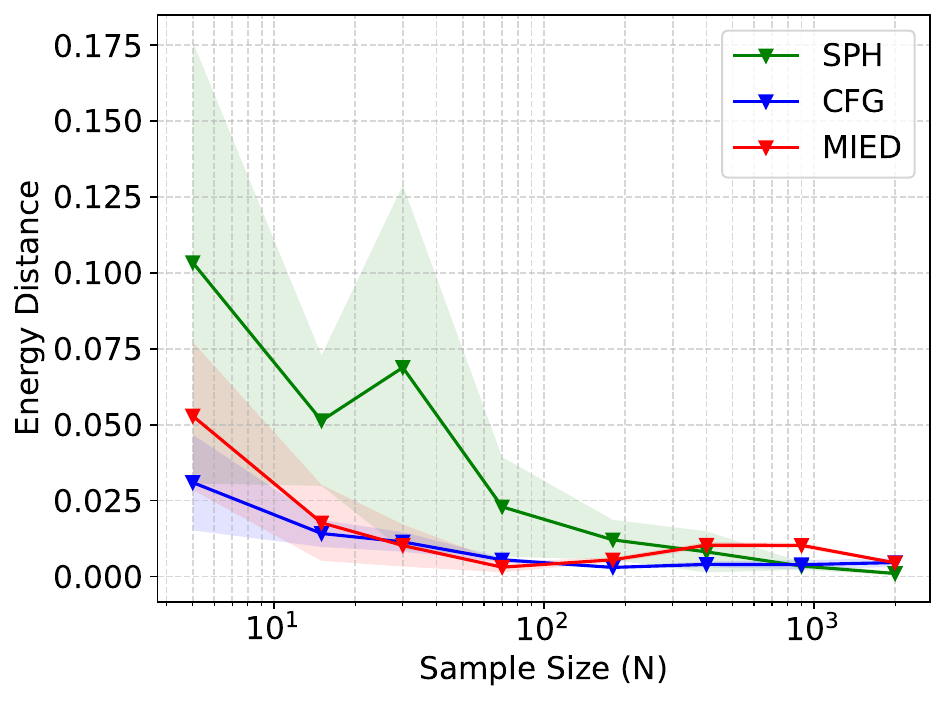}
    \end{subfigure}
    
    \caption{\textbf{First row}: Wasserstein-2 distance of SPH, CFG and MIED versus the number of particles on dimensions 5, 10, 15, 20, \textbf{Second row}: Energy distance of SPH, CFG and MIED versus the number of particles on dimensions 5, 10, 15, 20. Both on a synthetic dataset.}
    \label{fig:toy_diff_dim}
\end{figure}

\paragraph{The Nonlinear Time Complexity of MIED}

Larger number of particles is important for higher sample qualities. Similar to \cite{dong2023particlebased}, CFG is more scalable than MIED in terms of particle numbers $N$. This is because CFG is obtained with iterative approximation, the complexity is $\mathcal{O}(N)$.  On the other hand, MIED incurs complexity $\mathcal{O}(N^2)$ due to the calculation of the weight denominator $\sum_{i,j} e^{I_{i,j}} $.

Figure \ref{fig:time-curve} plot the Wasserstein-2 Distance versus the computation time using at most 1000 iterations for CFG and 10000 iterations for MIED. The number of particles are $N\in \{900,2000,4000\}$. It is clear that MIED entails more computation cost with the increase of particle numbers for larger $N$. From the ending points of each curves (denotes the computation time for 1000 iterations of CFG and 10000 iterations of MIED), the time cost of CFG only grows linearly (from 130s to 280s to 600s approximately), while the time cost of MIED grows quadratically (from 100s to 220s to 830s approximately), which supports the above claim of time complexity.  

Additionally, figure \ref{fig:time-curve} indicates that for large $N$, the number of iterations for MIED should also increase. 4000 iterations is sufficient for $N=900$, while 10000 iterations is needed for $N=2000$, and more than 10000 iterations is needed for MIED to completely converge to achieve better sampling results.

\begin{figure}
  \begin{center}
    \includegraphics[width=0.48\textwidth]{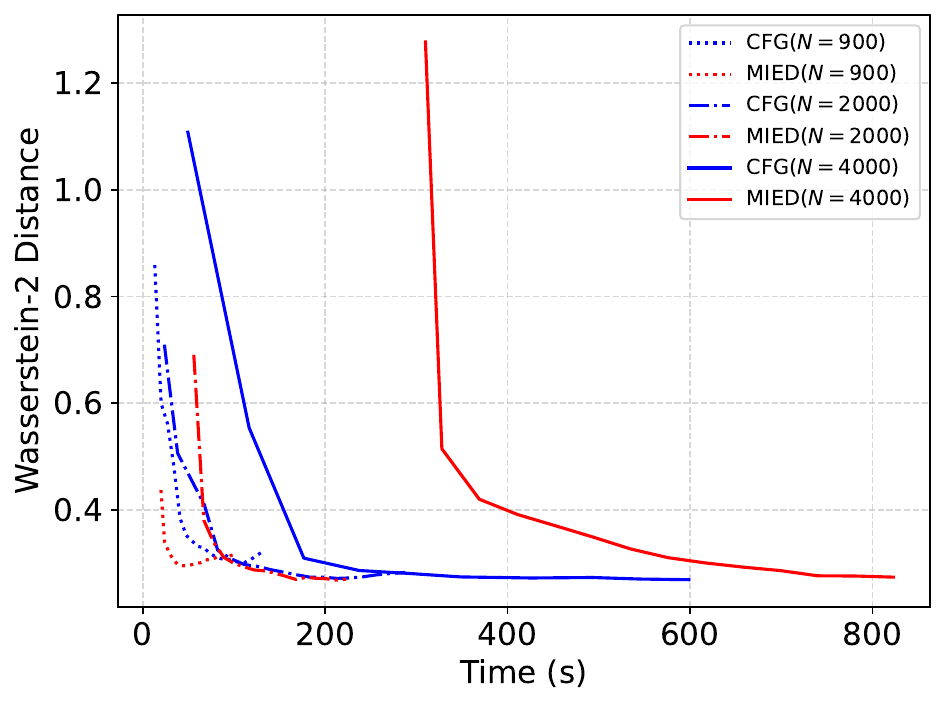}
  \end{center}
  \caption{Wasserstein-2 distance of CFG and MIED versus the training time using different number of particles.}
  \label{fig:time-curve}
\end{figure}

\paragraph{The Effect of Choosing the Adaptive Bandwidth}

Figure \ref{fig:toylassoadap} compared the energy distance between the adaptive bandwidth scheme $h=0.1N^{-\frac{1}{3}}$ and fixed bandwidth $h=0.1, 0.01$ of CFG. We can see that the adaptive scheme achieved slightly better energy distance results than the fixed bandwidth.

\begin{figure}
    \centering
    \includegraphics[width=0.48\textwidth]{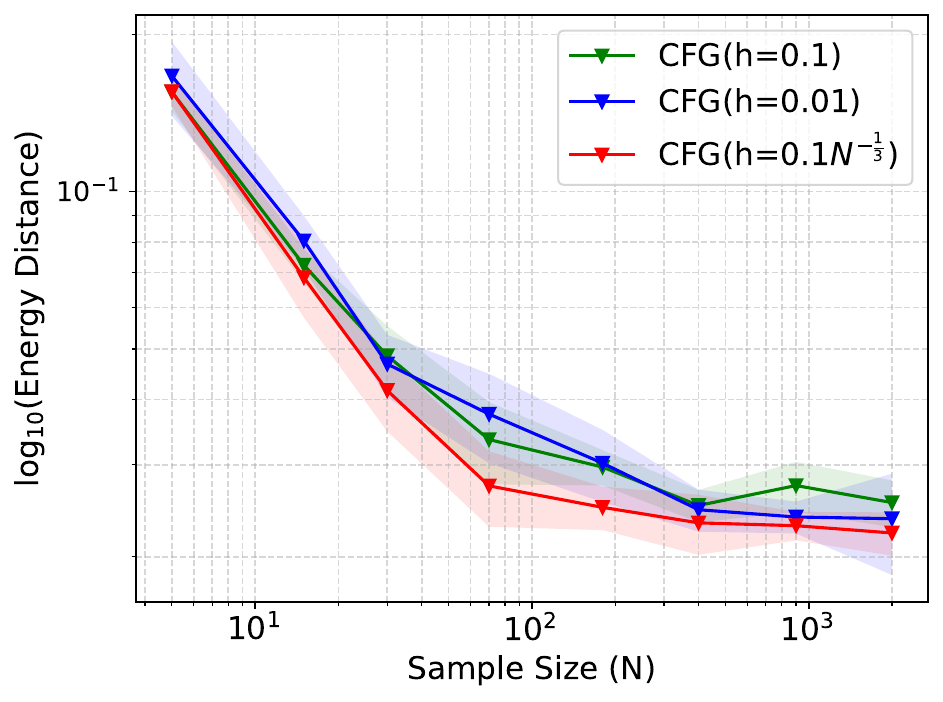}
  \caption{Choosing the adaptive bandwidth (red) against fixed bandwidths for the Bayesian Lasso experiment on a synthetic dataset.}
  \label{fig:toylassoadap}
\end{figure}

\subsection{Monotonic Bayesian Neural Network}\label{appendix:monotonicbnn}

\paragraph{Setting Details}

The experiments are implemented on Nvidia GeForce RTX 4080 Laptop GPU with memory 12 GB.
We use $200$ particles and $200$ data batch size for stochastic gradient in all methods.

For CFG, both $f_{net}$ and $z_{net}$ are three layers with LeakyReLU activation (negative slope = $0.1$).
$f_{net}$ have $300$ hidden units while $z_{net}$ have $200$.
Both neural nets are trained by Adam optimizer with learning rate $0.001$ for $10$ iterations in the inner loop.
The bandwidth $h=0.02$ and the stepsize of particle is 5e-5.
The total number of iterations is $1200$. $\lambda$ in the piece-wise construction of the velocity field is chosen to be 100.

For PD SVGD and Control SVGD, we run $3000$ iterations with particle stepsize selected from $\{0.001,0.002,0.005\}$ to ensure convergence.

For MIED, we run $1500$ iterations with default hyperparameters in \cite{li2022sampling} and particle stepsize selected from $\{2,3,5\}\times 10^{-4}$.

\paragraph{Additional Experiments on Scaling Up to Higher Dimensions}

To test our method on higher dimension, we first experiment on the COMPAS dataset using larger monotonic BNNs. The number of neurons in the layer of BNN increased from 50 to 100, making the particle dimension increase from 903 to 1502. From Table \ref{tab:appendixbnn1}, our method still achieved the best accuracy results compared to other methods, while achieving competitive log-likelihood results. 

For even higher dimension, we additionally experiment on the 276-dimensional larger dataset Blog Feedback \cite{liu2020certified} using monotonic BNN. The particle dimension is 13903. From Table \ref{tab:appendixbnn2}, our method still achieved the best result.

\begin{table*}[!ht]
\caption{Results of a larger monotonic Bayesian neural network on COMPAS dataset under different monotonicity threshold. The results are averaged from the last $10$ checkpoints for robustness. For each monotonicity threshold, the best result is marked in black bold font and the second best result
is marked in brown bold font. Positive proportion of particles outside the constrained domain is marked in red.}
\label{tab:appendixbnn1}
\begin{center}
\setlength\tabcolsep{3.3pt}
\begin{footnotesize}
\begin{sc}
\scalebox{0.65}{
\begin{tabular}{c|cccc|cccc|cccc}
\toprule
& \multicolumn{4}{c|}{Test Acc} & \multicolumn{4}{c|}{Test NLL} & \multicolumn{4}{c}{Ratio Out (\%)}\\
{$\varepsilon$} & {PD-SVGD} & {C-SVGD} & {MIED} & {CFG}& {PD-SVGD} & {C-SVGD} & {MIED} & {CFG} & {PD-SVGD} & {C-SVGD} & {MIED} & {CFG} \\
\midrule
$0.05$ & $.618\pm .006$ & $\textcolor[RGB]{156,107,48}{.639\pm .004}$ & $.568\pm .000$ & $\bm{.649\pm .001}$ & $.639\pm .001$ & $\bm{.630\pm .001}$ & $.665\pm .000$ & $\textcolor[RGB]{156,107,48}{.637\pm .000}$ & $0.0$ & $0.0$ & $0.0$ & $0.0$\\

$0.01$ & $.618\pm .006$ & $\textcolor[RGB]{156,107,48}{.638\pm .004}$ & $.569\pm .000$ & $\bm{.651\pm .002}$ & $\textcolor[RGB]{156,107,48}{.639\pm .002}$ & $\bm{.631\pm .001}$& $.664\pm .000$&$.640\pm .000$ & $\textcolor[RGB]{200,0,0}{\bm{13.0}}$ & $0.0$ & $0.0$ &$0.0$\\

$0.005$ & $.618\pm .006$ & $\textcolor[RGB]{156,107,48}{.638\pm .004}$ & $.571\pm .001$&$\bm{.653\pm .003}$ & $.639\pm .001$ & $\bm{.631\pm .001}$ & $.664\pm .000$ &$\textcolor[RGB]{156,107,48}{.637\pm .000}$ & $\textcolor[RGB]{200,0,0}{\bm{24.0}}$ & $0.0$ & $0.0$ &$0.0$\\
\bottomrule
\end{tabular}
}
\end{sc}
\end{footnotesize}
\end{center}

\end{table*}

\begin{table*}[!ht]
\caption{Results of monotonic BNN on a higher 276-dimensional dataset Blog Feedback. The particle dimension is 13903. The results are averaged from the last $10$ checkpoints for robustness.}
\label{tab:appendixbnn2}
\begin{center}
\setlength\tabcolsep{3.3pt}
\begin{footnotesize}
\begin{sc}
\scalebox{0.90}{
\begin{tabular}{cccc|cccc}
\toprule
 \multicolumn{4}{c|}{Test RMSE} & \multicolumn{4}{c}{Ratio Out (\%)}\\
 {PD-SVGD} & {C-SVGD} & {MIED} & {CFG}& {PD-SVGD} & {C-SVGD} & {MIED} & {CFG} \\
\midrule
 $.205\pm .011$ & $.217\pm .000$ & $.212\pm .001$ & $\bm{.204\pm .000}$ & $0.0$ & $0.0$ & $0.0$ & $0.0$\\
\bottomrule
\end{tabular}
}
\end{sc}
\end{footnotesize}
\end{center}

\end{table*}

\section{Limitations and Future Work}
\label{appendix:limit}

\paragraph{Extensions to SVGD.}

The main idea of our work is enforcing the particles into the constrained domain, and then using neural networks to capture the target distribution. The latter part can also be accomplished by using kernels like in SVGD. The boundary integral may be treated in the similar way. We believe that our framework can be adapted to a larger family of variational sampling methods.

\paragraph{More Efficiently Choosing Bandwidth $h$.}

In this paper, we only consider the simple case of uniform target distribution in choosing a better adaptive scheme for bandwidth $h$. Proposing an adaptive scheme based on the current particles is an interesting topic, which will be left for future work.

\paragraph{Extensions to GWG.}

\citet{cheng2023gwg} proposed a variational framework including general geometries by using $l_p$ norm regularization. In this case, the treatment of boundary integral is the same as in the RSD loss expansion. The effect of using other regularization terms is left for future study.

\paragraph{Relations with the Reflected Stochastic Differential Equation.}

Contrast to SDE-based sampling methods, our work is based on ODE. Both SDE and ODE can simulate trajectories including reflection. We proposed a heuristic understanding of the relation between SDE and ODE reflection in this work, a more thorough theoretical analysis may be presented. We leave this to future research.

\section{Broader Impact}
\label{appendix:broader}

This paper presents work whose goal is to advance the field
of machine learning. There are many potential societal
consequences of our work, none which we feel must be
specifically highlighted here. As far as we are concerned, our paper has no potential negative societal impacts.


\newpage
\section*{NeurIPS Paper Checklist}

\begin{enumerate}

\item {\bf Claims}
    \item[] Question: Do the main claims made in the abstract and introduction accurately reflect the paper's contributions and scope?
    \item[] Answer: \answerYes{} 
    \item[] Justification: The last four sentences of the abstract and the last four sentences of the introduction reflect the paper's contributions and scope.
    \item[] Guidelines:
    \begin{itemize}
        \item The answer NA means that the abstract and introduction do not include the claims made in the paper.
        \item The abstract and/or introduction should clearly state the claims made, including the contributions made in the paper and important assumptions and limitations. A No or NA answer to this question will not be perceived well by the reviewers. 
        \item The claims made should match theoretical and experimental results, and reflect how much the results can be expected to generalize to other settings. 
        \item It is fine to include aspirational goals as motivation as long as it is clear that these goals are not attained by the paper. 
    \end{itemize}

\item {\bf Limitations}
    \item[] Question: Does the paper discuss the limitations of the work performed by the authors?
    \item[] Answer: \answerYes{} 
    \item[] Justification: Please refer to the appendix \ref{appendix:limit}.
    \item[] Guidelines:
    \begin{itemize}
        \item The answer NA means that the paper has no limitation while the answer No means that the paper has limitations, but those are not discussed in the paper. 
        \item The authors are encouraged to create a separate "Limitations" section in their paper.
        \item The paper should point out any strong assumptions and how robust the results are to violations of these assumptions (e.g., independence assumptions, noiseless settings, model well-specification, asymptotic approximations only holding locally). The authors should reflect on how these assumptions might be violated in practice and what the implications would be.
        \item The authors should reflect on the scope of the claims made, e.g., if the approach was only tested on a few datasets or with a few runs. In general, empirical results often depend on implicit assumptions, which should be articulated.
        \item The authors should reflect on the factors that influence the performance of the approach. For example, a facial recognition algorithm may perform poorly when image resolution is low or images are taken in low lighting. Or a speech-to-text system might not be used reliably to provide closed captions for online lectures because it fails to handle technical jargon.
        \item The authors should discuss the computational efficiency of the proposed algorithms and how they scale with dataset size.
        \item If applicable, the authors should discuss possible limitations of their approach to address problems of privacy and fairness.
        \item While the authors might fear that complete honesty about limitations might be used by reviewers as grounds for rejection, a worse outcome might be that reviewers discover limitations that aren't acknowledged in the paper. The authors should use their best judgment and recognize that individual actions in favor of transparency play an important role in developing norms that preserve the integrity of the community. Reviewers will be specifically instructed to not penalize honesty concerning limitations.
    \end{itemize}

\item {\bf Theory Assumptions and Proofs}
    \item[] Question: For each theoretical result, does the paper provide the full set of assumptions and a complete (and correct) proof?
    \item[] Answer: \answerYes{} 
    \item[] Justification: Please refer to appendix \ref{appendix:tot_proof}.
    \item[] Guidelines:
    \begin{itemize}
        \item The answer NA means that the paper does not include theoretical results. 
        \item All the theorems, formulas, and proofs in the paper should be numbered and cross-referenced.
        \item All assumptions should be clearly stated or referenced in the statement of any theorems.
        \item The proofs can either appear in the main paper or the supplemental material, but if they appear in the supplemental material, the authors are encouraged to provide a short proof sketch to provide intuition. 
        \item Inversely, any informal proof provided in the core of the paper should be complemented by formal proofs provided in appendix or supplemental material.
        \item Theorems and Lemmas that the proof relies upon should be properly referenced. 
    \end{itemize}

    \item {\bf Experimental Result Reproducibility}
    \item[] Question: Does the paper fully disclose all the information needed to reproduce the main experimental results of the paper to the extent that it affects the main claims and/or conclusions of the paper (regardless of whether the code and data are provided or not)?
    \item[] Answer: \answerYes{} 
    \item[] Justification: Please refer to appendix \ref{appendix:Addtionalexp} and the codes and toy datasets in the supplemental materials.
    \item[] Guidelines:
    \begin{itemize}
        \item The answer NA means that the paper does not include experiments.
        \item If the paper includes experiments, a No answer to this question will not be perceived well by the reviewers: Making the paper reproducible is important, regardless of whether the code and data are provided or not.
        \item If the contribution is a dataset and/or model, the authors should describe the steps taken to make their results reproducible or verifiable. 
        \item Depending on the contribution, reproducibility can be accomplished in various ways. For example, if the contribution is a novel architecture, describing the architecture fully might suffice, or if the contribution is a specific model and empirical evaluation, it may be necessary to either make it possible for others to replicate the model with the same dataset, or provide access to the model. In general. releasing code and data is often one good way to accomplish this, but reproducibility can also be provided via detailed instructions for how to replicate the results, access to a hosted model (e.g., in the case of a large language model), releasing of a model checkpoint, or other means that are appropriate to the research performed.
        \item While NeurIPS does not require releasing code, the conference does require all submissions to provide some reasonable avenue for reproducibility, which may depend on the nature of the contribution. For example
        \begin{enumerate}
            \item If the contribution is primarily a new algorithm, the paper should make it clear how to reproduce that algorithm.
            \item If the contribution is primarily a new model architecture, the paper should describe the architecture clearly and fully.
            \item If the contribution is a new model (e.g., a large language model), then there should either be a way to access this model for reproducing the results or a way to reproduce the model (e.g., with an open-source dataset or instructions for how to construct the dataset).
            \item We recognize that reproducibility may be tricky in some cases, in which case authors are welcome to describe the particular way they provide for reproducibility. In the case of closed-source models, it may be that access to the model is limited in some way (e.g., to registered users), but it should be possible for other researchers to have some path to reproducing or verifying the results.
        \end{enumerate}
    \end{itemize}

\item {\bf Open access to data and code}
    \item[] Question: Does the paper provide open access to the data and code, with sufficient instructions to faithfully reproduce the main experimental results, as described in supplemental material?
    \item[] Answer: \answerYes{} 
    \item[] Justification: Please refer to the codes and toy datasets in the supplemental materials.
    \item[] Guidelines:
    \begin{itemize}
        \item The answer NA means that paper does not include experiments requiring code.
        \item Please see the NeurIPS code and data submission guidelines (\url{https://nips.cc/public/guides/CodeSubmissionPolicy}) for more details.
        \item While we encourage the release of code and data, we understand that this might not be possible, so “No” is an acceptable answer. Papers cannot be rejected simply for not including code, unless this is central to the contribution (e.g., for a new open-source benchmark).
        \item The instructions should contain the exact command and environment needed to run to reproduce the results. See the NeurIPS code and data submission guidelines (\url{https://nips.cc/public/guides/CodeSubmissionPolicy}) for more details.
        \item The authors should provide instructions on data access and preparation, including how to access the raw data, preprocessed data, intermediate data, and generated data, etc.
        \item The authors should provide scripts to reproduce all experimental results for the new proposed method and baselines. If only a subset of experiments are reproducible, they should state which ones are omitted from the script and why.
        \item At submission time, to preserve anonymity, the authors should release anonymized versions (if applicable).
        \item Providing as much information as possible in supplemental material (appended to the paper) is recommended, but including URLs to data and code is permitted.
    \end{itemize}

\item {\bf Experimental Setting/Details}
    \item[] Question: Does the paper specify all the training and test details (e.g., data splits, hyperparameters, how they were chosen, type of optimizer, etc.) necessary to understand the results?
    \item[] Answer: \answerYes{} 
    \item[] Justification: Please refer to appendix \ref{appendix:Addtionalexp} and the codes and toy datasets in the supplementary materials.
    \item[] Guidelines:
    \begin{itemize}
        \item The answer NA means that the paper does not include experiments.
        \item The experimental setting should be presented in the core of the paper to a level of detail that is necessary to appreciate the results and make sense of them.
        \item The full details can be provided either with the code, in appendix, or as supplemental material.
    \end{itemize}

\item {\bf Experiment Statistical Significance}
    \item[] Question: Does the paper report error bars suitably and correctly defined or other appropriate information about the statistical significance of the experiments?
    \item[] Answer: \answerYes{} 
    \item[] Justification: Please refer to the right of figure \ref{fig:toy_four}, figure \ref{fig:lasso1} and figure \ref{fig:toy_diff_dim}.
    \item[] Guidelines:
    \begin{itemize}
        \item The answer NA means that the paper does not include experiments.
        \item The authors should answer "Yes" if the results are accompanied by error bars, confidence intervals, or statistical significance tests, at least for the experiments that support the main claims of the paper.
        \item The factors of variability that the error bars are capturing should be clearly stated (for example, train/test split, initialization, random drawing of some parameter, or overall run with given experimental conditions).
        \item The method for calculating the error bars should be explained (closed form formula, call to a library function, bootstrap, etc.)
        \item The assumptions made should be given (e.g., Normally distributed errors).
        \item It should be clear whether the error bar is the standard deviation or the standard error of the mean.
        \item It is OK to report 1-sigma error bars, but one should state it. The authors should preferably report a 2-sigma error bar than state that they have a 96\% CI, if the hypothesis of Normality of errors is not verified.
        \item For asymmetric distributions, the authors should be careful not to show in tables or figures symmetric error bars that would yield results that are out of range (e.g. negative error rates).
        \item If error bars are reported in tables or plots, The authors should explain in the text how they were calculated and reference the corresponding figures or tables in the text.
    \end{itemize}

\item {\bf Experiments Compute Resources}
    \item[] Question: For each experiment, does the paper provide sufficient information on the computer resources (type of compute workers, memory, time of execution) needed to reproduce the experiments?
    \item[] Answer: \answerYes{} 
    \item[] Justification: Please refer to appendix \ref{appendix:Addtionalexp} and figure \ref{fig:time-curve}.
    \item[] Guidelines:
    \begin{itemize}
        \item The answer NA means that the paper does not include experiments.
        \item The paper should indicate the type of compute workers CPU or GPU, internal cluster, or cloud provider, including relevant memory and storage.
        \item The paper should provide the amount of compute required for each of the individual experimental runs as well as estimate the total compute. 
        \item The paper should disclose whether the full research project required more compute than the experiments reported in the paper (e.g., preliminary or failed experiments that didn't make it into the paper). 
    \end{itemize}
    
\item {\bf Code Of Ethics}
    \item[] Question: Does the research conducted in the paper conform, in every respect, with the NeurIPS Code of Ethics \url{https://neurips.cc/public/EthicsGuidelines}?
    \item[] Answer: \answerYes{} 
    \item[] Justification: We follow the NeurIPS Code of Ethics.
    \item[] Guidelines:
    \begin{itemize}
        \item The answer NA means that the authors have not reviewed the NeurIPS Code of Ethics.
        \item If the authors answer No, they should explain the special circumstances that require a deviation from the Code of Ethics.
        \item The authors should make sure to preserve anonymity (e.g., if there is a special consideration due to laws or regulations in their jurisdiction).
    \end{itemize}

\item {\bf Broader Impacts}
    \item[] Question: Does the paper discuss both potential positive societal impacts and negative societal impacts of the work performed?
    \item[] Answer: \answerYes{} 
    \item[] Justification: Please refer to appendix \ref{appendix:broader}. As far as we are concerned, our paper has no potential negative societal impacts. We do not engage in negative fairness, privacy or security issues.
    \item[] Guidelines:
    \begin{itemize}
        \item The answer NA means that there is no societal impact of the work performed.
        \item If the authors answer NA or No, they should explain why their work has no societal impact or why the paper does not address societal impact.
        \item Examples of negative societal impacts include potential malicious or unintended uses (e.g., disinformation, generating fake profiles, surveillance), fairness considerations (e.g., deployment of technologies that could make decisions that unfairly impact specific groups), privacy considerations, and security considerations.
        \item The conference expects that many papers will be foundational research and not tied to particular applications, let alone deployments. However, if there is a direct path to any negative applications, the authors should point it out. For example, it is legitimate to point out that an improvement in the quality of generative models could be used to generate deepfakes for disinformation. On the other hand, it is not needed to point out that a generic algorithm for optimizing neural networks could enable people to train models that generate Deepfakes faster.
        \item The authors should consider possible harms that could arise when the technology is being used as intended and functioning correctly, harms that could arise when the technology is being used as intended but gives incorrect results, and harms following from (intentional or unintentional) misuse of the technology.
        \item If there are negative societal impacts, the authors could also discuss possible mitigation strategies (e.g., gated release of models, providing defenses in addition to attacks, mechanisms for monitoring misuse, mechanisms to monitor how a system learns from feedback over time, improving the efficiency and accessibility of ML).
    \end{itemize}
    
\item {\bf Safeguards}
    \item[] Question: Does the paper describe safeguards that have been put in place for responsible release of data or models that have a high risk for misuse (e.g., pretrained language models, image generators, or scraped datasets)?
    \item[] Answer: \answerNA{} 
    \item[] Justification: Our paper poses no such risks for misuse.
    \item[] Guidelines:
    \begin{itemize}
        \item The answer NA means that the paper poses no such risks.
        \item Released models that have a high risk for misuse or dual-use should be released with necessary safeguards to allow for controlled use of the model, for example by requiring that users adhere to usage guidelines or restrictions to access the model or implementing safety filters. 
        \item Datasets that have been scraped from the Internet could pose safety risks. The authors should describe how they avoided releasing unsafe images.
        \item We recognize that providing effective safeguards is challenging, and many papers do not require this, but we encourage authors to take this into account and make a best faith effort.
    \end{itemize}

\item {\bf Licenses for existing assets}
    \item[] Question: Are the creators or original owners of assets (e.g., code, data, models), used in the paper, properly credited and are the license and terms of use explicitly mentioned and properly respected?
    \item[] Answer: \answerYes{} 
    \item[] Justification: We cite the original papers in the experiments. Please refer to Section \ref{main-exps}.
    \item[] Guidelines:
    \begin{itemize}
        \item The answer NA means that the paper does not use existing assets.
        \item The authors should cite the original paper that produced the code package or dataset.
        \item The authors should state which version of the asset is used and, if possible, include a URL.
        \item The name of the license (e.g., CC-BY 4.0) should be included for each asset.
        \item For scraped data from a particular source (e.g., website), the copyright and terms of service of that source should be provided.
        \item If assets are released, the license, copyright information, and terms of use in the package should be provided. For popular datasets, \url{paperswithcode.com/datasets} has curated licenses for some datasets. Their licensing guide can help determine the license of a dataset.
        \item For existing datasets that are re-packaged, both the original license and the license of the derived asset (if it has changed) should be provided.
        \item If this information is not available online, the authors are encouraged to reach out to the asset's creators.
    \end{itemize}

\item {\bf New Assets}
    \item[] Question: Are new assets introduced in the paper well documented and is the documentation provided alongside the assets?
    \item[] Answer: \answerYes{} 
    \item[] Justification: Please refer to the supplemental materials. 
    \item[] Guidelines:
    \begin{itemize}
        \item The answer NA means that the paper does not release new assets.
        \item Researchers should communicate the details of the dataset/code/model as part of their submissions via structured templates. This includes details about training, license, limitations, etc. 
        \item The paper should discuss whether and how consent was obtained from people whose asset is used.
        \item At submission time, remember to anonymize your assets (if applicable). You can either create an anonymized URL or include an anonymized zip file.
    \end{itemize}

\item {\bf Crowdsourcing and Research with Human Subjects}
    \item[] Question: For crowdsourcing experiments and research with human subjects, does the paper include the full text of instructions given to participants and screenshots, if applicable, as well as details about compensation (if any)? 
    \item[] Answer: \answerNA{} 
    \item[] Justification: Our paper does not involve crowdsourcing nor research with human subjects.
    \item[] Guidelines:
    \begin{itemize}
        \item The answer NA means that the paper does not involve crowdsourcing nor research with human subjects.
        \item Including this information in the supplemental material is fine, but if the main contribution of the paper involves human subjects, then as much detail as possible should be included in the main paper. 
        \item According to the NeurIPS Code of Ethics, workers involved in data collection, curation, or other labor should be paid at least the minimum wage in the country of the data collector. 
    \end{itemize}

\item {\bf Institutional Review Board (IRB) Approvals or Equivalent for Research with Human Subjects}
    \item[] Question: Does the paper describe potential risks incurred by study participants, whether such risks were disclosed to the subjects, and whether Institutional Review Board (IRB) approvals (or an equivalent approval/review based on the requirements of your country or institution) were obtained?
    \item[] Answer: \answerNA{} 
    \item[] Justification: Our paper does not involve crowdsourcing nor research with human subjects.
    \item[] Guidelines:
    \begin{itemize}
        \item The answer NA means that the paper does not involve crowdsourcing nor research with human subjects.
        \item Depending on the country in which research is conducted, IRB approval (or equivalent) may be required for any human subjects research. If you obtained IRB approval, you should clearly state this in the paper. 
        \item We recognize that the procedures for this may vary significantly between institutions and locations, and we expect authors to adhere to the NeurIPS Code of Ethics and the guidelines for their institution. 
        \item For initial submissions, do not include any information that would break anonymity (if applicable), such as the institution conducting the review.
    \end{itemize}

\end{enumerate}

\end{document}